\let\vec\mathbf
\newcommand{\data}{{\vec{X}}}
\newcommand{\weight}{\vec{w}}
\newcommand{\weightmat}{\vec{W}}
\newcommand{\labelmat}{\vec{Y}}
\newcommand{\dualmat}{\vec{V}}
\newcommand{\vecop}{\mathbf{vec}}
\theoremstyle{plain}
\newtheorem{theorem}{Theorem}[section]
\newtheorem{lemma}[theorem]{Lemma}
\newtheorem{corollary}[theorem]{Corollary}
\theoremstyle{definition}
\theoremstyle{remark}
\icmltitlerunning{Unraveling Attention via Convex Duality}
\begin{document}
\doparttoc 
\faketableofcontents 

\twocolumn[
\icmltitle{Unraveling Attention via Convex Duality: \\ 
            Analysis and Interpretations of Vision Transformers}




\begin{icmlauthorlist}
\icmlauthor{Arda Sahiner}{stanford}
\icmlauthor{Tolga Ergen}{stanford}
\icmlauthor{Batu Ozturkler}{stanford}
\icmlauthor{John Pauly}{stanford}
\icmlauthor{Morteza Mardani}{nvidia}
\icmlauthor{Mert Pilanci}{stanford}
\end{icmlauthorlist}

\icmlaffiliation{stanford}{Department of Electrical Engineering, Stanford University, Stanford, CA, USA}
\icmlaffiliation{nvidia}{NVIDIA Corporation, Santa Clara, CA, USA}

\icmlcorrespondingauthor{Arda Sahiner}{sahiner@stanford.edu}

\icmlkeywords{Machine Learning, ICML}

\vskip 0.3in
]



\printAffiliationsAndNotice{} 

\begin{abstract}
Vision transformers using self-attention or its proposed alternatives have demonstrated promising results in many image related tasks. However, the underpinning inductive bias of attention is not well understood. To address this issue, this paper analyzes attention through the lens of convex duality. For the non-linear dot-product self-attention, and alternative mechanisms such as MLP-mixer and Fourier Neural Operator (FNO), we derive equivalent finite-dimensional convex problems that are interpretable and solvable to global optimality. The convex programs lead to {\it block nuclear-norm regularization} that promotes low rank in the latent feature and token dimensions. In particular, we show how self-attention networks implicitly clusters the tokens, based on their latent similarity. We conduct experiments for transferring a pre-trained transformer backbone for CIFAR-100 classification by fine-tuning a variety of convex attention heads. The results indicate the merits of the bias induced by attention compared with the existing MLP or linear heads. 
%
%
\end{abstract}

\section{Introduction}
Transformers have recently delivered tremendous success for representation learning in language and vision. This is primarily due to the attention mechanism that effectively mixes the tokens’ representation over the layers to learn the semantics present in the input\footnote{We use the terms ``attention", ``token mixing", and ``mixing" interchangeably throughout this paper.}. After the inception of dot-product self-attention \cite{vaswani2017attention}, there have been several efficient alternatives that scale nicely with the sequence size for large pretraining tasks; see e.g., \cite{wang2020linformer,shen2021efficient,kitaev2020reformer, panahi2021shapeshifter, xiong2021nystr}. However, the learnable inductive bias of attention is not explored well. A strong theoretical understanding of attention's inductive bias can motivate designing more efficient architectures, and can explain the generalization ability of these networks.

Self-attention was the fundamental building block in the first proposed vision transformer (ViT) \cite{dosovitskiy2020image}. It consists of an outer product of two linear functions, followed by a non-linearity and a product with another linear function, which makes it non-convex and non-interpretable. One approach to understand attention has been to design new alternatives to self-attention which perform similarly well, which may help explain its underlying mechanisms. One set of work pertains to Multi-Layer Perceptron (MLP) based architectures, \cite{tolstikhin2021mlp, tatsunami2021raftmlp, touvron2021resmlp, liu2021pay, yu2021metaformer}, while  another line of work proposes Fourier based models \cite{lee2021fnet, rao2021global, li2020fourier, guibas2021adaptive}. Others have proposed replacing self-attention with matrix decomposition \cite{geng2021attention}. While all of these works have appealing applications that leverage general concepts about the structure of attention, they lack any fine-tuned theoretical analysis on these architectures from an optimization perspective.

To address this shortcoming, we leverage convex duality to analyze a single block of self-attention with ReLU activation. Since self-attention incurs quadratic complexity in the sequence, we alternatively analyze more efficient modules. As representatives for more efficient modules, we focus on MLP Mixers \cite{tolstikhin2021mlp} and Fourier Neural Operators (FNO) \cite{li2020fourier}. MLP-mixer mixes tokens (purely) using MLP projections in both the token and feature dimensions. In contrast, FNO mixes tokens using circular convolution that is efficiently implemented based on 2D Fourier transforms.  
\begin{figure*}[t!]
  \centering
  \begin{center}
      \includegraphics[width = \linewidth]{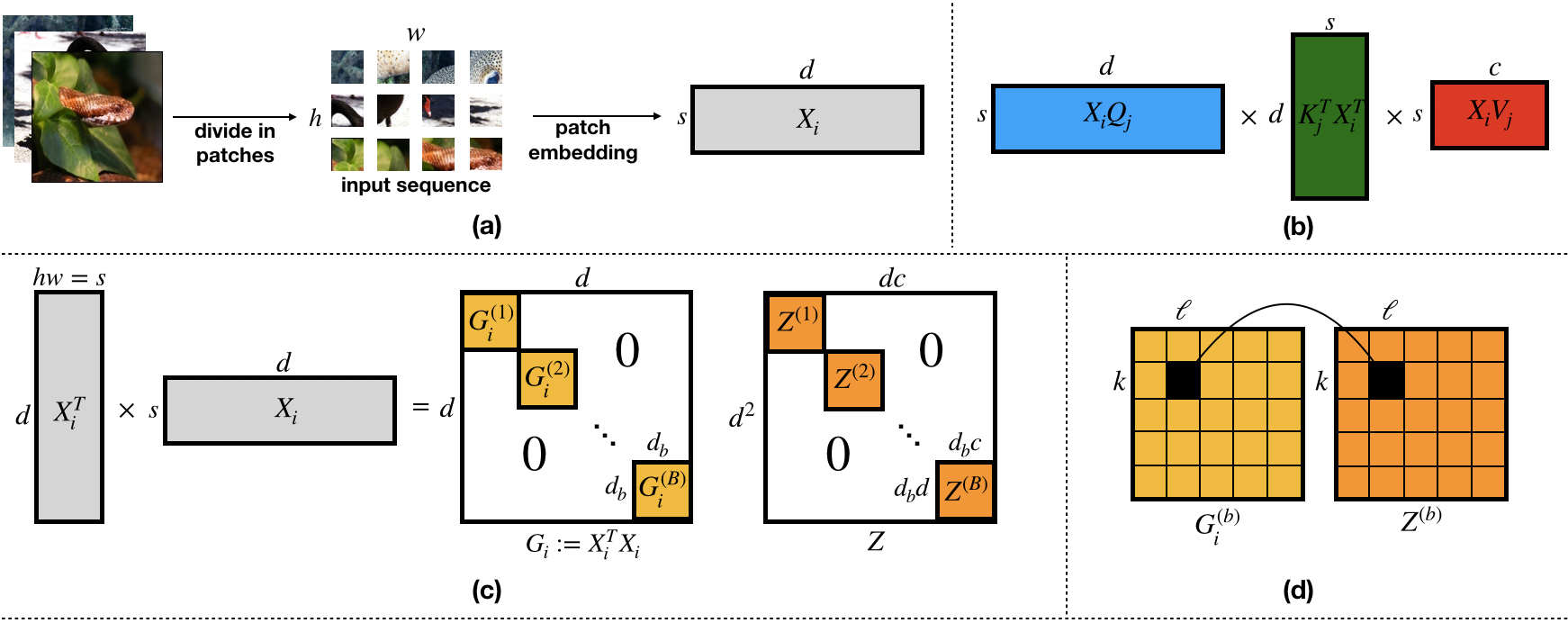}
  \end{center}
  \caption{\small Illustration of the implicit convexity of (linear activation) multi-head self-attention. (a) Input image is first divided into $hw = s$ patches, where each patch is represented by a latent vector of dimension $d$. (b) The (non-convex) scaled dot-product  self-attention applies learnable weights $\vec{Q}_j$, $\vec{K}_j$, $\vec{V}_j$ to the patch embeddings $X_i$ as in \eqref{eq:mhsa_original}. (c) In the equivalent convex optimization problem for the self-attention training objective \eqref{eq:mhsa_linear_convex_block_diag}, the Gram matrix $G_i$ is formed that groups latent features in $B$ different blocks, (d) and accordingly the nuclear norm regularization is imposed on the dual variables $\vec{Z}^{(b,k,l)} \in \mathbb{R}^{d \times c}$ based on the similarity scores $\vec{G}_i^{(b)}[k,l]$.}
  \label{fig:cvx-sa}
\end{figure*}

We find that all three of these analyzed modules are equivalent to finite-dimensional convex optimization problems, indicating that there are guarantees to provably optimize them to their global optima. Furthermore, we make novel observations about the bias induced by the convex models. In particular, convexified equivalents of both self-attention and MLP-Mixer modules resemble weighted combinations of MLPs, but with additional degrees of freedom (e.g. higher dimensional induced parameters), and a unique block nuclear norm regularization which ties their individual sub-modules together to encourage global cohesiveness. In contrast, the convexified FNO mixer amounts to circular convolution, while slight modifications to the FNO architecture can induce an equivalent group-wise convolution. We experimentally test and compare these convex attention heads for transfer learning of a pre-trained vision transformer to CIFAR-100 classification upon finetuning a single convex head. We observe that the inductive bias of these attention modules outperforms traditional convex models.

Our main contributions are summarized as follows:
\begin{itemize}
\item We provide guarantees that self-attention, MLP-Mixer, and FNO with linear and ReLU activation can be solved to their global optima by demonstrating their equivalence to convex optimization problems. 
\item By analyzing these equivalent convex programs, we provide interpretability to the optimization objectives of these attention modules.
\item Our experiments validate the (convex) vision transformers perform better than baseline convex methods in a transfer learning task. 
\end{itemize}


\subsection{Related Work}
This work is primarily related to two lines of research.

\noindent\textbf{Interpreting attention}. One approach has been to experimentally observe the properties of attention networks to understand them. For example, DINO proposes a contrastive self-supervised learning method for ViTs \cite{caron2021emerging}. It is observed that learned attention maps preserve semantic regions of the images. Another work compares the alignment across layers of trained ViTs and CNNs, concluding that ViTs have more a uniform representation structure across layers of the network \cite{raghu2021vision}. Another work uses a Deep Taylor Decomposition approach to visualize portions of input image leading to a particular ViT prediction \cite{chefer2021transformer}. 

Another approach is to analyze the expressivity of attention networks. One work interprets multi-head self-attention as a Bayesian inference, and provides tools to decide how many heads to use, and how to enforce distinct representations in different heads \cite{an2020repulsive}. Other analysis has demonstrated that sparse transformers can universally approximate any function \cite{yun2020n}, that multi-head self-attention is at least as expressive as a convolutional layer \cite{cordonnier2019relationship}, and that dot-product self-attention is not Lipschitz continuous \cite{kim2021lipschitz}.

\noindent\textbf{Convex neural networks}. Starting with \cite{pilanci2020neural}, there has been a long line of work demonstrating that various ReLU-activation neural network architectures have equivalent convex optimization problems. These include two-layer convolutional and vector-output networks \cite{ergen2020implicit, sahiner2020vector, sahiner2020convex}, as well as deeper ones \cite{ergen2021global}, networks with Batch Normalization \cite{ergen2021demystifying}, and Wasserstein GANs \cite{sahiner2021hidden}. Recent work has also demonstrated how to efficiently optimize the simplest forms of these equivalent convex networks, and incorporate additional constraints for adversarial robustness \cite{mishkin2022fast, bai2022efficient}. However, none of these works have analyzed the building blocks of transformers, which are a leading method in many state-of-the-art vision and language processing tasks. 

\section{Preliminaries}
In general, we analyze supervised learning problems where input training data \(\{\vec{X}_i \in \mathbb{R}^{s \times d}\}_{i=1}^n\) are the result of a patch embedding layer, and we have corresponding labels of arbitrary size \(\{\vec{Y}_i \in \mathbb{R}^{r \times c}\}_{i=1}^n\). For arbitrary convex loss function \(\mathcal{L}(\cdot, \cdot)\), we solve the optimization problem 
\begin{equation}
    p^* := \min_{\theta} \sum_{i=1}^n \mathcal{L}\left(f_{\theta}(\vec{X}_i), \vec{Y}_i \right) + \mathcal{R}(\theta)
\end{equation}

for some learnable parameters \(\theta\), neural network \(f_{\theta}(\cdot)\), and regularizer \(\mathcal{R}(\cdot)\). Note that this formulation encapsulates both denoising and classification scenarios: in the classification setting of \(r = 1\), one can absorb global average pooling into the convex loss \(\mathcal{L}\), whereas if \(r = s\), one can directly use squared loss or other convex loss functions. One may also use this formulation to apply to both supervised  and self-supervised learning.

In this paper, we denote \((\cdot)_+ := \max \{0, \cdot\}\) as the ReLU non-linearity. We use superscripts, say \(\vec{A}^{(i_i, i_2)}\), to denote blocks of matrices, and brackets, say \(\vec{A}[i_1, i_2]\), to denote elements of matrices, where the arguments refer to row (or block of rows) \(i_1\) and column (or block of columns) \(i_2\). 

\subsection{Implicit Convexity of Linear and ReLU MLPs}\label{sec:cvx_background}
Previously, it has been demonstrated that standard two-layer ReLU MLPs are equivalent to convex optimization problems. We briefly describe the relevant background to provide context for much of the analysis in this paper. In particular, we are presented with a network with \(m\) neurons in the hidden layer, weight-decay parameter \(\beta > 0\), and data \(\vec{X} \in \mathbb{R}^{n \times d}\), \(\vec{Y} \in \mathbb{R}^{n \times c}\):
\begin{align}
    p_{RMLP}^* := \min_{\vec{w}_{2j}, \vec{w}_{2j}} &\mathcal{L}\left(\sum_{j=1}^m (\vec{X}\vec{w}_{1j})_+\vec{w}_{2j}^\top, \vec{Y} \right) \nonumber \\
    &+ \frac{\beta}{2} \sum_{j=1}^m \left(\|\vec{w}_{1j}\|_2^2 + \|\vec{w}_{2j}\|_2^2\right). 
\end{align}

While this problem is non-convex as stated, it has been demonstrated that the objective is equivalent to the solution of an equivalent convex optimization problem, and there is a one-to-one mapping between the two problems' solutions \cite{sahiner2020vector}. In particular, this analysis makes use of \emph{hyperplane arrangements}, which enumerate all possible activation patterns of the ReLU non-linearity:
\begin{equation}
    \mathcal{D} := \{\mathrm{diag}\left(\mathbbm{1}\{\vec{X}\vec{u} \geq 0\}\right):\,\vec{u} \in \mathbb{R}^d\}
\end{equation}
The set \(\mathcal{D}\) is clearly finite, and its cardinality is bounded as \(P := |\mathcal{D}| \leq 2r\left(\frac{e(n-1)}{r}\right)^r\), where \(r := \mathrm{rank}(\data)\) \cite{stanley2004introduction, pilanci2020neural}. Through convex duality analysis, we can express an equivalent convex optimization problem by enumerating over the finite set of arrangements \(\{\vec{D}_j\}_{j=1}^P\). We define the following norm
\begin{align}\label{const_nuc_norm}
    &\|\vec{Z}\|_{*, \mathrm{K}} := \min_{t \geq 0} t \; \text{s.t.}\, \vec{Z} \in t\mathcal{C}\\
    &\mathcal{C} := \mathrm{conv}\{\vec{Z} = \vec{u}\vec{v}^\top: \; \vec{K}\vec{u} \geq 0, \, \|\vec{Z}\|_* \leq 1, \vec{v} \in \mathbb{R}^c\}. \nonumber
\end{align}
This norm is a quasi-nuclear norm, which differs from the standard nuclear norm in that the factorization upon which it relies puts a constraint on its left factors, which in our case will be an affine constraint. In convex ReLU neural networks, \(\mathrm{K}\) is chosen to enforce the existence of \(\{\vec{u}_k, \vec{v}_k\}\) such that \(\vec{Z} = \sum_k \vec{u}_k \vec{v}_k^\top\) and \(\vec{D}_{ j}\vec{X}\vec{Z} = \sum_k (\vec{X}\vec{u}_k)_+\vec{v}_k^\top\), and penalizes \(\sum_k \|\vec{u}_k\|_2 \|\vec{v}_k\|_2\) \footnote{We illustrate an example of \(\|\vec{Z}\|_{*, \mathrm{K}}\) in the Appendix \ref{sec:constrained_nuc}.}.\\\\
With this established, it can be shown that 
\begin{align}\label{eq:relu_mlp_convex}
    p_{RMLP}^* &= \min_{\{\vec{Z}_j\}_{j=1}^P} \mathcal{L}(\sum_{j=1}^P \vec{D}_j \vec{X}\vec{Z}_j, \vec{Y}) + \beta \sum_{j=1}^P \|\vec{Z}_j\|_{*, \mathrm{K}_j} \nonumber \\
    \vec{K}_j &:= (2\vec{D}_j - \vec{I})\data.
\end{align}
The two-layer ReLU MLP optimization problem is thus expressed as a piece-wise linear model with a constrained nuclear norm regularization. This contrasts with the two-layer linear activation MLP, whose convex equivalent is
\begin{equation}\label{eq:linear_mlp_convex}
    p^*_{LMLP} = \min_{\vec{Z}}\mathcal{L}\left(\data\vec{Z}, \vec{Y}\right) + \beta \|\vec{Z}\|_*. 
\end{equation}
This nuclear norm penalty is known to encourage low-rank solutions \cite{candes2010power, recht2010guaranteed} and appears in matrix factorization problems \cite{gunasekar2017implicit}. One can also define the \emph{gated ReLU} activation, where the ReLU gates are fixed to some \(\{\vec{h}_j\}_{j=1}^m\) \citep{fiat2019decoupling}
\begin{equation}
    g(\data \weight_{1j}) := \mathrm{diag}\left(\mathbbm{1}\{\vec{X}\vec{h}_j \geq 0\}\right) (\data \weight_{1j}).
\end{equation}
Then, defining \(\{\vec{D}_j\}_{j=1}^m := \{\mathrm{diag}\left(\mathbbm{1}\{\vec{X}\vec{h}_j \geq 0\} \right)\}_{j=1}^m\), the corresponding convex gated ReLU activation two-layer network objective follows directly from the ReLU and linear cases, and is given by 
\begin{equation}
    p_{GMLP}^* = \min_{\{\vec{Z}_j\}_{j=1}^m} \mathcal{L}\left(\sum_{j=1}^m \vec{D}_j \vec{X}\vec{Z}_j, \vec{Y} \right) + \beta \sum_{j=1}^m \|\vec{Z}_j\|_{*}.
\end{equation}
We note that for both linear and gated ReLU formulations, the regularization on the convex weights becomes the standard nuclear norm, since ReLU constraints no longer have to be enforced. It has been demonstrated that there is a small approximation gap between Gated ReLU and ReLU networks, and ReLU networks can be formed from the solutions to Gated ReLU problems \cite{mishkin2022fast}. \\\\
In terms of efficient algorithms to solve these problems, using an accelerated proximal gradient descent algorithm applied to the convex linear and gated ReLU programs, one can achieve an \(\epsilon\)-optimal solution in \(\mathcal{O}(1/\sqrt{\epsilon})\) iterations \cite{toh2010accelerated}. For the convex ReLU formulation,\cite{sahiner2020vector} proposes a Frank-Wolfe algorithm for convex MLPs which can be adapted to this case, which in the general case requires \(\mathcal{O}(1/\epsilon)\) iterations for \(\epsilon\)-optimality \cite{jaggi2013revisiting}.\\\\
In the subsequent sections, we will demonstrate how common vision transformer blocks with linear and ReLU activations can be related to equivalent convex optimization problems through similar convex duality techniques\footnote{Gated ReLU analysis is also provided in Appendix \ref{sec:grelu}}.

\section{Implicit Convexity of Self-Attention}
The canonical Vision Transformer (ViT) uses self-attention and MLPs as its backbone \cite{dosovitskiy2020image}. In particular, a single ``head" of a self-attention network is given by the following:
\begin{equation}
    f_j(\vec{X}_i) := \sigma\left(\frac{\vec{X}_i \vec{Q}_j \vec{K}_j^\top \vec{X}_i^\top}{\sqrt{d}}\right) \vec{X}_i \vec{V}_j,
\end{equation}
where \(\vec{Q}, \vec{K}, \vec{V}\) are all learnable parameters and \(\sigma(\cdot)\) typically (but not always) represents the softmax non-linearity. In practice, one typically uses \(m\) ``heads" of attention which are concatenated along the feature dimension, and this is followed by a ``channel-mixing" layer, or alternatively a classification head:
\begin{align}\label{eq:mhsa_original}
    f_{MHSA}(\vec{X}_i) &:= \begin{bmatrix} f_1(\vec{X}_i) & \cdots & f_m(\vec{X}_i) \end{bmatrix} \vec{W} \nonumber \\
    &= \sum_{j=1}^m \sigma\left(\frac{\vec{X}_i \vec{Q}_j \vec{K}_j^\top \vec{X}_i^\top}{\sqrt{d}}\right) \vec{X}_i \vec{V}_j\vec{W}_j
\end{align}
For the purpose of our analysis, noting that both \(\vec{Q}_j \vec{K}_j^\top\) and \(\vec{V}_j\vec{W}_j\) can be expressed by a single linear layer, we model the multi-head self-attention network as
\begin{align}
    f_{SA}(\vec{X}_i) := \sum_{j=1}^m \sigma\left(\frac{\vec{X}_i \weightmat_{1j} \vec{X}_i^\top}{\sqrt{d}}\right) \vec{X}_i \weightmat_{2j}. 
\end{align}
We then define the multi-head self-attention training problem as follows 
\begin{align}\label{eq:mhsa_opt_primal}
    p_{SA}^* := \min_{\weightmat_{1j}, \weightmat_{2j}} &\sum_{i=1}^n \mathcal{L}(f_{SA}(\vec{X}_i), \vec{Y}_i) \nonumber \\
    &+ \frac{\beta}{2}\sum_{j=1}^m \|\weightmat_{1j}\|_F^2 + \|\weightmat_{2j}\|_F^2. 
\end{align}
We thus employ any generic convex loss function and standard weight-decay in our formulation. While direct convex analysis when \(\sigma(\cdot)\) represents the softmax activation is intractable, we can analyze this architecture for many other activation functions. In particular, self-attention with both linear and ReLU activation functions has been proposed with performance on par to standard softmax activation networks \cite{shen2021efficient, yorsh2021simpletron, yorsh2021pureformer, zhang2021sparse}. Thus, we will analyze linear, ReLU, and gated ReLU activation variants of the multi-head self-attention. 

\begin{theorem}\label{theo:lin_attention}
For the linear activation multi-head self-attention training problem \eqref{eq:mhsa_opt_primal}, for \(m \geq m^*\) where \(m^* \leq \min\{d^2, dc\}\), the standard non-convex training objective is equivalent to a convex optimization problem, given by 
\begin{align}\label{eq:mhsa_linear_convex}
    p_{SA}^* = \min_{\vec{Z} \in \mathbb{R}^{d^2 \times dc} }  &\sum_{i=1}^n \mathcal{L}\left(\sum_{k=1}^d \sum_{\ell=1}^d \vec{G}_i[k, \ell] \data_i \vec{Z}^{(k, \ell)}, \labelmat_i\right) \nonumber\\&+\beta  \|\vec{Z}\|_*
\end{align}
where \(\vec{G}_i := \vec{X}_i^\top \vec{X}_i\) and \(\vec{Z}^{(k, \ell)} \in \mathbb{R}^{d \times c}\).
\end{theorem}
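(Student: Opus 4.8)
The plan is to establish the equivalence by a direct reparametrization (``lifting'') argument rather than Lagrangian duality: because the activation is linear, no hyperplane arrangements are needed, and the whole statement reduces to recognizing a rank-one structure and invoking the variational (factorized) form of the nuclear norm, exactly as in the linear MLP case \eqref{eq:linear_mlp_convex} but lifted to a higher-dimensional space. First I would simplify the network. With linear $\sigma$, each head is $\frac{1}{\sqrt d}\data_i \weightmat_{1j}\data_i^\top \data_i \weightmat_{2j} = \frac{1}{\sqrt d}\data_i \weightmat_{1j}\vec{G}_i \weightmat_{2j}$ using $\vec{G}_i = \data_i^\top\data_i$, so $f_{SA}(\data_i) = \frac{1}{\sqrt d}\sum_j \data_i \weightmat_{1j}\vec{G}_i\weightmat_{2j}$; the constant $\frac{1}{\sqrt d}$ I absorb into a benign rescaling of the lifted variable (equivalently of $\beta$).

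The crux is an index computation. Writing the $(p,q)$ entry of a single head and collecting the scalar factors $\vec{G}_i[k,\ell]$, I would show
\[
\data_i\weightmat_{1j}\vec{G}_i\weightmat_{2j} = \sum_{k=1}^d\sum_{\ell=1}^d \vec{G}_i[k,\ell]\,\data_i\,\vec{Z}_j^{(k,\ell)}, \qquad \vec{Z}_j^{(k,\ell)} = \weightmat_{1j}[:,k]\,\weightmat_{2j}[\ell,:].
\]
Stacking the blocks $\vec{Z}_j^{(k,\ell)}$ into $\vec{Z}_j\in\mathbb{R}^{d^2\times dc}$ reveals that $\vec{Z}_j = \vecop(\weightmat_{1j})\,\vecop(\weightmat_{2j}^\top)^\top$ is \emph{rank one}, with $\|\vecop(\weightmat_{1j})\|_2 = \|\weightmat_{1j}\|_F$ and $\|\vecop(\weightmat_{2j}^\top)\|_2 = \|\weightmat_{2j}\|_F$. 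Summing over heads gives $f_{SA}(\data_i) = \sum_{k,\ell}\vec{G}_i[k,\ell]\data_i\vec{Z}^{(k,\ell)}$ with $\vec{Z} = \sum_{j=1}^m \vec{Z}_j$, so the loss term matches that of \eqref{eq:mhsa_linear_convex} exactly. Since reshaping is a bijection, a single head can realize \emph{any} rank-one matrix in $\mathbb{R}^{d^2\times dc}$.

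It then remains to match the regularizers via two inequalities. For the direction $p^{\mathrm{cvx}}\le p_{SA}^*$ (the convex value lower-bounding the non-convex one), I would take any weights, form $\vec{Z}=\sum_j \vecop(\weightmat_{1j})\vecop(\weightmat_{2j}^\top)^\top$, and bound $\|\vec{Z}\|_* \le \sum_j \|\vec{Z}_j\|_* = \sum_j \|\weightmat_{1j}\|_F\|\weightmat_{2j}\|_F \le \tfrac12\sum_j(\|\weightmat_{1j}\|_F^2+\|\weightmat_{2j}\|_F^2)$ using the triangle inequality for $\|\cdot\|_*$ and AM--GM. For the reverse direction I would take an optimal $\vec{Z}^*$, compute its SVD $\vec{Z}^* = \sum_{j=1}^{m^*}\sigma_j \tilde{\vec u}_j\tilde{\vec v}_j^\top$ with $m^*=\mathrm{rank}(\vec{Z}^*)\le\min\{d^2,dc\}$, set the balanced factors $\vec u_j=\sqrt{\sigma_j}\tilde{\vec u}_j$, $\vec v_j=\sqrt{\sigma_j}\tilde{\vec v}_j$, and reshape them back into $\weightmat_{1j},\weightmat_{2j}$; balancedness makes AM--GM tight, so $\tfrac12\sum_j(\|\weightmat_{1j}\|_F^2+\|\weightmat_{2j}\|_F^2)=\sum_j\sigma_j=\|\vec{Z}^*\|_*$, giving a feasible non-convex point (whenever $m\ge m^*$) of equal objective. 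The two inequalities yield equality.

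The main obstacle --- and the conceptual heart --- is the second paragraph: correctly identifying that the bilinear pair $(\weightmat_{1j},\weightmat_{2j})$ lifts to an \emph{unconstrained} rank-one matrix over the enlarged $d^2\times dc$ space, so that the per-head weight-decay is precisely the factorized nuclear-norm penalty. Once this is in place, the bound $m^*\le\min\{d^2,dc\}$ follows from the rank of $\vec{Z}\in\mathbb{R}^{d^2\times dc}$, and the only bookkeeping left is the index-shuffling in the vectorizations and tracking the $1/\sqrt d$ constant.
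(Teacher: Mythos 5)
Your proposal is correct, but it takes a genuinely different route from the paper's primary derivation. The paper follows its general duality recipe: rescale the weights (Lemma \ref{lem:rescaling}), pass to the Fenchel dual over the second-layer weights (Lemma \ref{lem:duality}), solve the resulting constraint via the $\vecop$ identity and commutation matrices, and then form and solve the Lagrangian with Sion's minimax theorem, invoking the semi-infinite programming result of Shapiro to justify the exchange of min and max for $m \geq m^*$. You instead give a direct lifting argument: expand $\data_i\weightmat_{1j}\vec{G}_i\weightmat_{2j}$ over entries of $\vec{G}_i$, observe that the stacked blocks form the rank-one matrix $\vecop(\weightmat_{1j})\vecop(\weightmat_{2j}^\top)^\top$ with $\|\vec{Z}_j\|_* = \|\weightmat_{1j}\|_F\|\weightmat_{2j}\|_F$, and then sandwich the two optimal values using the triangle inequality plus AM--GM in one direction and a balanced SVD factorization of $\vec{Z}^*$ in the other. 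This is essentially the content of the paper's concluding ``one-to-one mapping'' verification, but you promote it to a complete, self-contained proof of both inequalities, which is more elementary: it avoids Fenchel conjugates, minimax exchanges, and commutation-matrix bookkeeping entirely, and it derives $m^* \leq \min\{d^2, dc\}$ transparently as $\mathrm{rank}(\vec{Z}^*)$. What the paper's duality machinery buys is uniformity: the same pipeline extends to the ReLU case, where the hyperplane arrangements and the constrained nuclear norm $\|\cdot\|_{*,\mathrm{K}}$ make a direct unconstrained-lifting argument unavailable. The only technical point you should make explicit is the existence of a minimizer $\vec{Z}^*$ of the convex program (guaranteed for $\beta>0$ by coercivity of the nuclear norm together with lower semicontinuity of $\mathcal{L}$, or handled by an infimizing-sequence argument); with that noted, your proof is complete.
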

Our result demonstrates that a linear activation self-attention model consists of a Gram (feature correlation) matrix weighted linear model, with a nuclear norm penalty which groups the individual models to each other\footnote{Furthermore, there is a one-to-one mapping between the solutions of the convex and non-convex programs, which we describe in Appendix \ref{sec:proofs}.}.\\\\
One may also view the convex model as a set of linear models with a weighted nuclear norm, where each block \(\vec{Z}^{(k, \ell)}\) has a corresponding weight of \(1/\vec{G}_i[k, \ell]\). Thus, features with high correlation will have corresponding linear weights with larger norm. We note that when \(\beta = 0\), the linear self-attention model \eqref{eq:mhsa_linear_convex} is equivalent to the linear two-layer MLP \eqref{eq:linear_mlp_convex}.\\\\
While typically the nuclear norm penalty on \(\vec{Z}\) has no corresponding norm on each individual linear model \(\vec{Z}^{(k, \ell)}\), the following result summarizes an instance where the nuclear norm could decompose into smaller blocks.

\begin{corollary}\label{cor:lin_attention}
Assume some of the features of \(\vec{X}_i\) are entirely uncorrelated for all \(i\), i.e. \(\vec{G}_i\) is block diagonal with blocks \(\{\vec{G}_i^{(b)} \in \mathbb{R}^{d_b \times d_b}\}_{b=1}^B\) for all \(i\). Then, the convex program \eqref{eq:mhsa_linear_convex} reduces to the following convex program
\begin{align}\label{eq:mhsa_linear_convex_block_diag}
    p_{SA}^* = \min_{\vec{Z}^{(b)}}  &\sum_{i=1}^n \mathcal{L}\left(\sum_{b=1}^B \sum_{k=1}^{d_b} \sum_{\ell=1}^{d_b} \vec{G}_{i}^{(b)}[k, \ell] \data_i \vec{Z}^{(b, k, \ell)}, \labelmat_i\right) \nonumber \\ & + \beta  \sum_{b=1}^B \|\vec{Z}^{(b)}\|_*,\,\vec{Z}^{(b)} \in \mathbb{R}^{d_bd \times d_b c}.
\end{align}
\end{corollary}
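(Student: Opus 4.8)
The plan is to start from the convex program \eqref{eq:mhsa_linear_convex} of Theorem~\ref{theo:lin_attention} and exploit the block-diagonal structure of the Gram matrices to decouple both the data-fitting term and the nuclear-norm penalty. First I would order the $d$ features so that the $B$ groups are contiguous; this is without loss of generality, since a simultaneous permutation of the block-rows and block-columns of $\vec{Z}$ is an orthogonal transformation and leaves $\|\vec{Z}\|_*$ unchanged. Writing $S_b$ for the index set of the $d_b$ features in group $b$, block-diagonality of $\vec{G}_i$ means $\vec{G}_i[k,\ell]=0$ whenever $k$ and $\ell$ lie in different groups. Hence in the loss of \eqref{eq:mhsa_linear_convex} every summand with $k\in S_b$, $\ell\in S_{b'}$, $b\neq b'$ vanishes, and collecting the surviving terms and relabelling the within-group blocks $\vec{Z}^{(k,\ell)}$ as $\vec{Z}^{(b,k,\ell)}$ reproduces exactly the fitting term in \eqref{eq:mhsa_linear_convex_block_diag}.

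Next I would examine which blocks of $\vec{Z}$ actually appear. Viewing $\vec{Z}\in\mathbb{R}^{d^2\times dc}$ as a $d\times d$ grid of $d\times c$ blocks indexed by $(k,\ell)$, the loss depends only on the blocks with $k,\ell\in S_b$ for some common $b$; I collect these into $\vec{Z}^{(b)}\in\mathbb{R}^{d_b d\times d_b c}$. The crucial observation is that, across different $b$, these $\vec{Z}^{(b)}$ occupy disjoint block-rows ($S_b$ vs.\ $S_{b'}$) and disjoint block-columns, so after the chosen ordering the loss-relevant entries of $\vec{Z}$ sit exactly on the block diagonal, while the loss-irrelevant (cross-group) blocks form the off-diagonal part. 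Since the penalty is minimized and the cross-group blocks are free, I would minimize $\|\vec{Z}\|_*$ over them with the $\vec{Z}^{(b)}$ held fixed, and claim the minimizing choice is to set them to zero, giving value $\sum_{b=1}^B\|\vec{Z}^{(b)}\|_*$.

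The main step is the matrix inequality $\|\vec{Z}\|_*\ge\sum_{b=1}^B\|\vec{Z}^{(b)}\|_*$ for any filling of the off-diagonal blocks, together with equality when they are zero. I would prove this with the dual characterization $\|\vec{Z}\|_*=\max_{\|\vec{U}\|_{\mathrm{op}}\le1}\langle\vec{U},\vec{Z}\rangle$, whose dual norm is the operator (spectral) norm. Choosing $\vec{U}$ block diagonal with $\vec{U}^{(b)}$ an optimal dual certificate for $\vec{Z}^{(b)}$ (so $\langle\vec{U}^{(b)},\vec{Z}^{(b)}\rangle=\|\vec{Z}^{(b)}\|_*$ and $\|\vec{U}^{(b)}\|_{\mathrm{op}}\le1$), block-diagonality gives $\|\vec{U}\|_{\mathrm{op}}=\max_b\|\vec{U}^{(b)}\|_{\mathrm{op}}\le1$, and since $\vec{U}$ vanishes off the block diagonal, $\langle\vec{U},\vec{Z}\rangle=\sum_b\langle\vec{U}^{(b)},\vec{Z}^{(b)}\rangle=\sum_b\|\vec{Z}^{(b)}\|_*$, proving the inequality; equality for the zero-filling is immediate since the nuclear norm of a block-diagonal matrix is the sum of the blocks' nuclear norms. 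Equivalently, one may invoke the pinching inequality for unitarily invariant norms. Substituting this minimized penalty back, the optimization over $\vec{Z}$ reduces to an optimization over $\{\vec{Z}^{(b)}\}_{b=1}^B$ with regularizer $\beta\sum_b\|\vec{Z}^{(b)}\|_*$, which is precisely \eqref{eq:mhsa_linear_convex_block_diag}. I expect the block-diagonal dual-certificate argument to be the only nontrivial point; the decoupling of the loss is immediate from the zero pattern of $\vec{G}_i$.
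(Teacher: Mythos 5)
Your proposal is correct and follows essentially the same route as the paper's proof: the loss decouples because $\vec{G}_i[k,\ell]=0$ across groups, and the penalty satisfies $\|\vec{Z}\|_*\ge\sum_b\|\vec{Z}^{(b)}\|_*$ via the dual characterization with a block-diagonal spectral-norm certificate, with equality attained by zero-filling the cross-group blocks. The only addition is your explicit permutation-to-contiguous-blocks step, which the paper leaves implicit.
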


This corollary thus demonstrates that under the assumption of sets of uncorrelated features, a linear self-attention block separates over these sets. In particular, the blocks of \(\vec{Z}\) corresponding to values of \(0\) in the Gram matrix \(\vec{G}_i\) will be set to \(0\), eliminating interactions between uncorrelated features. This phenomenon is illustrated in Figure \ref{fig:cvx-sa}.


While this linear model provides a simple, elegant explanation for the underpinnings of self-attention, we can also analyze self-attention blocks with non-linearities. We thus provide an analysis of ReLU-activation self-attention. 

\begin{theorem}\label{theo:relu_attention}
For the  ReLU activation multi-head self-attention training problem \eqref{eq:mhsa_opt_primal}, we define 
\begin{align*}
    \vec{X} &:= \begin{bmatrix} \vec{X}_1 \otimes \vec{X}_1 \\ \cdots \\ \vec{X}_n \otimes \vec{X}_n  \end{bmatrix} \\
    \{\vec{D}_j\}_{j=1}^P &:= \{\mathrm{diag}\left(\mathbbm{1}\{\vec{X}\vec{u}_j \geq 0\}\right):\;\vec{u}_j \in \mathbb{R}^{d^2}\},
\end{align*}


where \(P \leq 2r\left(\frac{e(n-1)}{r}\right)^r\) and \( r := \mathrm{rank}(\vec{X})\).
Then, for \(m \geq m^*\) where \(m^* \leq n\min\{d^2, dc\}\), the standard non-convex training objective is equivalent to a convex optimization problem, given by
\begin{alignat}{9}\label{eq:mhsa_relu_convex}
    p_{SA}^* &= \min_{\vec{Z}_j \in \mathbb{R}^{d^2 \times dc} } \sum_{i=1}^n \mathcal{L}\left(\sum_{j=1}^P \sum_{\ell=1}^d \sum_{k=1}^d \vec{G}_{i,j}^{(k, \ell)}\vec{X}_i\vec{Z}_j^{(k, \ell)}, \labelmat_i\right) \nonumber \\
    &+ \beta \sum_{j=1}^m \|\vec{Z}_j\|_{*, \mathrm{K}_{j}},\, \vec{K}_{j} := (2\vec{D}_j - \vec{I}_{ns^2})\vec{X},
\end{alignat}
where \(\vec{G}_{i,j} := (\vec{X}_i \otimes \vec{I}_{s})^\top \vec{D}_{j}^{(i)} (\vec{X}_i \otimes \vec{I}_{s})\), \(\vec{G}_{i,j}^{(k, \ell)} \in \mathbb{R}^{s \times s}\) and \(\vec{Z}_j^{(k, \ell)} \in \mathbb{R}^{d \times c}\).
\end{theorem}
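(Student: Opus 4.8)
The plan is to reduce the non-convex ReLU self-attention problem to the vector-output ReLU MLP machinery of \eqref{eq:relu_mlp_convex} by vectorizing the quadratic attention score, and then to track how the extra ``value'' factor $\vec{X}_i\weightmat_{2j}$ reshapes the resulting regularizer. First I would vectorize the attention logits using the identity $\vecop(\vec{X}_i\weightmat_{1j}\vec{X}_i^\top) = (\vec{X}_i\otimes\vec{X}_i)\vecop(\weightmat_{1j})$, so that $\vec{u}_j := \vecop(\weightmat_{1j})\in\mathbb{R}^{d^2}$ plays the role of a hidden-layer weight acting on the stacked data matrix $\vec{X}=[\vec{X}_1\otimes\vec{X}_1;\cdots;\vec{X}_n\otimes\vec{X}_n]$. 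The ReLU applied to the logits is then exactly a hyperplane-arrangement nonlinearity on $\vec{X}$, which makes the finite set $\{\vec{D}_j\}_{j=1}^P$ well defined and yields the stated bound on $P$ via the arrangement-counting estimate of \cite{stanley2004introduction,pilanci2020neural} with $r=\mathrm{rank}(\vec{X})$. Next, exploiting that $\relu{\cdot}$ is positively homogeneous in $\weightmat_{1j}$ while $\weightmat_{2j}$ enters linearly, I would rescale $(\weightmat_{1j},\weightmat_{2j})\mapsto(\alpha\weightmat_{1j},\weightmat_{2j}/\alpha)$ and apply AM--GM to replace the weight decay in \eqref{eq:mhsa_opt_primal} by $\beta\sum_j\|\weightmat_{1j}\|_F\|\weightmat_{2j}\|_F$, after which I can impose $\|\weightmat_{1j}\|_F\le 1$ without loss of generality.

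With this normalization I would take the Fenchel dual of the loss term, introducing a dual matrix $\dualmat_i$ for each sample through $\mathcal{L}^*$. Weak duality yields $p_{SA}^*\ge d^*$, where $d^*$ is a maximization over $\{\dualmat_i\}$ subject to the semi-infinite constraint $\max_{\|\weightmat_1\|_F\le1,\,\|\weightmat_2\|_F\le1}\sum_i\langle\dualmat_i,\relu{\vec{X}_i\weightmat_1\vec{X}_i^\top}\vec{X}_i\weightmat_2\rangle\le\beta$. The key reduction is to split this constraint over the finitely many arrangements: on the cone $\{(2\vec{D}_j-\vec{I}_{ns^2})\vec{X}\vec{u}\ge 0\}$ the ReLU acts linearly as $\vec{D}_j$, so the inner maximum becomes a bilinear form in $(\vec{u},\weightmat_2)$ whose value is the support function of the set $\mathcal{C}$ from \eqref{const_nuc_norm} built with $\vec{K}_j=(2\vec{D}_j-\vec{I}_{ns^2})\vec{X}$. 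Collecting the data and dual variable into a single matrix and identifying the outer product $\vec{u}\,\vecop(\weightmat_2)^\top$ with the lifted variable $\vec{Z}_j\in\mathbb{R}^{d^2\times dc}$, the constraint becomes a bound on the norm dual to $\|\vec{Z}_j\|_{*,\mathrm{K}_j}$, so the dual of $d^*$ is precisely \eqref{eq:mhsa_relu_convex} with penalty $\beta\sum_j\|\vec{Z}_j\|_{*,\mathrm{K}_j}$.

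To close the argument I would establish strong duality and the width bound. Since after the arrangement split the dual is a convex program with finitely many (semidefinite-representable) constraints, strong duality between $d^*$ and its bidual holds, and any feasible $\{\vec{Z}_j\}$ can be factored into rank-one atoms $\vec{u}\,\vec{v}^\top$ satisfying $\vec{K}_j\vec{u}\ge0$, each of which reconstructs one self-attention head of matching objective value; this shows $p_{SA}^*\le$ (convex value) and, combined with weak duality, gives equality, with a Carath\'eodory/rank bound on the optimal $\{\vec{Z}_j\}$ (each of rank at most $\min\{d^2,dc\}$, across $n$ samples) yielding $m^*\le n\min\{d^2,dc\}$. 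The remaining, and in my view main, obstacle is the bookkeeping that turns the vectorized term $\sum_j\vec{D}_j^{(i)}(\vec{X}_i\otimes\vec{X}_i)\vec{u}_j$ composed with the value path into the clean block-weighted form $\sum_{k,\ell}\vec{G}_{i,j}^{(k,\ell)}\vec{X}_i\vec{Z}_j^{(k,\ell)}$: unfolding the Kronecker products and regrouping indices should reveal $\vec{G}_{i,j}=(\vec{X}_i\otimes\vec{I}_s)^\top\vec{D}_j^{(i)}(\vec{X}_i\otimes\vec{I}_s)$ with $s\times s$ blocks, which collapses to $\vec{G}_i[k,\ell]\vec{I}_s$ when $\vec{D}_j=\vec{I}$, recovering Theorem~\ref{theo:lin_attention} as a sanity check. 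Handling the value factor $\vec{X}_i\weightmat_{2j}$, which is what distinguishes this from a vanilla vector-output MLP, is precisely the delicate part of this bookkeeping.
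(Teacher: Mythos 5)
Your proposal is correct and follows essentially the same route as the paper's proof: rescale the weights to a Frobenius-norm-constrained form, take the Fenchel dual over the second-layer weights, vectorize the quadratic score via $\vecop(\vec{X}_i\weightmat_{1}\vec{X}_i^\top)=(\vec{X}_i\otimes\vec{X}_i)\vecop(\weightmat_{1})$ so that the hyperplane arrangements live on the stacked $\vec{X}_i\otimes\vec{X}_i$, identify the resulting constraint with the gauge dual of $\|\cdot\|_{*,\mathrm{K}_j}$, and close the loop by factoring the convex solution into cone-constrained rank-one atoms that reconstruct attention heads of matching objective value (the paper carries out the final bookkeeping you defer, via commutation-matrix identities, arriving at the same $\vec{G}_{i,j}$). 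Your sanity check that $\vec{G}_{i,j}$ collapses to $\vec{G}_i[k,\ell]\vec{I}_s$ when $\vec{D}_j=\vec{I}$ is a nice addition but the argument is otherwise the paper's.
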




Interestingly, while the hyperplane arrangements for a standard ReLU MLP depend only on the data matrix \(\data\), for a self-attention network they are more complex, instead depending on \(\data_i \otimes \data_i\). These hyperplane arrangements define the constraints for the constrained nuclear norm penalty. One could potentially see a ReLU activation self-attention model as a fusion of two models--one which uses \(\vec{X}_i \otimes \vec{X}_i\) for generating hyperplane arrangements, and one which uses \(\vec{X}_i\) for linear predictions. Thus, unlike the linear self-attention case, even in the case of \(\beta = 0\), the ReLU self-attention network \eqref{eq:mhsa_relu_convex} is not equivalent to the ReLU MLP model \eqref{eq:relu_mlp_convex}.  \\\\
Furthermore, while in the linear-activation case in \eqref{eq:mhsa_linear_convex}, each linear model was scaled by a single entry in \(\vec{G}_i\), in the ReLU case each linear model is scaled by a diagonal matrix \(\vec{G}_{i,j}^{(k, \ell)}\) which combines second-order information from \(\vec{X}_i\) with the hyperplane arrangements induced by the ReLU activation function. One may, for example, note the identity
\begin{equation}
    \vec{G}_{i,j}^{(k, \ell)} = \sum_{t=1}^{s} \vec{X}_i[t, k]\vec{X}_i[t, \ell] \vec{D}_{j}^{(i, t)},
\end{equation}
for diagonal \(\vec{D}_{j}^{(i, t)} \in \{0, 1\}^{s \times s}\). Therefore, \(\vec{G}_{i,j}^{(k, \ell)}\) can be viewed as a correlation between features \(k\) and \(\ell\) weighted by diagonal \(\{0, 1\}\)-valued hyperplane arrangements for each corresponding row \(t\), in other words a type of ``local" correlation, where locality is achieved by the \(\{0, 1\}\) values in \(\vec{D}_{j}^{(i, t)}\). This local correlation scales each token of the prediction, essentially giving weight to tokens which have been not been masked away by \(\vec{D}_{j}^{(i, t)}\). 

\section{Alternative Mixing Mechanisms}
While self-attention is the original proposed token mixer used for vision transformers, there are many other alternative approaches which have shown to produce similar results while being more computationally efficient. We tackle two such architectures here.
\subsection{MLP Mixer}
We begin by analyzing the MLP-Mixer architecture, an all-MLP alternative to self-attention networks with competitive performance on image classification benchmarks \cite{tolstikhin2021mlp}. The proposal is simple--apply an MLP along one dimension of the input, followed by an MLP along the opposite dimension. The simplest form of this MLP-Mixer architecture can thus be written as
\begin{align}\label{eq:mlp_mixer_primal}
    p_{MM}^* &:= \min_{\weightmat_{1j}, \weightmat_{2j}} \sum_{i=1}^n \mathcal{L}(\sum_{j=1}^m \sigma(\vec{W}_{1j} \vec{X}_i) \vec{W}_{2j}, \vec{Y}_i) \nonumber \\
    &+ \frac{\beta}{2}\sum_{j=1}^m \|\weightmat_{1j}\|_F^2 + \|\weightmat_{2j}\|_F^2, 
\end{align}
where \(\sigma\) is an activation function. While \cite{tolstikhin2021mlp} use the GeLU non-linearity \cite{hendrycks2016gaussian}, we analyze the simpler linear and ReLU activation counterparts, which shed important insights into the underlying structure of the MLP-Mixer architecture.

\begin{theorem}\label{theo:linear_mlpmixer}
For the linear activation MLP-Mixer training problem \eqref{eq:mlp_mixer_primal}, for \(m \geq m^*\) where \(m^* \leq \min\{s^2, dc\}\), the standard non-convex training objective is equivalent to a convex optimization problem, given by 
\begin{align}\label{eq:mlp_mixer_linear_convex}
    p_{MM}^* &= \min_{\vec{Z} \in \mathbb{R}^{s^2\times dc} }  \sum_{i=1}^n \mathcal{L}\left(\begin{bmatrix} f_1(\vec{X}_i) & \cdots &  f_c(\vec{X}_i)  \end{bmatrix}, \labelmat_i\right)\nonumber\\
    &+ \beta \|\vec{Z}\|_{*},\,f_p(\vec{X}_i) := \vec{Z}^{(p)}\vecop(\data_i)
\end{align}
where \(\vec{Z}^{(p)} \in \mathbb{R}^{s \times sd}\) for \(p \in [c]\), for  \(\vec{Z}^{(p, t)} \in \mathbb{R}^{s \times d}\) for \(t \in [s]\), and 
\[\vec{Z}^{(p)} = \begin{bmatrix}\vec{Z}^{(p, 1)}  & \cdots & \vec{Z}^{(p, s)} \end{bmatrix}, \vec{Z} = \begin{bmatrix} \vec{Z}^{(1, 1)} & \cdots & \vec{Z}^{(c, 1)}  \\ & \cdots & \\  \vec{Z}^{(1, s)} & \cdots & \vec{Z}^{(c, s)} \end{bmatrix}. \]
\end{theorem}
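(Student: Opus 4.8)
The plan is to follow the same convex-duality template used for the linear MLP in \eqref{eq:linear_mlp_convex}, the difference being that here the two learnable layers act on \emph{opposite} sides of $\data_i$, so the bilinear ``sandwich'' $\weightmat_{1j}\data_i\weightmat_{2j}$ must be folded into a single nuclear-norm-penalized matrix. First I would linearize the activation to write the network as $f_{MM}(\data_i)=\sum_{j=1}^m \weightmat_{1j}\data_i\weightmat_{2j}$, with $\weightmat_{1j}\in\mathbb{R}^{s\times s}$ and $\weightmat_{2j}\in\mathbb{R}^{d\times c}$. Reading off the $p$-th output column and expanding $\data_i\weightmat_{2j}[:,p]=\sum_{q=1}^d \weightmat_{2j}[q,p]\,\data_i[:,q]$ shows that the prediction depends on the weights only through the aggregated matrix $\vec{Z}:=\sum_{j=1}^m \vecop(\weightmat_{1j})\,\vecop(\weightmat_{2j})^\top\in\mathbb{R}^{s^2\times dc}$, whose entries are $\vec{Z}[(\alpha,\beta),(q,p)]=\sum_j \weightmat_{1j}[\alpha,\beta]\,\weightmat_{2j}[q,p]$. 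Grouping its columns by $p\in[c]$ and reshaping the $s^2$ rows into $s$ blocks of size $s$ recovers exactly the block layout in the statement, and a direct check gives $f_p(\data_i)=\vec{Z}^{(p)}\vecop(\data_i)$ under the matching vectorization convention, so the data-fit term becomes a fixed linear function $\mathcal{A}(\vec{Z})$ of $\vec{Z}$.

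Next I would handle the regularizer. The rescaling $(\weightmat_{1j},\weightmat_{2j})\mapsto(\alpha\weightmat_{1j},\alpha^{-1}\weightmat_{2j})$ leaves $\vec{Z}$, and hence the prediction, invariant, while minimizing $\tfrac{\beta}{2}(\|\weightmat_{1j}\|_F^2+\|\weightmat_{2j}\|_F^2)$ over $\alpha>0$ yields $\beta\|\weightmat_{1j}\|_F\|\weightmat_{2j}\|_F=\beta\|\vecop(\weightmat_{1j})\|_2\|\vecop(\weightmat_{2j})\|_2$ by AM--GM. Combining this balancing with the variational characterization of the nuclear norm, $\|\vec{Z}\|_*=\min\{\tfrac12\sum_j(\|\vec{a}_j\|_2^2+\|\vec{b}_j\|_2^2):\vec{Z}=\sum_j\vec{a}_j\vec{b}_j^\top\}$, identifies $\beta\sum_j\tfrac12(\|\weightmat_{1j}\|_F^2+\|\weightmat_{2j}\|_F^2)$, minimized over all factorizations realizing a fixed $\vec{Z}$, with $\beta\|\vec{Z}\|_*$.

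With these two observations the equivalence follows by the usual two-sided argument. Given any feasible weights, the induced $\vec{Z}$ is feasible for \eqref{eq:mlp_mixer_linear_convex} and its weight-decay cost is at least $\beta\|\vec{Z}\|_*$, giving $p_{MM}^*\ge$ (convex value). Conversely, given any $\vec{Z}$, I would take its thin SVD $\vec{Z}=\sum_{j=1}^{\mathrm{rank}(\vec{Z})}\sigma_j\vec{a}_j\vec{b}_j^\top$ and set $\vecop(\weightmat_{1j})=\sqrt{\sigma_j}\,\vec{a}_j$, $\vecop(\weightmat_{2j})=\sqrt{\sigma_j}\,\vec{b}_j$; these reproduce $f_{MM}(\data_i)=\mathcal{A}(\vec{Z})$ with weight-decay cost exactly $\beta\|\vec{Z}\|_*$, which is admissible as soon as the number of heads satisfies $m\ge\mathrm{rank}(\vec{Z})$. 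Since $\vec{Z}\in\mathbb{R}^{s^2\times dc}$ has rank at most $\min\{s^2,dc\}$, taking $\vec{Z}$ optimal shows a width $m^*\le\min\{s^2,dc\}$ suffices, which proves the claimed equivalence and, as in the MLP case, a one-to-one correspondence between minimizers of \eqref{eq:mlp_mixer_primal} and \eqref{eq:mlp_mixer_linear_convex}.

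I expect the main obstacle to be almost entirely notational: pinning down the exact row/column block partition of $\vec{Z}$ so that the outer products $\vecop(\weightmat_{1j})\vecop(\weightmat_{2j})^\top$ align with the stated blocks $\vec{Z}^{(p,t)}\in\mathbb{R}^{s\times d}$ and the identity $f_p(\data_i)=\vec{Z}^{(p)}\vecop(\data_i)$ holds under a single consistent vectorization convention. The only substantive point beyond bookkeeping is realizability---ensuring the balanced rank-one factors can be packed into $m\ge m^*$ heads and that the factorization infimum is attained via the SVD---which is precisely where the width bound $m^*\le\min\{s^2,dc\}$ enters.
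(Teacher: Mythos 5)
Your proof is correct, but it takes a genuinely different route from the paper. The paper follows its general duality template (Lemmas \ref{lem:rescaling} and \ref{lem:duality}): it passes to the Fenchel dual over the second-layer weights, rewrites the dual constraint via $\vecop(\vec{A}\vec{B}\vec{C}) = (\vec{C}^\top \otimes \vec{A})\vecop(\vec{B})$ as $\|\sum_i \data_i^\top \otimes \dualmat_i^\top\|_2 \le \beta$, invokes strong duality for the semi-infinite program (which is where the bound $m^* \le \min\{s^2, dc\}$, the dimension of the constraint versus the variable, enters via \cite{shapiro2009semi}), and then forms the Lagrangian, applies Sion's minimax theorem, and maximizes over $\dualmat_i$ using commutation-matrix identities to recover the primal convex program. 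You instead lift directly: you observe that the linear-activation prediction depends on the weights only through $\vec{Z} = \sum_j \vecop(\weightmat_{1j})\vecop(\weightmat_{2j})^\top$, and that after the standard rescaling the regularizer, minimized over all factorizations realizing a fixed $\vec{Z}$, is exactly $\beta\|\vec{Z}\|_*$ by the variational characterization of the nuclear norm; the two inequalities then follow from the operator-norm duality bound $\sum_j\|\vec{a}_j\|\|\vec{b}_j\| \ge \|\vec{Z}\|_*$ and the SVD construction. Your approach is more elementary and self-contained (no minimax or semi-infinite duality), and it yields the width bound $m^* \le \mathrm{rank}(\vec{Z}^*) \le \min\{s^2, dc\}$ and the explicit one-to-one mapping as byproducts rather than afterthoughts --- indeed the paper's closing ``mapping'' verification is essentially your upper-bound direction. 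What the paper's heavier machinery buys is uniformity: the same dual template extends verbatim to the ReLU case (Theorem \ref{theo:relu_mlpmixer}), where the nonlinearity breaks the clean lifting $\vec{Z} = \sum_j \vecop(\weightmat_{1j})\vecop(\weightmat_{2j})^\top$ and one needs hyperplane arrangements and the constrained nuclear norm, whereas your direct argument is specific to the linear activation. The remaining work in your writeup is, as you say, bookkeeping: fixing one vectorization convention so that the blocks $\vec{Z}^{(p,t)}$ and the identity $f_p(\data_i) = \vec{Z}^{(p)}\vecop(\data_i)$ line up (the permuted form \eqref{eq:mlp_mixer_linear_convex_permuted} with $\tilde{\vec{Z}}^{(t,k)} = \sum_j \weightmat_{1j}[:,t]\weightmat_{2j}[k,:]^{\phantom{\top}}$ is the cleanest anchor for this).
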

We can contrast the fitting term of the linear MLP-Mixer to a standard linear MLP \eqref{eq:linear_mlp_convex}, where each column \(k\) of the network output is given by \[\vec{X}_i \vec{Z}^{(k)} = ({\vec{Z}^{(k)}}^\top \otimes \vec{I}_s)\vecop(\data_i),\] where \(\vec{Z}^{(k)} \in \mathbb{R}^d\). Thus, the MLP-Mixer gives the network \(s^2\) more degrees of freedom for fitting each column of \(\vec{Y}_i\) than a standard linear MLP. This demonstrates that unlike the linear self-attention network, a linear MLP-Mixer model is not equivalent to a linear standard MLP even when \(\beta = 0\). One may speculate that this additional implicit degrees of freedom allows mixer-like MLP models to fit complex distributions more easily compared to standard MLPs. While it appears from the fitting term that each output class of \(\vec{Y}_i\) is fit independently, we note that these outputs are coupled together by the nuclear norm on \(\vec{Z}\), which encourages \{ \(\vec{Z}^{(k)}\}_{k=1}^c\) to be similar to one another. \\\\
Another interpretation of the convexified linear MLP-Mixer architecture may be achieved by simply permuting the columns of \(\vec{Z}\) to form \(\tilde{\vec{Z}}\), which does not affect the nuclear norm and thus does not impact the optimal solution. If one partitions the columns of \(\tilde{\vec{Z}}\) according to blocks \(\tilde{\vec{Z}}^{(t, k)} \in \mathbb{R}^{s \times c}\) for \(t \in [s]\), \(k \in [d]\), one can also write 
\begin{align}\label{eq:mlp_mixer_linear_convex_permuted}
    p_{MM}^* &= \min_{\tilde{\vec{Z}} }  \sum_{i=1}^n \mathcal{L}\left( \sum_{t=1}^s \sum_{k=1}^d \data_i[t, k]\tilde{\vec{Z}}^{(t, k)} , \labelmat_i\right) + \beta \|\tilde{\vec{Z}}\|_{*}
\end{align}
Here, the connections to the linear self-attention network \eqref{eq:mhsa_linear_convex} become more clear. While \eqref{eq:mhsa_linear_convex} is a weighted summation of linear models, where the weights correspond to Gram matrix entries, \eqref{eq:mlp_mixer_linear_convex_permuted} is a weighted summation of predictions, where the weights correspond to data matrix entries. We also note that in most networks, typically \(s < d\), so the MLP-Mixer block has a lower order complexity to solve compared to the self-attention block. We can also extend these results to ReLU activation MLP-Mixers.
\begin{theorem}\label{theo:relu_mlpmixer}
For the ReLU activation MLP-Mixer training problem \eqref{eq:mlp_mixer_primal}, we define 
\begin{align*}
    \vec{X} &:= \begin{bmatrix} \vec{X}_1^\top  \otimes \vec{I}_{s} \\ \cdots \\ \vec{X}_n^\top  \otimes \vec{I}_{s}  \end{bmatrix}\\
    \{\vec{D}_j\}_{j=1}^P &:= \{\mathrm{diag}\left(\mathbbm{1}\{\vec{X}\vec{u}_j \geq 0\}\right):\;\vec{u}_j \in \mathbb{R}^{s^2}\},
\end{align*}
where \(P \leq 2r\left(\frac{e(n-1)}{r}\right)^r\) and \( r := \mathrm{rank}(\vec{X})\).
Then, for \(m \geq m^*\) where \(m^* \leq n\min\{s^2, dc\}\), the standard non-convex training objective is equivalent to a convex optimization problem, given by 
\begin{align}\label{eq:mlp_mixer_relu_convex}
    p_{MM}^* &= \min_{\vec{Z}_j \in \mathbb{R}^{s^2\times dc} }  \sum_{i=1}^n \mathcal{L}\left(\begin{bmatrix} f_1(\vec{X}_i) & \cdots &  f_c(\vec{X}_i) \end{bmatrix} , \labelmat_i\right) \nonumber \\
    &+ \beta \sum_{j=1}^P \|\vec{Z}_j\|_{*,\mathrm{K}_j}, \vec{K}_j := (2\vec{D}_j - \vec{I}_{nd})\vec{X}
\end{align}
where 
\begin{align*}
    f_p(\vec{X_i}) &:= \sum_{j=1}^P \begin{bmatrix}  \vec{D}_{j}^{(i, 1)}\vec{Z}_j^{(p, 1)} \cdots   \vec{D}_{j}^{(i, d)}\vec{Z}_j^{(p, d)}\end{bmatrix}  \vecop(\data_i)
\end{align*}
for \(\vec{D}_{j}^{(i,k)} \in \mathbb{R}^{s \times s}\) and \(\vec{Z}_j^{(p, k)} \in \mathbb{R}^{s \times s}\).
\end{theorem}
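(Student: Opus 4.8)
The plan is to recognize the ReLU MLP-Mixer as an instance of the general convex-ReLU-network template \eqref{eq:relu_mlp_convex} applied to a suitably lifted data matrix, in direct analogy with the ReLU self-attention result of Theorem~\ref{theo:relu_attention}. The crucial first step is to vectorize the network output so as to expose the correct ``data matrix.'' Using the identity $\vecop(\vec{A}\vec{B}) = (\vec{B}^\top \otimes \vec{I})\vecop(\vec{A})$, the pre-activation of head $j$ on sample $i$ satisfies $\vecop(\vec{W}_{1j}\vec{X}_i) = (\vec{X}_i^\top \otimes \vec{I}_s)\vecop(\vec{W}_{1j})$. Hence the ReLU gating of each head is governed entirely by the sign pattern of $(\vec{X}_i^\top \otimes \vec{I}_s)\vecop(\vec{W}_{1j})$, which is exactly why the stacked matrix $\vec{X} = [\vec{X}_i^\top \otimes \vec{I}_s]_{i=1}^n$ and its hyperplane arrangements $\{\vec{D}_j\}_{j=1}^P$ enter, with $\vec{u}_j \in \mathbb{R}^{s^2}$ playing the role of $\vecop(\vec{W}_{1j})$. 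This identifies the Mixer with a matrix-output ReLU network in the lifted variable $\vecop(\vec{W}_1)$.

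With the lifted representation in hand, I would carry out the three standard moves. First, exploit the positive homogeneity of the ReLU to rescale $(\vec{W}_{1j},\vec{W}_{2j}) \mapsto (\alpha_j \vec{W}_{1j}, \alpha_j^{-1}\vec{W}_{2j})$ for $\alpha_j>0$: the fitting term is invariant, and minimizing $\tfrac{\beta}{2}(\|\vec{W}_{1j}\|_F^2 + \|\vec{W}_{2j}\|_F^2)$ over the scale gives, by AM--GM, the product penalty $\beta\sum_j \|\vec{W}_{1j}\|_F\|\vec{W}_{2j}\|_F$, so without loss of generality $\|\vec{W}_{1j}\|_F = 1$ and the penalty reduces to $\beta\sum_j \|\vec{W}_{2j}\|_F$. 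Second, form the convex dual over per-sample dual variables $\{\vec{V}_i\}$; the only nontrivial dual constraint is an inner maximization, over the unit ball in $\vecop(\vec{W}_1)$, of a bilinear form in $\{\vec{V}_i\}$ composed with $(\vec{X}_i^\top \otimes \vec{I}_s)$ and the ReLU.

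The third move linearizes the ReLU by enumerating activation patterns. On each arrangement $\vec{D}_j$ the gating is fixed, so the lifted map becomes linear subject to the feasibility constraint $(2\vec{D}_j - \vec{I}_{nd})\vec{X}\,\vecop(\vec{W}_1) \ge 0$, and the dual constraint collapses to a finite maximum over $j \in [P]$ of a constrained operator norm whose polar is precisely the constrained nuclear norm $\|\cdot\|_{*,\vec{K}_j}$ with $\vec{K}_j = (2\vec{D}_j - \vec{I}_{nd})\vec{X}$. Taking the bidual then produces \eqref{eq:mlp_mixer_relu_convex}. Exactness of the equivalence follows by the same semi-infinite-to-finite argument as in \cite{pilanci2020neural,sahiner2020vector}: the convex program is the bidual of the rescaled primal, and a Caratheodory/rank bound on the optimal $\{\vec{Z}_j\}$ (each $\vec{Z}_j \in \mathbb{R}^{s^2\times dc}$, hence of rank at most $\min\{s^2,dc\}$, over $n$ samples) shows that $m^* \le n\min\{s^2,dc\}$ neurons realize the convex optimum and furnishes the one-to-one solution map.

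The main obstacle is the Kronecker and block bookkeeping needed to unpack the abstract convex program into the exact stated form. Concretely, I must partition each $ds\times ds$ block $\vec{D}_j^{(i)}$ into per-feature diagonal blocks $\vec{D}_j^{(i,k)} \in \mathbb{R}^{s\times s}$ for $k \in [d]$, partition each $\vec{Z}_j$ into blocks $\vec{Z}_j^{(p,k)} \in \mathbb{R}^{s\times s}$ indexed by output channel $p \in [c]$ and feature $k \in [d]$, and then verify that contracting $\vec{D}_j^{(i,k)}\vec{Z}_j^{(p,k)}$ against $\vecop(\vec{X}_i)$ reproduces $f_p(\vec{X}_i)$ column by column, while confirming that the product penalty in the lifted space is exactly $\sum_j \|\vec{Z}_j\|_{*,\vec{K}_j}$. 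Matching this indexing carefully is the delicate part; once the lifting is set up, the convex-analytic steps are routine.
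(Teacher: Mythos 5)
Your proposal follows essentially the same route as the paper's proof: the vectorization $\vecop(\vec{W}_{1}\vec{X}_i) = (\vec{X}_i^\top \otimes \vec{I}_s)\vecop(\vec{W}_{1})$ to identify the lifted data matrix, the rescaling to a product penalty, the dual over per-sample variables, the enumeration of arrangements yielding the constrained nuclear norm as the polar of the cone-constrained operator norm, and the rank/Carath\'eodory bound for $m^*$ all match the paper's argument (which executes the final block bookkeeping via a commutation-matrix identity). The approach is correct and no genuinely different idea is introduced.
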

Now, unlike the self-attention model, where the effective data matrix for hyperplane arrangements was \(\data_i \otimes \data_i\), the MLP-mixer's arrangements use \(\data_i^\top \otimes \vec{I}_s\), providing additional degrees of freedom for partitioning the data while still incorporating only first-order information about the data. Using the same column permutation trick as in \eqref{eq:mlp_mixer_linear_convex_permuted}, one may write \eqref{eq:mlp_mixer_relu_convex} as 
\begin{align}\label{eq:mlp_mixer_relu_convex_permuted}
    p_{MM}^* &= \min_{\tilde{\vec{Z}}_j }  \sum_{i=1}^n \mathcal{L}\left( \sum_{j=1}^P \sum_{t=1}^s \sum_{k=1}^d \vec{X}_i[t,k]\vec{D}_j^{(i, k)}\tilde{\vec{Z}}_j^{(t, k)}, \labelmat_i\right) \nonumber \\
    &+ \beta \sum_{j=1}^P \|\tilde{\vec{Z}}_j\|_{*,\mathrm{K}_j},
\end{align}
where again now we see clearly the differences with ReLU self-attention \eqref{eq:mhsa_relu_convex}, with the diagonal arrangements weighted by \(\data_i\) rather than the Gram matrix, and instead of \(\tilde{\vec{Z}}_j^{(t, k)}\) being weights of a linear model, they are simply predictions. 

\subsection{Fourier Neural Operator}
In contrast to self-attention or MLP-like attention mechanisms, there is also a family of Fourier-based alternatives to self-attention which have recently shown promise in vision. We present Fourier Neural Operator (FNO) \cite{li2020fourier, guibas2021adaptive}, which works as follows: $i)$ 2D DFT is applied first over the spatial tokens; $ii)$ each token is multiplied by its own weight matrix; and $iii)$ inverse DFT returns the Fourier tokens back to the original (spatial) domain. 

To express FNO in a compact matrix form, note that in addition to standard MLP weights \(\weightmat_1 \in \mathbb{R}^{d \times m},\, \weightmat_2 \in \mathbb{R}^{m \times c}\), FNO blocks has a third set of weights \(\vec{L} \in \mathbb{R}^{s \times d \times d}\). Let us define the Fourier transform \(\vec{F} :=  \vec{F}_h \otimes \vec{F}_w\), that is a vectorized version of $h \times w$ 2D Fourier transform. It is more convenient to work with the weights $\vec{L}$ in the Fourier space as $\vec{V}$, where the 2D Fourier transform has been applied on the first dimension, namely $\vec{V}[:,i,j]$ for every $i,j$.

Now, define  $\vec{V}^{(j)}=\vec{V}[j,:,:]$.  Each row of \(\vec{F}\vec{X}_i\) is then multiplied by $d \times d$ weight matrix $\vec{V}^{(j)}$, and converted back to the image domain as follows
%
\begin{align}\label{eq:fno_full_problem_def}
    f_{FN}(\vec{X}_i) := \sigma\left( \left(\vec{F}^{-1}
    \begin{bmatrix}(\vec{F}\vec{X}_i)_1^\top \vec{V}^{(1)} \\  \cdots \\ (\vec{F}\vec{X}_i)_{s}^\top \vec{V}^{(s)} \end{bmatrix} \right)\weightmat_1 \right)\weightmat_2. 
\end{align}
%


This representation can be heavily simplified. 
\begin{lemma}\label{lem:fno}
For weights \(\weightmat_1 \in \mathbb{R}^{sd \times m}\), \(\weightmat_2 \in \mathbb{R}^{m \times c}\), the FNO block \eqref{eq:fno_full_problem_def} can be equivalently represented as
\begin{align}\label{eq:fno_simplified}
    f_{FN}(\vec{X}_i) = \sum_{j=1}^m \sigma\left(\mathrm{circ}(\vec{X}_i) \weight_{1j}\right)\weight_{2j}^\top
\end{align}
where \(\mathrm{circ}(\vec{X}_i) \in \mathbb{R}^{s \times sd}\) denotes a matrix composed of all \(s\) circulant shifts of \(\vec{X}_i\) along its first dimension.
\end{lemma}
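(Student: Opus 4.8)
The plan is to reduce the FNO block to a multichannel circular convolution via the convolution theorem, and then to express that convolution as a linear functional of the circulant shifts of $\vec{X}_i$, so that the per-frequency matrices $\vec{V}^{(j)}$ and the subsequent feature-mixing weight $\weightmat_1$ fold into a single weight acting on $\mathrm{circ}(\vec{X}_i)$.

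First I would isolate the inner linear map $\vec{X}_i \mapsto \vec{F}^{-1}\vec{M}$, where $\vec{M}$ is the matrix whose $j$-th row is the $j$-th Fourier mode of $\vec{X}_i$ multiplied on the feature side by $\vec{V}^{(j)}$. Read coordinatewise, the input-output channel pair $(a,b)$ is processed by multiplying the length-$s$ frequency sequence $\{\vec{V}^{(j)}[a,b]\}_{j}$ onto the transformed column $\vec{F}(\vec{X}_i[:,a])$ and inverting the transform. Since $\vec{F}=\vec{F}_h\otimes\vec{F}_w$ simultaneously diagonalizes the family of block-circulant-with-circulant-blocks (BCCB) matrices, elementwise multiplication in the Fourier domain is exactly 2D circular convolution in the spatial domain. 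Hence the inner map sends output feature $b$ at token $t$ to $\sum_{a=1}^d\sum_{\tau=1}^s k_{ab}[\tau]\,\vec{X}_i[t\ominus\tau, a]$, where $k_{ab}:=\vec{F}^{-1}\{\vec{V}^{(j)}[a,b]\}_j$ is the induced convolution kernel and $t\ominus\tau$ denotes the cyclic shift on the $h\times w$ token grid.

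Next I would collect the $s$ cyclic shifts of $\vec{X}_i$ into the matrix $\mathrm{circ}(\vec{X}_i)\in\mathbb{R}^{s\times sd}$, so that any single circular convolution with kernel $k$ is a linear combination of these shifts and can be written as $\mathrm{circ}(\vec{X}_i)$ times a vector encoding $k$. Composing the inner map with $\weightmat_1$, the pre-activation of neuron $j$ at token $t$ becomes $\sum_b \weightmat_1[b,j]\sum_a\sum_\tau k_{ab}[\tau]\vec{X}_i[t\ominus\tau,a]$; defining $\weight_{1j}\in\mathbb{R}^{sd}$ by $\weight_{1j}[\tau,a]:=\sum_b \weightmat_1[b,j]\,k_{ab}[\tau]$ identifies this pre-activation with the $t$-th entry of $\mathrm{circ}(\vec{X}_i)\weight_{1j}$. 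Applying $\sigma$ elementwise and the output weight $\weightmat_2$ then yields $f_{FN}(\vec{X}_i)=\sum_{j=1}^m \sigma(\mathrm{circ}(\vec{X}_i)\weight_{1j})\weight_{2j}^\top$, the claimed form; the converse inclusion holds because the map $(\vec{V},\weightmat_1)\mapsto \weight_1$ is onto $\mathbb{R}^{sd\times m}$, so every weight in the simplified representation is realized by some FNO.

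The main obstacle I anticipate is the bookkeeping that ties the Kronecker-structured transform $\vec{F}_h\otimes\vec{F}_w$ to the ``shifts along the first dimension'' in the definition of $\mathrm{circ}(\vec{X}_i)$: one must verify that the $s=hw$ cyclic index shifts genuinely implement 2D circular convolution on the $h\times w$ grid, i.e. that the Kronecker structure of $\vec{F}$ corresponds to BCCB matrices whose eigenvalues are exactly the 2D DFT of the kernel. Once this diagonalization is pinned down, the remaining steps are routine linear-algebra rearrangements, with care needed only to confirm that conjugate symmetry of $\vec{V}$ in the Fourier domain keeps the kernels $k_{ab}$, and hence the entire block, real-valued.
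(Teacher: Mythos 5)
Your argument is essentially the paper's own proof: the paper likewise invokes the convolution theorem (writing the Fourier-domain weights as $(\vec{F}\otimes\vec{I}_d)\vec{L}$ so that $\vec{F}^{-1}\,\mathrm{blockdiag}(\vec{F}_t^\top\vec{X}_i)\,(\vec{F}\otimes\vec{I}_d)$ collapses to the matrix of circulant shifts of $\vec{X}_i$) and then merges $\vec{L}\weightmat_1$ into a single unconstrained weight, which is exactly your kernel-plus-absorption step done in compact matrix form rather than coordinatewise. The points you flag as potential obstacles --- that the $\vec{F}_h\otimes\vec{F}_w$ structure really induces \emph{two-dimensional} cyclic shifts on the $h\times w$ grid rather than one-dimensional shifts of the vectorized row index, that conjugate symmetry is needed for real kernels, and that the weight map is onto so no expressivity is lost --- are precisely the details the paper's proof elides, so your treatment is if anything more careful than the original.
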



We can accordingly write the FNO training objective as 
\begin{align} \label{eq:fno_primal}
p_{FN}^* := \min_{\weight_{1j}, \weight_{2j}} &\sum_{i=1}^n \mathcal{L}(f_{FN}(\vec{X}_i), \vec{Y}_i) \nonumber \\
    &+ \frac{\beta}{2}\sum_{j=1}^m \|\weight_{1j}\|_2^2 + \|\weight_{2j}\|_2^2. 
\end{align}
We note that FNO actually starkly resembles a two-layer CNN, where the first layer consists of a convolutional layer with full circulant padding and a \emph{global} convolutional kernel. Unlike a typical CNN, in which the kernel size is usually small and the convolution is \emph{local}, the convolution here is much larger, which means there are many more parameters than a typical CNN. Similar CNN architectures have previously been analyzed via convex duality \cite{ergen2020implicit, sahiner2020convex}. Accordingly, for both linear and ReLU activation, \eqref{eq:fno_primal} is equivalent to a convex optimization problem. 
\begin{theorem}\label{theo:linear_fno}
For the linear activation FNO training problem \eqref{eq:fno_primal}, for \(m \geq m^*\) where \(m^* \leq \min\{sd, c\}\), the standard non-convex training objective is equivalent to a convex optimization problem, given by 
\begin{align}\label{eq:fno_linear_convex}
    p_{FN}^* &= \min_{\vec{Z} \in \mathbb{R}^{sd \times c} }  \sum_{i=1}^n \mathcal{L}\left(\mathrm{circ}(\vec{X}_i) \vec{Z}, \labelmat_i\right) + \beta \|\vec{Z}\|_{*}. 
\end{align}
\end{theorem}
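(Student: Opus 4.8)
The plan is to reduce the linear-activation FNO objective to the same nuclear-norm form that governs the linear MLP \eqref{eq:linear_mlp_convex}, with the data matrix $\data_i$ replaced by $\mathrm{circ}(\vec{X}_i)$. First I would invoke Lemma~\ref{lem:fno} with $\sigma$ the identity, which collapses the block to $f_{FN}(\vec{X}_i) = \mathrm{circ}(\vec{X}_i)\sum_{j=1}^m \weight_{1j}\weight_{2j}^\top$. Introducing the single variable $\vec{Z} := \sum_{j=1}^m \weight_{1j}\weight_{2j}^\top \in \mathbb{R}^{sd \times c}$, the data-fitting term $\sum_{i=1}^n \mathcal{L}(\mathrm{circ}(\vec{X}_i)\vec{Z}, \labelmat_i)$ depends on the weights only through $\vec{Z}$, exactly matching the fitting term in \eqref{eq:fno_linear_convex}. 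Since $\mathcal{L}$ is convex and $\mathrm{circ}(\vec{X}_i)\vec{Z}$ is linear in $\vec{Z}$, this term is convex in $\vec{Z}$, and the nuclear norm is a norm, so the target program is convex.

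The crux is then to minimize the weight-decay penalty over all factorizations of a fixed $\vec{Z}$. Concretely, I would establish
\[
\min_{\{\weight_{1j},\weight_{2j}\}:\,\sum_j \weight_{1j}\weight_{2j}^\top = \vec{Z}} \tfrac{1}{2}\sum_{j=1}^m \big(\|\weight_{1j}\|_2^2 + \|\weight_{2j}\|_2^2\big) = \|\vec{Z}\|_*,
\]
the classical variational characterization of the nuclear norm. The upper bound follows by plugging in the SVD $\vec{Z} = \sum_{r} \sigma_r \vec{p}_r \vec{q}_r^\top$ and setting $\weight_{1r} = \sqrt{\sigma_r}\,\vec{p}_r$, $\weight_{2r} = \sqrt{\sigma_r}\,\vec{q}_r$, which attains $\tfrac{1}{2}\sum_r(\sigma_r + \sigma_r) = \sum_r \sigma_r = \|\vec{Z}\|_*$; the matching lower bound comes from the AM--GM step $\tfrac{1}{2}(\|\weight_{1j}\|_2^2 + \|\weight_{2j}\|_2^2) \geq \|\weight_{1j}\|_2\|\weight_{2j}\|_2$ combined with $\|\vec{Z}\|_* \leq \sum_j \|\weight_{1j}\weight_{2j}^\top\|_* = \sum_j \|\weight_{1j}\|_2\|\weight_{2j}\|_2$, which uses subadditivity of the nuclear norm and $\|\weight_{1j}\weight_{2j}^\top\|_* = \|\weight_{1j}\|_2\|\weight_{2j}\|_2$ for rank-one factors. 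Substituting this identity back yields $p_{FN}^* = \min_{\vec{Z}} \sum_i \mathcal{L}(\mathrm{circ}(\vec{X}_i)\vec{Z}, \labelmat_i) + \beta\|\vec{Z}\|_*$.

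Finally I would verify the neuron-count condition. The SVD construction uses exactly $\mathrm{rank}(\vec{Z})$ rank-one terms, and any $\vec{Z} \in \mathbb{R}^{sd \times c}$ satisfies $\mathrm{rank}(\vec{Z}) \leq \min\{sd, c\}$; hence $m^* \leq \min\{sd,c\}$ neurons suffice for the non-convex problem to realize the optimal $\vec{Z}$ and attain its nuclear norm, while appending further neurons with $\weight_{1j} = \weight_{2j} = 0$ leaves the optimum unchanged. This same SVD construction furnishes the one-to-one mapping between convex and non-convex optima.

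The main obstacle is not any single inequality — each piece is standard — but rather coupling the variational identity to the available number of neurons, i.e. ensuring that the rank bound $\min\{sd,c\}$ certifies the requirement $m \geq m^*$. I would take care that the lower-bound direction holds for \emph{every} factorization (so the convex program is genuinely a lower bound on $p_{FN}^*$), and that the SVD choice shows it is achievable, thereby closing the gap between the two programs.
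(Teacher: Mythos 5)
Your proposal is correct, but it takes a genuinely different route from the paper. The paper proves this theorem through its general two-lemma duality machinery: it rescales the weights (Lemma~\ref{lem:rescaling}) to obtain a norm-constrained first layer, passes to the Fenchel dual over the second-layer weights (Lemma~\ref{lem:duality}), writes the resulting spectral-norm constraint $\|\sum_i \dualmat_i^\top \mathrm{circ}(\data_i)\|_2 \leq \beta$, forms the Lagrangian with a nuclear-norm-ball multiplier, and applies Sion's minimax theorem before maximizing over the dual variables; the $m^* \leq \min\{sd,c\}$ condition is justified there by citing the dimension count needed for strong duality in the semi-infinite program. You instead exploit the fact that with linear activation the network output depends on the weights only through $\vec{Z} = \sum_j \weight_{1j}\weight_{2j}^\top$, and then invoke the classical variational characterization $\min\{\tfrac12\sum_j(\|\weight_{1j}\|_2^2+\|\weight_{2j}\|_2^2) : \sum_j \weight_{1j}\weight_{2j}^\top = \vec{Z}\} = \|\vec{Z}\|_*$, with the SVD giving achievability in $\mathrm{rank}(\vec{Z}) \leq \min\{sd,c\}$ terms and AM--GM plus subadditivity of the nuclear norm giving the lower bound over \emph{all} factorizations. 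Your argument is more elementary (no conjugates, no minimax exchange, no strong-duality citation) and actually makes the one-to-one mapping between optima immediate; its limitation is that it is specific to the linear case, since for ReLU the first-layer weights enter the nonlinearity and the output is no longer a function of $\vec{Z}$ alone, which is precisely why the paper adopts the duality route uniformly across all of its theorems. Both proofs establish the same statement with the same neuron-count bound.
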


\begin{theorem}\label{theo:relu_fno}
For the ReLU activation FNO training problem \eqref{eq:fno_primal}, we define 
\begin{align*}
    \vec{X} &:= \begin{bmatrix} \mathrm{circ}(\vec{X}_1)\\ \cdots \\ \mathrm{circ}(\vec{X}_n) \end{bmatrix}\\
    \{\vec{D}_j\}_{j=1}^P &:= \{\mathrm{diag}\left(\mathbbm{1}\{\vec{X}\vec{u}_j \geq 0\}\right):\;\vec{u}_j \in \mathbb{R}^{sd}\},
\end{align*}
where \(P \leq 2r\left(\frac{e(n-1)}{r}\right)^r\) and \( r := \mathrm{rank}(\vec{X})\). Then, for \(m \geq m^*\) where \(m^* \leq n\min\{sd, c\}\), the standard non-convex training objective is equivalent to a convex optimization problem, given by 
\begin{align}\label{eq:fno_relu_convex}
    p_{FN}^* &= \min_{\vec{Z}_j \in \mathbb{R}^{sd \times c} }  \sum_{i=1}^n \mathcal{L}\left(\sum_{j=1}^P \vec{D}_j^{(i)}\mathrm{circ}(\vec{X}_i) \vec{Z}_j, \labelmat_i\right) \nonumber \\
    &+ \beta \sum_{j=1}^P \|\vec{Z}_j\|_{*, \mathrm{K}_j},\,
    \vec{K}_j := (2\vec{D}_j - \vec{I}_{ns})\vec{X}.
\end{align}
\end{theorem}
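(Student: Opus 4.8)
The plan is to observe that, once Lemma~\ref{lem:fno} has stripped away the Fourier machinery, the ReLU-activation FNO block is literally a two-layer vector-output ReLU network acting on the circulant data matrix $\mathrm{circ}(\vec{X}_i)$, so the theorem becomes a direct instance of the already-established MLP convexification \eqref{eq:relu_mlp_convex}. Concretely, Lemma~\ref{lem:fno} gives $f_{FN}(\vec{X}_i) = \sum_{j=1}^m \relu{\mathrm{circ}(\vec{X}_i)\weight_{1j}}\weight_{2j}^\top$ with $\mathrm{circ}(\vec{X}_i) \in \mathbb{R}^{s \times sd}$, $\weight_{1j} \in \mathbb{R}^{sd}$, and $\weight_{2j} \in \mathbb{R}^{c}$; this is exactly a two-layer ReLU network of width $m$, input dimension $sd$, and output dimension $c$, evaluated on the $s$ rows of $\mathrm{circ}(\vec{X}_i)$, where the $t$-th row carries label $\vec{Y}_i[t,:]$.

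First I would stack the samples. Because the weights $\weight_{1j},\weight_{2j}$ are shared across $i$ while the objective \eqref{eq:fno_primal} decouples as $\sum_i \mathcal{L}(\cdot, \vec{Y}_i)$, the FNO training problem coincides with a single MLP training problem on the stacked data $\vec{X} = [\mathrm{circ}(\vec{X}_1)^\top \; \cdots \; \mathrm{circ}(\vec{X}_n)^\top]^\top \in \mathbb{R}^{ns \times sd}$ with stacked labels in $\mathbb{R}^{ns \times c}$ and the same weight-decay penalty. The fact that the fitting term is written sample-by-sample is harmless: the summed loss $\tilde{\mathcal{L}}(\hat{\vec{Y}}) := \sum_i \mathcal{L}(\hat{\vec{Y}}^{(i)}, \vec{Y}_i)$ is itself a convex function of the full $ns \times c$ prediction, and the convexification \eqref{eq:relu_mlp_convex} is valid for any convex loss.

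Next I would apply \eqref{eq:relu_mlp_convex} verbatim to this stacked problem. This produces the hyperplane arrangements $\{\vec{D}_j\}_{j=1}^P$ on $\vec{X}$, the fitting term $\sum_{j=1}^P \vec{D}_j \vec{X}\vec{Z}_j$ with $\vec{Z}_j \in \mathbb{R}^{sd \times c}$, and the constrained nuclear-norm penalty $\beta\sum_j \|\vec{Z}_j\|_{*,\mathrm{K}_j}$ with $\mathrm{K}_j = (2\vec{D}_j - \vec{I}_{ns})\vec{X}$, all matching the theorem. Because $\vec{X}$ is row-block-stacked by sample, each diagonal $\vec{D}_j$ breaks into diagonal blocks $\vec{D}_j^{(i)} \in \{0,1\}^{s\times s}$, so the $i$-th row-block of $\vec{D}_j\vec{X}\vec{Z}_j$ equals $\vec{D}_j^{(i)}\mathrm{circ}(\vec{X}_i)\vec{Z}_j$; summing over $j$ and restoring the per-sample loss recovers \eqref{eq:fno_relu_convex}.

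The convexification itself is therefore inherited wholesale and needs no new argument, and the circulant structure of $\mathrm{circ}(\vec{X}_i)$ enters only through the definition of $\vec{X}$. The one point that still requires care—and which I regard as the main obstacle—is the width threshold $m^* \le n\min\{sd, c\}$ together with the one-to-one correspondence between non-convex and convex optima. For this I would appeal to the vector-output reconstruction argument of \cite{sahiner2020vector}: each optimal block $\vec{Z}_j \in \mathbb{R}^{sd \times c}$ admits a rank-$r_j$ factorization into $r_j \le \min\{sd, c\}$ neurons satisfying the arrangement constraint encoded by $\mathrm{K}_j$, and the dimension-counting bound on the number of active arrangements caps the total reconstructed width at $n\min\{sd,c\}$. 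This is the same mechanism used for the analogous self-attention and MLP-Mixer theorems, so the FNO case follows by the identical template with $\mathrm{circ}(\vec{X}_i)$ in place of $\vec{X}_i\otimes\vec{X}_i$ or $\vec{X}_i^\top\otimes\vec{I}_s$.
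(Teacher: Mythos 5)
Your reduction is correct: after Lemma \ref{lem:fno}, the objective \eqref{eq:fno_primal} really is the two-layer vector-output ReLU MLP objective evaluated on the row-stacked data $\vec{X}\in\mathbb{R}^{ns\times sd}$, the per-sample sum of convex losses is a convex function of the stacked $ns\times c$ prediction, and the block-diagonal structure of each $\vec{D}_j$ unstacks $\vec{D}_j\vec{X}\vec{Z}_j$ into the per-sample terms $\vec{D}_j^{(i)}\mathrm{circ}(\vec{X}_i)\vec{Z}_j$ of \eqref{eq:fno_relu_convex}. The paper does not take this shortcut: its proof re-runs the duality template from scratch, applying Lemmas \ref{lem:rescaling} and \ref{lem:duality} to obtain the dual constraint $\max_{\|\weight_1\|_2\le 1}\bigl\|\sum_{i}\dualmat_i^\top(\mathrm{circ}(\data_i)\weight_1)_+\bigr\|_2\le\beta$, then enumerating arrangements, passing to the sets $\mathcal{C}_j$ via the dual norm, forming the Lagrangian, invoking Sion's theorem, and rescaling $\tilde{\vec{Z}}_j=\lambda_j\vec{Z}_j$. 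The underlying machinery is identical --- the paper's derivation is exactly the proof of \eqref{eq:relu_mlp_convex} instantiated at $g(\data_i;\weight_1)=(\mathrm{circ}(\data_i)\weight_1)_+$ --- so your route buys brevity and makes explicit the structural observation the paper only states informally (that FNO is a two-layer CNN with a global circulant kernel), while the paper's route buys uniformity with the self-attention and MLP-Mixer theorems, where the first-layer map is not of the form $\data\weight_1$ and no verbatim reduction to \eqref{eq:relu_mlp_convex} is available. Your handling of the width threshold $m^*\le n\min\{sd,c\}$ via the cone-constrained factorization and dimension counting of \cite{sahiner2020vector} matches the mechanism the paper invokes (citing \cite{shapiro2009semi,pilanci2020neural} at the point where the dual constraint has dimension $c$ and $\weight_1$ has dimension $sd$), so no gap remains.
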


\subsubsection{Block Diagonal FNO}
While the FNO formulation is quite elegant, it requires many parameters (\(d^2\) for each token). Accordingly, modifications have been proposed in the form of Adaptive Fourier Neural Operator (AFNO) \cite{guibas2021adaptive}. One important modification pertains to enforcing the token weights to obey a block diagonal structure. This has been significantly improved the training and generalization ability of AFNO compared with the standard FNO \footnote{A standard AFNO network also includes additional steps, including a soft-thresholding operator, which for simplicity we do not analyze here.}. We call this architecture B-FNO, which boils down to
\begin{align}\label{eq:bfno_model}
    f_{BFN}(\vec{X}_i) &:= \sigma\left( \vec{F}^{-1}
    \begin{bmatrix}(\vec{F}\vec{X}_i)_1^\top \vec{V}^{(1)} \\  \cdots \\ (\vec{F}\vec{X}_i)_{s}^\top \vec{V}^{(s)} \end{bmatrix}  \right)\weightmat_2 \\
    \vec{L}^{(l)} &:= \begin{bmatrix}\vec{L}^{(l, 1)} & & \\
    & \ddots & \\ &&\vec{L}^{(l, B)}\end{bmatrix} \in \mathbb{R}^{d \times m},\, l\in[s] \nonumber \\
     \weightmat_2 &:= \begin{bmatrix}\weightmat_2^{(1)} & & \\
    & \ddots & \\ &&\weightmat_2^{(B)}\end{bmatrix} \in \mathbb{R}^{m \times c},\nonumber
\end{align}
for \(B\) blocks. This can be simplified as
\begin{lemma}\label{lem:bfno}
For weights \(\weightmat_{1b} \in \mathbb{R}^{sd/B \times m/B}\) and \(\weightmat_{2b} \in \mathbb{R}^{m/B \times c/B}\), assuming \(\sigma\) operates element-wise, the B-FNO model \eqref{eq:bfno_model} can be equivalently represented as
\begin{align}\label{eq:bfno_fn}
    f_{BFN}(\vec{X}_i) &= \begin{bmatrix} f_{BFN}^{(1)}(\vec{X}_i) & \cdots & f_{BFN}^{(B)}(\vec{X}_i) \end{bmatrix}\\
    f_{BFN}^{(b)}(\vec{X}_i) &= \sum_{j=1}^m \sigma\left(\mathrm{circ}(\vec{X}_i^{(b)})  \weight_{1bj}\right)\weight_{2bj}^\top \nonumber
\end{align}
where \(\mathrm{circ}(\vec{X}_i^{(b)}) \in \mathbb{R}^{s \times sd/B}\) is a matrix composed of all \(s\) circulant shifts of \(\vec{X}_i^{(b)} \in \mathbb{R}^{s \times d/B}\) along its first dimension.
\end{lemma}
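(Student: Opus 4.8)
The plan is to reduce \cref{lem:bfno} to the already-established \cref{lem:fno} by showing that the block-diagonal weight structure decouples the B-FNO network into \(B\) independent full-FNO subnetworks, each acting on a disjoint group of feature channels. The central observation I would rely on is that the token Fourier transform \(\vec{F} = \vec{F}_h \otimes \vec{F}_w\) and the feature-mixing weights act on orthogonal axes: \(\vec{F}\) operates purely on the spatial/token dimension (the first dimension of \(\vec{X}_i\)) and treats every feature channel identically and linearly, whereas the block-diagonal matrices \(\vec{L}^{(l)}\) and \(\weightmat_2\) operate purely on the feature dimension. Because the DFT in \eqref{eq:bfno_model} is taken along the token axis, each \(\vec{V}^{(l)}\) is a linear combination of the \(\vec{L}^{(l')}\), all of which share the same block pattern; hence every \(\vec{V}^{(l)}\) inherits the same block-diagonal structure in the feature dimension.

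First I would make the decoupling precise. Writing out the argument of \(\sigma\) in \eqref{eq:bfno_model} and substituting the block-diagonal form of each \(\vec{V}^{(l)}\), the columns of the pre-activation matrix split into \(B\) groups, where the \(b\)-th group depends only on the \(b\)-th block of feature channels of \(\vec{F}\vec{X}_i\). Since \(\vec{F}\) acts channelwise on tokens, restricting to feature block \(b\) commutes with the transform, so the \(b\)-th feature block of \(\vec{F}\vec{X}_i\) equals \(\vec{F}\vec{X}_i^{(b)}\); the pre-activation restricted to block \(b\) is therefore exactly the pre-activation of a full FNO block run on the reduced input \(\vec{X}_i^{(b)} \in \mathbb{R}^{s \times d/B}\). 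Because \(\sigma\) operates element-wise it preserves this column-block partition, and because \(\weightmat_2\) is block-diagonal the \(b\)-th output block is produced solely from the \(b\)-th post-activation block. This yields the horizontal concatenation \(f_{BFN}(\vec{X}_i) = [\,f_{BFN}^{(1)}(\vec{X}_i)\ \cdots\ f_{BFN}^{(B)}(\vec{X}_i)\,]\) of \eqref{eq:bfno_fn}, with each \(f_{BFN}^{(b)}\) a self-contained FNO block on \(\vec{X}_i^{(b)}\).

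Next I would apply \cref{lem:fno} verbatim to each sub-block. Since \(f_{BFN}^{(b)}\) has precisely the form of the FNO block \eqref{eq:fno_full_problem_def} with \(\vec{X}_i\) replaced by \(\vec{X}_i^{(b)}\) and feature dimension \(d/B\), the simplification \eqref{eq:fno_simplified} applies directly and gives \(f_{BFN}^{(b)}(\vec{X}_i) = \sum_j \sigma\!\left(\mathrm{circ}(\vec{X}_i^{(b)})\weight_{1bj}\right)\weight_{2bj}^\top\), where \(\mathrm{circ}(\vec{X}_i^{(b)}) \in \mathbb{R}^{s \times sd/B}\) collects the \(s\) circulant token-shifts of \(\vec{X}_i^{(b)}\). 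Reassembling the blocks recovers \eqref{eq:bfno_fn}.

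The hard part will be the bookkeeping needed to justify the axis-orthogonality claim rigorously, i.e. that the token DFT \(\vec{F}_h \otimes \vec{F}_w\) genuinely commutes with the feature-dimension block partition so that ``restrict to feature block \(b\), then transform'' equals ``transform, then restrict''. This is what licenses replacing the global Fourier transform on \(\vec{X}_i\) by independent transforms on each \(\vec{X}_i^{(b)}\), and it is cleanest to verify at the level of the Kronecker/vectorization structure that \(\vec{F}\) induces on the token index while leaving the feature index untouched. Once this commutation is established, no analysis beyond \cref{lem:fno} is required and the remainder is immediate.
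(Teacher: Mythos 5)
Your proof is correct and uses the same two ingredients as the paper's, but assembles them in the opposite order. The paper first runs the circulant simplification of \cref{lem:fno} on the \emph{whole} network, rewriting the Fourier-domain per-token multiplication as $\begin{bmatrix} \vec{X}_i & {\vec{X}_i}_{(1)} & \cdots & {\vec{X}_i}_{(s)}\end{bmatrix}\vec{L}$ with $\vec{L}$ reshaped to $\mathbb{R}^{sd\times d}$, and only \emph{then} permutes rows and columns of the spatial-domain $\vec{L}$ to exploit its block-diagonal structure and split the columns into the $B$ groups $\mathrm{circ}(\vec{X}_i^{(b)})\vec{L}^{(b)}$, finishing with element-wise $\sigma$ and block-diagonal $\weightmat_2$. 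You decouple first and then invoke \cref{lem:fno} once per block. Your ordering obliges you to establish one fact the paper never needs: that each Fourier-domain weight $\vec{V}^{(l)}=\sum_{l'}\vec{F}[l,l']\,\vec{L}^{(l')}$ inherits the common block pattern of the $\vec{L}^{(l')}$, and that restricting to a feature block commutes with the token-axis DFT so that the $b$-th feature block of $\vec{F}\vec{X}_i$ is $\vec{F}\vec{X}_i^{(b)}$. Both claims are true and routine, since the DFT acts only on the token index; so the ``hard part'' you flag is real but not deep. What your route buys is modularity --- \cref{lem:fno} is used as a black box on each reduced input $\vec{X}_i^{(b)}\in\mathbb{R}^{s\times d/B}$ rather than having its algebra re-derived inline --- at the price of the extra commutation argument, which the paper sidesteps by returning to the spatial domain before touching the block structure. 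One small mismatch when you apply \cref{lem:fno} ``verbatim'': the B-FNO map \eqref{eq:bfno_model} has no separate $\weightmat_1$ inside the activation, so you are really using only the circulant identity from that lemma's proof, with the final merge of $\vec{L}\weightmat_1$ omitted; this is harmless but should be stated.
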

Interestingly, the block-diagonal weights of AFNO contrast the {\it local} convolution in CNNs with a {\it global} and \emph{group-wise} convolution with \(B\) groups. We thus define
\begin{align} \label{eq:bfno_primal}
p_{BFN}^* &:= \min_{\weightmat_{1bj}, \weightmat_{2bj}} \sum_{i=1}^n \mathcal{L}(f_{BFN}(\vec{X}_i), \vec{Y}_i) \nonumber \\
    &+ \frac{\beta}{2}\sum_{b=1}^B \sum_{j=1}^m\|\weight_{1bj}\|_2^2 + \|\weight_{2bj}\|_2^2. 
\end{align}

\begin{theorem}\label{theo:linear_bfno}
For the linear activation B-FNO training problem \eqref{eq:bfno_primal}, for \(m \geq m^*\) where \(m^* \leq 1/B\min\{sd, c\}\), the standard non-convex training objective is equivalent to a convex optimization problem, given by 
\begin{align}\label{eq:bfno_linear_convex}
    p_{BFN}^* &= \min_{\vec{Z}_b}  \sum_{i=1}^n \mathcal{L}\left(\begin{bmatrix} f^{(1)}(\vec{X}_i) & \cdots &  f^{(B)}(\vec{X}_i)  \end{bmatrix}, \labelmat_i\right), \nonumber \\
    &+ \beta \sum_{b=1}^B \|\vec{Z}_b\|_{*} \\
    \vec{Z}_b &\in \mathbb{R}^{sd/B \times c/B},\, f^{(b)} := \mathrm{circ}(\vec{X}_i^{(b)})\vec{Z}_b. 
    \nonumber 
\end{align}
\end{theorem}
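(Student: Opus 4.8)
The plan is to reduce the linear B-FNO problem to $B$ coupled nuclear-norm regularized linear models by exploiting the block structure from Lemma \ref{lem:bfno} together with the variational characterization of the nuclear norm. First I would specialize Lemma \ref{lem:bfno} to linear activation: since $\sigma = \mathrm{id}$, the $b$-th output block collapses to
\begin{equation*}
    f_{BFN}^{(b)}(\vec{X}_i) = \sum_{j=1}^m \mathrm{circ}(\vec{X}_i^{(b)})\weight_{1bj}\weight_{2bj}^\top = \mathrm{circ}(\vec{X}_i^{(b)})\vec{Z}_b, \quad \vec{Z}_b := \sum_{j=1}^m \weight_{1bj}\weight_{2bj}^\top,
\end{equation*}
so that the fitting term depends on the weights only through the products $\{\vec{Z}_b\}_{b=1}^B$, with $\vec{Z}_b \in \mathbb{R}^{sd/B \times c/B}$. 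This already matches the linear predictor $f^{(b)} = \mathrm{circ}(\vec{X}_i^{(b)})\vec{Z}_b$ appearing in the claimed convex program.

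The heart of the argument is to show that, after this reparametrization, the weight-decay penalty minimizes to the block nuclear norm. Since both the objective $\frac{\beta}{2}\sum_b \sum_j (\|\weight_{1bj}\|_2^2 + \|\weight_{2bj}\|_2^2)$ and the factorization constraints $\sum_j \weight_{1bj}\weight_{2bj}^\top = \vec{Z}_b$ are separable across $b$, I would fix the predictions $\{\vec{Z}_b\}$ (thereby fixing the loss) and minimize the regularizer independently for each block. For the lower bound I invoke the standard inequality $\frac{1}{2}\sum_j (\|\weight_{1bj}\|_2^2 + \|\weight_{2bj}\|_2^2) \geq \sum_j \|\weight_{1bj}\|_2\|\weight_{2bj}\|_2 \geq \|\vec{Z}_b\|_*$, yielding $p_{BFN}^* \geq$ the convex value. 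For the reverse direction I take the SVD $\vec{Z}_b = \sum_k \sigma_k \vec{u}_k \vec{v}_k^\top$ and set $\weight_{1bj} = \sqrt{\sigma_j}\,\vec{u}_j$, $\weight_{2bj} = \sqrt{\sigma_j}\,\vec{v}_j$, which attains $\frac{1}{2}\sum_j (\|\weight_{1bj}\|_2^2 + \|\weight_{2bj}\|_2^2) = \|\vec{Z}_b\|_*$ and hence $p_{BFN}^* \leq$ the convex value. Summing over $b$ produces the penalty $\beta\sum_{b=1}^B \|\vec{Z}_b\|_*$.

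Finally, the neuron-count condition is verified by the achievability construction: realizing each $\vec{Z}_b$ via its SVD requires $m \geq \mathrm{rank}(\vec{Z}_b)$ rank-one terms, and since $\vec{Z}_b \in \mathbb{R}^{sd/B \times c/B}$ we have $\mathrm{rank}(\vec{Z}_b) \leq \frac{1}{B}\min\{sd, c\}$, giving $m^* \leq \frac{1}{B}\min\{sd, c\}$. This same construction furnishes the explicit map from an optimal convex solution back to optimal non-convex weights.

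The step I expect to be the main obstacle is not the per-block nuclear-norm identity (which is classical) but rather the bookkeeping around the coupling induced by the joint loss: because the $B$ blocks all feed into the same target $\vec{Y}_i$ through the concatenated prediction, one cannot simply split the problem into $B$ independent linear-FNO instances and cite Theorem \ref{theo:linear_fno}. The careful point is to observe that this coupling lives entirely in the fitting term, which sees the weights only through $\{\vec{Z}_b\}$; consequently the inner minimization over the factorizations --- the only place the rescaling argument is needed --- remains fully separable across blocks, so the per-block identity applies verbatim and the nuclear norms simply add.
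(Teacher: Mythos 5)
Your proof is correct, but it takes a genuinely different route from the paper. The paper runs every architecture, including linear B-FNO, through its uniform duality pipeline (Lemmas \ref{lem:rescaling} and \ref{lem:duality}): rescale the weights, pass to the Fenchel dual over the second-layer weights to obtain a per-block spectral-norm constraint \(\|\sum_i {\dualmat_i^{(b)}}^\top \mathrm{circ}(\data_i^{(b)})\|_2 \leq \beta\), form the Lagrangian, invoke Sion's minimax theorem, maximize over the dual variables, and rescale; the bound on \(m^*\) comes from the dimension of the dual constraints via the cited semi-infinite-programming result. You instead stay entirely in the primal: for linear activation the network output factors through \(\vec{Z}_b = \sum_j \weight_{1bj}\weight_{2bj}^\top\), so you can freeze the loss and minimize the weight decay over factorizations block by block, using the classical variational characterization \(\|\vec{Z}_b\|_* = \min_{\sum_j \vec{u}_j\vec{v}_j^\top = \vec{Z}_b} \sum_j \|\vec{u}_j\|_2\|\vec{v}_j\|_2\) (lower bound by AM--GM plus the triangle inequality for the nuclear norm, upper bound by the SVD construction). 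Your handling of the coupling is the right observation: the concatenated loss ties the blocks together only through \(\{\vec{Z}_b\}\), so the inner factorization problem remains separable and the nuclear norms add. What each approach buys: yours is more elementary and self-contained, yields the explicit convex-to-nonconvex weight map for free, and derives \(m^*\) transparently as the maximal rank of the optimal \(\vec{Z}_b^*\), which is bounded by \(\frac{1}{B}\min\{sd,c\}\); the paper's duality route is heavier but is the only one of the two that extends to the ReLU and gated-ReLU variants (Theorem \ref{theo:relu_bfno}), where the output is no longer a function of the products \(\weight_{1bj}\weight_{2bj}^\top\) alone and the hyperplane-arrangement machinery becomes essential.
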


\begin{theorem}\label{theo:relu_bfno}
For the ReLU activation B-FNO training problem \eqref{eq:bfno_primal}, we define 
\begin{align*}
    \vec{X}_b &:= \begin{bmatrix} \mathrm{circ}(\vec{X}_1^{(b)})\\ \cdots \\ \mathrm{circ}(\vec{X}_n^{(b)}) \end{bmatrix} \\
    \{\vec{D}_{b,j}\}_{j=1}^{P_b} &:= \{\mathrm{diag}\left(\mathbbm{1}\{\vec{X}_b\vec{u}_j \geq 0\}\right):\;\vec{u}_j \in \mathbb{R}^{sd/B}\},
\end{align*}
where \(P_b \leq 2r_b\left(\frac{e(n-1)}{r_b}\right)^{r_b}\) and \( r_b := \mathrm{rank}(\vec{X}_b)\). Then, for \(m \geq m^*\) where \(m^* \leq n/B\min\{sd, c\}\), the standard non-convex training objective is equivalent to a convex optimization problem, given by 
\begin{align}\label{eq:bfno_relu_convex}
    p_{BFN}^* &= \min_{\vec{Z}_{b, j} }  \sum_{i=1}^n \mathcal{L}\left(\begin{bmatrix} f^{(1)}(\vec{X}_i) & \cdots &  f^{(B)}(\vec{X}_i)  \end{bmatrix}, \labelmat_i\right) \nonumber \\
    &+ \beta \sum_{b=1}^B \sum_{j=1}^{P_b} \|\vec{Z}_{j, b}\|_{*, \mathrm{K}_{b,j}},
\end{align}
where
\begin{align*}
    \vec{Z}_{b,j} &\in \mathbb{R}^{sd/B \times c/B}, \, f^{(b)}(\data_i) := \sum_{j=1}^{P_b} \vec{D}_{b,j}\mathrm{circ}(\vec{X}_i^{(b)})\vec{Z}_{b,j} \\
    \vec{K}_{b,j} &:= (2\vec{D}_{b,j} - \vec{I}_{ns})\vec{X}_b.
\end{align*}
\end{theorem}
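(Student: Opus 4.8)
The plan is to prove Theorem~\ref{theo:relu_bfno} by reduction to the single-block result of Theorem~\ref{theo:relu_fno}, exploiting the fact that the block-diagonal weight structure of B-FNO makes the entire network separate into $B$ independent two-layer ReLU sub-networks. First I would invoke Lemma~\ref{lem:bfno} to rewrite the model as a concatenation $f_{BFN}(\vec{X}_i) = [\,f^{(1)}_{BFN}(\vec{X}_i)\ \cdots\ f^{(B)}_{BFN}(\vec{X}_i)\,]$, where the $b$-th sub-network $f^{(b)}_{BFN}(\vec{X}_i) = \sum_{j=1}^m \left(\mathrm{circ}(\vec{X}_i^{(b)})\weight_{1bj}\right)_+ \weight_{2bj}^\top$ depends only on the block weights $\{\weight_{1bj}, \weight_{2bj}\}_j$, acts on the effective data matrix $\mathrm{circ}(\vec{X}_i^{(b)}) \in \mathbb{R}^{s \times sd/B}$, and produces the $b$-th column block (of width $c/B$) of the output. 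The key structural fact is that the weight-decay penalty in \eqref{eq:bfno_primal} is already written as $\sum_{b=1}^B \sum_{j=1}^m (\|\weight_{1bj}\|_2^2 + \|\weight_{2bj}\|_2^2)$, i.e. it is block-separable.

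Next I would recast the joint primal as an optimization over achievable prediction tuples. Since each $f^{(b)}_{BFN}$ is parametrized by disjoint weights and the regularizer is block-separable, the minimal regularized cost to realize a target prediction $\widehat{f}^{(b)}$ for block $b$ is exactly the representation cost of a two-layer ReLU network on the stacked data $\vec{X}_b := [\mathrm{circ}(\vec{X}_1^{(b)}); \cdots; \mathrm{circ}(\vec{X}_n^{(b)})]$, independently of the other blocks. This representation cost is precisely the constrained nuclear norm $\|\cdot\|_{*,\mathrm{K}_{b,j}}$ summed over the hyperplane arrangements $\{\vec{D}_{b,j}\}_{j=1}^{P_b}$ of $\vec{X}_b$, exactly as established for the single-block FNO in Theorem~\ref{theo:relu_fno} and, more generally, in the convex ReLU MLP equivalence \eqref{eq:relu_mlp_convex}. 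Because the outer loss $\mathcal{L}$ sees only the concatenation of the $\{\widehat{f}^{(b)}\}$, substituting each block's representation cost yields the block-summed objective $\beta \sum_{b=1}^B \sum_{j=1}^{P_b} \|\vec{Z}_{b,j}\|_{*,\mathrm{K}_{b,j}}$ with predictions $f^{(b)}(\data_i) = \sum_{j=1}^{P_b} \vec{D}_{b,j}\,\mathrm{circ}(\vec{X}_i^{(b)})\,\vec{Z}_{b,j}$, which is exactly \eqref{eq:bfno_relu_convex}.

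The neuron-count requirement $m \geq m^*$ with $m^* \leq (n/B)\min\{sd, c\}$ then follows by applying the single-block bound of Theorem~\ref{theo:relu_fno} to each $\vec{X}_b \in \mathbb{R}^{ns \times sd/B}$ with output width $c/B$, i.e. substituting $sd \mapsto sd/B$ and $c \mapsto c/B$ into the single-block bound $m^* \leq n\min\{sd, c\}$. This is the Carathéodory-type step guaranteeing that an optimal convex solution (a sum over arrangements of the dual matrices $\vec{Z}_{b,j}$) can be decomposed back into finitely many neurons per block without increasing the objective; it simultaneously furnishes the reverse direction of the equivalence and the one-to-one mapping between convex and non-convex optima.

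I expect the main obstacle to be the rigorous justification that the joint non-convex minimization truly decouples into $B$ per-block convex problems despite the loss $\mathcal{L}$ coupling the blocks through the concatenated output. The resolution is to argue at the level of predictions rather than weights: because the regularizer is block-separable and each block's prediction is controlled by its own weights, the inner minimization (minimal norm to achieve a given prediction) factorizes block-by-block, and only the outer minimization over predictions feels the coupling through $\mathcal{L}$; this lets the exact single-block convex characterization be applied verbatim to each block. A secondary technical point is verifying that the arrangement sets $\{\vec{D}_{b,j}\}$ and the induced constraint matrices $\vec{K}_{b,j} = (2\vec{D}_{b,j} - \vec{I}_{ns})\vec{X}_b$ are the correct ones for the stacked circulant data $\vec{X}_b$, which again follows directly from the single-block construction in Theorem~\ref{theo:relu_fno}.
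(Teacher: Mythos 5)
Your proposal is correct, but it is organized differently from the paper's proof. The paper does not reduce to Theorem~\ref{theo:relu_fno}: it re-runs the full duality pipeline (Lemmas~\ref{lem:rescaling} and~\ref{lem:duality}, enumeration of the arrangements $\vec{D}_{b,j}$, the dual-norm step, the Lagrangian with multipliers $\lambda_{b,j}$, and Sion's minimax theorem) directly on the coupled B-FNO objective, and the block structure enters only through the observation that each neuron $(b,j)$ touches only the column block $\dualmat_i^{(b)}$ of the dual variable, so the dual constraint set is a product over $b\in[B]$ and the Lagrangian separates. Your route instead factorizes at the level of predictions: block-separable weights and a block-separable regularizer let you replace the inner minimization by a per-block representation cost, to which the single-block characterization is applied. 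This is a cleaner modular argument and correctly handles the coupling through $\mathcal{L}$, but it needs one step you should make explicit: Theorem~\ref{theo:relu_fno} as stated characterizes the optimal value of a training problem \emph{including} a loss term, not the representation cost $R_b(\hat f^{(b)})=\min\{\tfrac{\beta}{2}\sum_j(\|\weight_{1bj}\|_2^2+\|\weight_{2bj}\|_2^2):\ \text{weights realize }\hat f^{(b)}\}$. To apply it per block you must either re-derive the single-block result with $\mathcal{L}$ replaced by the (extended-real-valued convex) indicator of a fixed prediction tuple --- checking that the Fenchel-conjugate and minimax-switch steps of Lemma~\ref{lem:duality} still go through, which they do since Slater's condition there depends only on $\beta>0$ --- or cite a representation-cost version of the convex ReLU equivalence. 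Your dimension counting for $m^*$ (substituting $sd\mapsto sd/B$, $c\mapsto c/B$ into the single-block bound) matches the paper's, and the identification of $\vec{X}_b$, $\vec{D}_{b,j}$, and $\vec{K}_{b,j}$ is exactly the paper's. What your approach buys is reusability (the per-block reduction would apply verbatim to any architecture whose output concatenates disjointly parametrized sub-networks with separable regularization); what the paper's direct dualization buys is that no indicator-loss variant of the single-block theorem is needed.
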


\section{Numerical Results}
In this section, we seek to compare the performance of the transformer heads we have analyzed in this work to baseline convex optimization methods. This comparison allows us to illustrate the implicit biases imposed by these novel heads in a practical example. In particular, we consider the task of training a single new block of these convex heads for performing an image classification task. This is essentially the task of transfer learning {\it without fine-tuning existing weights of the backbone network}, which may be essential in computation and memory-constrained settings at the edge. For few-shot fine-tuning transformer tasks, non-convex optimization has observed to be unstable under different random initializations \cite{mosbach2020stability}. Furthermore, fine-tuning only the final layer a network is a common practice, which performs very well in spurious correlation benchmarks \cite{kirichenko2022last}. \\\\
Specifically, we seek to classify images from the CIFAR-100 dataset \cite{krizhevsky2009learning}. We first generate embeddings from a pretrained gMLP-S model \cite{liu2021pay} on \(224 \times 224\) images from the ImageNet-1k dataset \cite{deng2009imagenet} with \(16 \times 16\) patches (\(s=196\), \(d=256\)). We then finetune the \emph{single convex head} to classify images from CIFAR-100, while leaving the pre-trained backbone fixed. \\\\
For the backbone gMLP architecture, we reduce the feature dimension to \(d=100\) with average pooling as a pre-processing step before training with the convex heads.  Similarly, for computational efficiency, we train the Gated ReLU variants of the standard ReLU architectures, since these Gated ReLU activation networks are unconstrained. For BFNO, we choose \(B=5\). All the heads use identical dimensions \((d=100, s=196, c=100)\), and we choose the number of neurons in ReLU heads to be \(m=100\), except in the case of self-attention, where we choose \(m=5\) to have the parameter count be roughly equal across heads (see Table \ref{tab:params} in Appendix \ref{sec:appendix}). As our baseline, we compare to a simple linear model (i.e. logistic regression) and to convex equivalents of MLPs as discussed in Section \ref{sec:cvx_background}. \\\\
We summarize the results in Table \ref{tab:results}. Here, we demonstrate that the attention variants outperform the standard convex MLP and linear baselines. This suggests that the higher-order information and additional degrees of freedom of the attention architectures provide advantages for difficult vision tasks. Surprisingly, for self-attention, FNO, and MLP-Mixer, there is only a marginal gap between the linear and ReLU activation performance, suggesting most of the benefit of these architectures is in their fundamental structure, rather than the nonlinearity which is applied. In contrast, for B-FNO, there is a very large gap between ReLU and linear activation accuracies, suggesting that this nonlinearity is more crucial when group convolutions are applied. These convexified architectures thus pave the way towards stable and transparent models for transfer learning. 
\begin{table}[t]
\caption{CIFAR-100 classification accuracy for training a single \emph{convex} head. Embeddings are generated from gMLP-S pre-trained on ImageNet. Note that the backbone is not fine-tuned.}
\label{tab:results}
\begin{center}
\begin{small}
\begin{sc}
\begin{tabular}{lcccc}
\toprule
Convex Head & Act. & Top-1 & Top-5\\
\midrule
Self-Attention & \multirow{6}*{Linear} & 73.81 & 92.87\\
MLP-Mixer & & \bf{78.11} & \bf{94.79} \\
B-FNO & & 68.68 & 90.62\\
FNO & & 72.29 & 93.03  \\
MLP & & 65.95 & 89.33 \\
Linear & & 66.42 & 89.27\\
\midrule
Self-Attention & \multirow{5}*{ReLU} & 74.74 & 93.45\\
MLP-Mixer & & \bf{80.22} & \bf{95.79}\\
B-FNO & & 77.65 & 94.97\\
FNO & & 72.93 & 92.71 \\
MLP & & 73.05 & 92.52 \\ 
\bottomrule
\end{tabular}
\end{sc}
\end{small}
\end{center}
\vskip -0.1in
\end{table}

\section{Conclusion}
We demonstrated that blocks of self-attention and common alternatives such as MLP-Mixer, FNO, and B-FNO are equivalent to convex optimization problems in the case of linear and ReLU activations. These equivalent convex formulations implicitly cluster correlated features, and are penalized with a block nuclear norm regularizer which ensures a global representation. For future work, it remains to craft efficient approximate solvers of these networks by leveraging the structure of these unique regularizers. Faster solvers may implemented, such as FISTA or related algorithms (some of which were explored in the context of convex MLPs \cite{mishkin2022fast}). In the long term for practical adoption, future theoretical work would also require analysis of deeper networks as are often used in practice. One may use this work in designing new network architectures by specifying the desired convex formulation. 

\subsubsection*{Acknowledgments}
This work was partially supported by the National Science Foundation under grants ECCS-2037304, DMS-2134248, the Army Research Office. Research reported in this work was supported by the National Institutes of Health under award numbers R01EB009690 and U01EB029427. The content is solely the responsibility of the authors and does not necessarily represent the official views of the National Institutes of Health. 
\newpage

\bibliography{icml2022}
\bibliographystyle{icml2022}

\clearpage
\onecolumn
\appendix
\addcontentsline{toc}{section}{Appendix} 
\part{Appendix} 
\parttoc 

\section{Proofs}\label{sec:proofs}

\subsection{Preliminaries}
Here, we describe the general technique which allows for convex duality of self-attention and similar architectures. In particular, all of the proofs for the theorems in the subsequent sections follow the following general form:
\begin{enumerate}
    \item Re-scale the weights such that the optimization problem has a Frobenius norm penalty on the second layer weights with a norm constraint on the first layer weights. 
    \item Form the dual problem over the second-layer weights, creating a dual constraint which depends on the first layer weights. 
    \item Solve the dual constraint over the first layer weights.
    \item Form the Lagrangian problem and solve over the dual weights. 
    \item Make any simplifications as necessary. 
\end{enumerate}
We describe the first two steps here, and use it in the following proofs. 
\begin{lemma}\label{lem:rescaling}
Suppose we are given an optimization problem of the form
\begin{equation}
    p^* := \min_{\weightmat_{1j}, \weightmat_{2j}} \sum_{i=1}^n \mathcal{L}(\sum_{j=1}^m g(\data_i; \weightmat_{1j}) \weightmat_{2j}, \labelmat_i) + \frac{\beta}{2} \sum_{j=1}^m \|\weightmat_{1j}\|_F^2 +  \|\weightmat_{2j}\|_F^2,
\end{equation}
where \(g(\data_i; \weightmat_{1j})\) is any function such that \(g(\data_i; \alpha_j\weightmat_{1j}) = \alpha_jg(\data_i; \weightmat_{1j})\) for \(\alpha_j >0\). Then, this problem is equivalent to 
\begin{equation}
    p^* = \min_{\|\weightmat_{1j}\|_F \leq 1, \weightmat_{2j}} \sum_{i=1}^n \mathcal{L}(\sum_{j=1}^m g(\data_i; \weightmat_{1j}) \weightmat_{2j}, \labelmat_i) + \beta \sum_{j=1}^m  \|\weightmat_{2j}\|_F.
\end{equation}
\end{lemma}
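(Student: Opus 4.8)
The plan is to exploit the degree-one positive homogeneity of $g$ in its weight argument, which lets us rescale each pair $(\weightmat_{1j}, \weightmat_{2j}) \mapsto (\alpha_j \weightmat_{1j}, \alpha_j^{-1}\weightmat_{2j})$ for any $\alpha_j > 0$ without changing the fitting term, since $g(\data_i; \alpha_j \weightmat_{1j})\,\alpha_j^{-1}\weightmat_{2j} = g(\data_i; \weightmat_{1j})\weightmat_{2j}$. I would first record the consequence $g(\data_i; 0) = 0$ for all $i$: applying homogeneity with $\alpha = 2$ to the zero weight gives $g(\data_i; 0) = 2\, g(\data_i; 0)$, which forces the term to vanish. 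This lets any summand with a zero first-layer weight be removed cleanly, avoiding spurious penalty.

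I would then prove the identity in two stages. Stage one rewrites the weight-decay penalty $\frac{\beta}{2}\sum_j(\|\weightmat_{1j}\|_F^2 + \|\weightmat_{2j}\|_F^2)$ as the product penalty $\beta\sum_j \|\weightmat_{1j}\|_F\|\weightmat_{2j}\|_F$. One direction is immediate from the arithmetic–geometric mean inequality, $\|\weightmat_{1j}\|_F\|\weightmat_{2j}\|_F \le \tfrac12(\|\weightmat_{1j}\|_F^2 + \|\weightmat_{2j}\|_F^2)$, so the product-penalized value is no larger. For the reverse direction, given any feasible point I would rescale each index $j$ with $\weightmat_{1j}\neq 0$ by $\alpha_j = (\|\weightmat_{2j}\|_F/\|\weightmat_{1j}\|_F)^{1/2}$, making $\|\weightmat_{1j}\|_F = \|\weightmat_{2j}\|_F$ so that AM–GM is tight; the fit and the product penalty are both unchanged, while the weight-decay penalty now equals the product penalty. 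For indices with $\weightmat_{1j}=0$ I set $\weightmat_{2j}=0$ as well, which preserves the (already zero) fit contribution by $g(\data_i;0)=0$ and only lowers the penalty. Taking infima yields equality of the two optimal values.

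Stage two converts the product penalty into the unit-ball-constrained form. Once more, one direction follows since any point with $\|\weightmat_{1j}\|_F \le 1$ obeys $\|\weightmat_{1j}\|_F\|\weightmat_{2j}\|_F \le \|\weightmat_{2j}\|_F$, so the constrained value dominates the product-penalized value. For the reverse, given $(\weightmat_{1j}, \weightmat_{2j})$ with $\weightmat_{1j}\neq 0$ I normalize via $\alpha_j = 1/\|\weightmat_{1j}\|_F$, producing a unit-norm first factor (hence feasible) and absorbing the scale into $\weightmat_{2j}$; the fit is preserved and the product penalty becomes exactly $\|\weightmat_{2j}\|_F$ at the rescaled point. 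When $\weightmat_{1j}=0$ the term contributes nothing to the fit, so both factors are zeroed at no cost. Taking infima closes the loop and gives $p^*$ equal to the constrained objective.

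The argument is elementary, so there is no deep obstacle; the care lies in handling the degenerate summands cleanly. The one point worth stating explicitly is that $g(\data_i;0)=0$ is a free consequence of homogeneity (no continuity assumption needed), which is precisely what is required to discard zero first-layer weights without penalty. The remaining subtlety is confirming that the inequality constraint $\|\weightmat_{1j}\|_F \le 1$, rather than equality, is adequate: this holds because trading a smaller $\weightmat_{2j}$ for a first-layer weight pushed out to the unit sphere can only decrease the objective, so the two feasible sets attain the same infimum.
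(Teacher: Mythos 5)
Your proof is correct and follows essentially the same route as the paper's: exploit positive homogeneity to rescale each pair $(\weightmat_{1j},\weightmat_{2j})$, minimize over the scale via AM--GM to obtain the product penalty $\beta\sum_j\|\weightmat_{1j}\|_F\|\weightmat_{2j}\|_F$, then normalize the first-layer weights and relax to the unit ball. The only difference is that you spell out the degenerate cases (deriving $g(\data_i;0)=0$ from homogeneity and handling zero summands) and justify the relaxation to an inequality constraint explicitly, details the paper leaves implicit or delegates to cited prior work.
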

\begin{proof}
Due to the form of \(g(\data_i; \weightmat_{1j})\), we can rescale the parameters as \(\bar{\weightmat}_{1j} = \alpha_j \bar{\weightmat}_{1j}\), \(\bar{\weightmat}_{2j} = \bar{\weightmat}_{2j}/\alpha_j\) for \(\alpha_j > 0\) without changing the network output. Then, to minimize the regularization term, we can write the problem as 
\begin{equation}
    p^* := \min_{\weightmat_{1j}, \weightmat_{2j}} \min_{\alpha_j> 0} \sum_{i=1}^n \mathcal{L}(\sum_{j=1}^m g(\data_i; \weightmat_{1j}) \weightmat_{2j}, \labelmat_i) + \frac{\beta}{2} \sum_{j=1}^m \alpha_j^2 \|\weightmat_{1j}\|_F^2 +  \|\weightmat_{2j}\|_F^2/\alpha_j^2.
\end{equation}
Solving this minimization problem over \(\alpha_j\) \cite{savarese2019infinite,sahiner2020vector}, we obtain 
\begin{equation}
    p^* := \min_{\weightmat_{1j}, \weightmat_{2j}} \sum_{i=1}^n \mathcal{L}(\sum_{j=1}^m g(\data_i; \weightmat_{1j}) \weightmat_{2j}, \labelmat_i) + \beta \sum_{j=1}^m \|\weightmat_{1j}\|_F \|\weightmat_{2j}\|_F.
\end{equation}
We can thus set \(\|\weightmat_{1j}\|_F = 1\) without loss of generality, and further relaxing this to \(\|\weightmat_{1j}\|_F \leq 1\) does not change the optimal solution. Thus, we are left with the desired result:
\begin{equation}
    p^* = \min_{\|\weightmat_{1j}\|_F \leq 1, \weightmat_{2j}} \sum_{i=1}^n \mathcal{L}(\sum_{j=1}^m g(\data_i; \weightmat_{1j}) \weightmat_{2j}, \labelmat_i) + \beta \sum_{j=1}^m  \|\weightmat_{2j}\|_F.
\end{equation}
\end{proof}
Note that the assumption on \(g\) encapsulates all architectures studied in this work: self-attention, MLP-Mixer, FNO, B-FNO, and other extensions with ReLU, gated ReLU, or linear activation functions all satisfy this property. 
\begin{lemma}\label{lem:duality}
Suppose that
\begin{equation}
    p^* := \min_{\|\weightmat_{1j}\|_F \leq 1, \weightmat_{2j}} \sum_{i=1}^n \mathcal{L}(\sum_{j=1}^m g(\data_i; \weightmat_{1j}) \weightmat_{2j}, \labelmat_i) + \beta \sum_{j=1}^m  \|\weightmat_{2j}\|_F.
\end{equation}
Then, for all \(\beta > 0\), if \(m \geq m^*\) for some \(m^*\), this optimization problem is equivalent to 
\begin{align}
    p^* &= \max_{\dualmat_i} -\sum_{i=1}^n \mathcal{L}^*(\dualmat_i, \labelmat_i) \nonumber \\
    \text{s.t.}~&\max_{\|\weightmat_1\|_F \leq 1} \|\sum_{i=1}^n \dualmat_i^\top g(\data_i; \weightmat_{1})\|_F \leq \beta,
\end{align}
where \(\mathcal{L}^*\) is the Fenchel conjugate of \(\mathcal{L}\). 
\end{lemma}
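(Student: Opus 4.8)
The plan is to establish a minimax duality for the weight-constrained problem in Lemma~\ref{lem:rescaling}, treating the second-layer weights $\weightmat_{2j}$ as the primal variables over which we dualize, while the first-layer weights $\weightmat_{1j}$ remain fixed inside the feature maps $g(\data_i;\weightmat_{1j})$. The key observation is that, for fixed $\{\weightmat_{1j}\}$, the objective is convex in $\{\weightmat_{2j}\}$: the loss term is convex by assumption on $\mathcal{L}$, and the penalty $\beta\sum_j\|\weightmat_{2j}\|_F$ is a norm. So the natural route is Fenchel--Rockafellar / Lagrangian duality applied to this inner minimization.

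First I would introduce a dual variable $\dualmat_i$ for each sample by rewriting the prediction $\sum_j g(\data_i;\weightmat_{1j})\weightmat_{2j}$ via an auxiliary variable and a constraint, then forming the Lagrangian. Passing to the dual turns the loss term $\mathcal{L}(\cdot,\labelmat_i)$ into its Fenchel conjugate $-\mathcal{L}^*(\dualmat_i,\labelmat_i)$, which explains the sign and the form of the objective $-\sum_i\mathcal{L}^*(\dualmat_i,\labelmat_i)$. The $\ell_2$-norm penalty on each $\weightmat_{2j}$ is self-dual (its dual norm is again the $\ell_2$/Frobenius norm), so minimizing the Lagrangian over $\weightmat_{2j}$ produces the spectral-type constraint $\|\sum_i\dualmat_i^\top g(\data_i;\weightmat_{1j})\|_F\le\beta$ for each neuron $j$. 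Taking the worst case over the allowed first-layer weights $\|\weightmat_1\|_F\le 1$ yields the single constraint $\max_{\|\weightmat_1\|_F\le 1}\|\sum_i\dualmat_i^\top g(\data_i;\weightmat_1)\|_F\le\beta$, as stated.

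The subtle part — and the reason the hypothesis $m\ge m^*$ appears — is arguing \emph{strong} duality, i.e. that the primal optimum equals the dual optimum with no gap, and in particular that the finite-neuron primal can realize the value of the dual (which implicitly ranges over all admissible $\weightmat_1$ in its constraint). The standard mechanism here is a semi-infinite / conic duality argument: the bidual over measures on the constraint set $\{\weightmat_1:\|\weightmat_1\|_F\le1\}$ recovers the primal, and a representer/Carathéodory-type argument bounds the number of active atoms needed by the dimension of the output, giving an explicit $m^*$. I would invoke the existing machinery from \cite{sahiner2020vector,pilanci2020neural} for this step rather than rederive it, noting that the convexity in $\weightmat_{2j}$ and the positive-homogeneity of $g$ (already used in Lemma~\ref{lem:rescaling}) are exactly the structural properties those results require.

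The main obstacle I anticipate is rigorously justifying the exchange of min and max (Sion's minimax theorem or a Slater-type constraint qualification) and certifying that the threshold $m^*$ is finite and bounded as claimed — the rest is a fairly mechanical conjugation computation. Since the downstream theorems each specialize $g$ and read off the explicit constraint set, I would keep this lemma abstract and defer the architecture-specific evaluation of $\max_{\|\weightmat_1\|_F\le1}\|\sum_i\dualmat_i^\top g(\data_i;\weightmat_1)\|_F$ to the individual proofs.
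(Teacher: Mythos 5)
Your proposal is correct and follows essentially the same route as the paper: introduce auxiliary prediction variables, form the Lagrangian so that the loss becomes its Fenchel conjugate, eliminate $\weightmat_{2j}$ via the self-dual Frobenius norm to obtain the per-neuron constraint, and then invoke Sion's theorem together with a Slater/semi-infinite-duality argument (with $m^*$ bounded by the constraint dimension) to swap the remaining min over $\weightmat_{1j}$ with the max over $\dualmat_i$. The paper likewise defers the explicit bound on $m^*$ and the architecture-specific evaluation of the constraint to the individual theorems, citing the same prior machinery.
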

\begin{proof}
We first can re-write the problem as
\begin{equation}
    p^* = \min_{\|\weightmat_{1j}\|_F \leq 1} \min_{ \weightmat_{2j}, \vec{R}_i} \sum_{i=1}^n \mathcal{L}(\vec{R}_i, \labelmat_i) + \beta \sum_{j=1}^m  \|\weightmat_{2j}\|_F~\text{s.t.}~\vec{R}_i = \sum_{j=1}^m g(\data_i; \weightmat_{1j}) \weightmat_{2j}.
\end{equation}
Then, we form the Lagrangian of this problem as  
\begin{equation}
    p^* = \min_{\|\weightmat_{1j}\|_F \leq 1} \min_{ \weightmat_{2j}, \vec{R}_i} \max_{\dualmat_i} \sum_{i=1}^n \mathcal{L}(\vec{R}_i, \labelmat_i) + \beta \sum_{j=1}^m  \|\weightmat_{2j}\|_F  + \sum_{i=1}^n \mathrm{trace}( \dualmat_i^\top \vec{R}_i) - \sum_{i=1}^n \mathrm{trace}\left(\dualmat_i^\top \sum_{j=1}^m g(\data_i; \weightmat_{1j}) \weightmat_{2j}\right).
\end{equation}
By Sion's minimax theorem, we can reverse the order of the outer maximum and minimum, and minimize this problem over \(\weightmat_{2j}\) and \(\vec{R}_i\). Defining the Fenchel conjugate of \(\mathcal{L}\) as \(\mathcal{L}^*(\dualmat_i, \labelmat_i) := \max_{\vec{R}_i} -\mathcal{L}(\vec{R}_i, \labelmat_i) + \mathrm{trace}( \dualmat_i^\top \vec{R}_i)\), we have
\begin{equation}
    p^* = \min_{\|\weightmat_{1j}\|_F \leq 1}  \max_{\dualmat_i} \min_{ \weightmat_{2j}} -\sum_{i=1}^n \mathcal{L}^*(\dualmat_i, \labelmat_i) + \beta \sum_{j=1}^m  \|\weightmat_{2j}\|_F - \sum_{i=1}^n \mathrm{trace}\left(\dualmat_i^\top \sum_{j=1}^m g(\data_i; \weightmat_{1j}) \weightmat_{2j}\right).
\end{equation}
Now, we solve over \(\weightmat_{2j}\) to obtain 
\begin{equation}
    p^* = \min_{\|\weightmat_{1j}\|_F \leq 1}  \max_{\dualmat_i: \|\sum_{i=1}^n \dualmat_i^\top g(\data_i; \weightmat_{1j}) \|_F \leq \beta }  -\sum_{i=1}^n \mathcal{L}^*(\dualmat_i, \labelmat_i) 
\end{equation}
Now, since Slater's condition holds because \(\beta >0\), as long as \(m \geq m^*\) where \(m^*\) is the dimension of the constraints (see the individual cases for examples) we are permitted to switch the order of the maximum and minimum to obtain the desired result \cite{shapiro2009semi}:
\begin{align}
    p^* &= \max_{\dualmat_i} -\sum_{i=1}^n \mathcal{L}^*(\dualmat_i, \labelmat_i) \nonumber \\
    \text{s.t.}~&\max_{\|\weightmat_1\|_F \leq 1} \|\sum_{i=1}^n \dualmat_i^\top g(\data_i; \weightmat_{1})\|_F \leq \beta.
\end{align}
\end{proof}
These two lemmas will prove invaluable in the subsequent proofs. 

\subsection{Proof of Theorem \ref{theo:lin_attention}}
We first note that for self-attention we remove the \(1/\sqrt{d}\) factor for simplicity. We apply Lemmas \ref{lem:rescaling} and \ref{lem:duality} to \eqref{eq:mhsa_opt_primal} with the linear activation function to obtain
\begin{align}
    p_{SA}^* &= \max_{\dualmat_i} -\sum_{i=1}^n \mathcal{L}^*\left(\dualmat_i, \labelmat_i\right) \nonumber \\
  \mathrm{s.t.}~& \max_{\|\weightmat_{1}\|_F \leq 1} \left\|\sum_{i=1}^n \dualmat_i^\top (\data_i \weightmat_{1} \data_i^\top)\data_i\right\|_F \leq \beta.
\end{align}
Due to the identity \(\vecop(\vec{A}\vec{B}\vec{C}) = (\vec{C}^\top \otimes \vec{A})\vecop(\vec{B})\) \cite{magnus2019matrix}, we can write this as
\begin{align}
    p_{SA}^* &= \max_{\dualmat_i} -\sum_{i=1}^n \mathcal{L}^*\left(\dualmat_i, \labelmat_i\right) \nonumber \\
  \mathrm{s.t.}~& \max_{\|\weight_{1}\|_2 \leq 1} \left\|\sum_{i=1}^n ( (\data_i^\top \otimes \dualmat_i^\top)(\data_i \otimes \data_i)) \weight_{1} \right\|_2 \leq \beta,
\end{align}
where \(\weight_1 = \vecop(\weightmat_1)\). We note that the norm constraint here has dimension \(dc\) and \(\weight_1\) has dimension \(d^2\), so by \cite{shapiro2009semi} this strong duality result from Lemma \ref{lem:duality} requires that \(m^* \leq \min\{d^2, dc\}\). We can further write this as 
\begin{align}
    p_{SA}^* &= \max_{\dualmat_i} -\sum_{i=1}^n \mathcal{L}^*\left(\dualmat_i, \labelmat_i\right) \nonumber \\
  \mathrm{s.t.}~& \max_{\|\vec{Z}\|_* \leq 1} \mathrm{trace}\left(\sum_{i=1}^n  (\data_i^\top \otimes \dualmat_i^\top)(\data_i \otimes \data_i) \vec{Z} \right) \leq \beta . 
\end{align}
Now, we form the Lagrangian, given by 
\begin{equation}
    p_{SA}^* =\max_{\dualmat_i} \min_{\|\vec{Z}\|_* \leq 1}  \min_{\lambda \geq 0}  -\sum_{i=1}^n \mathcal{L}^*(\dualmat_i, \labelmat_i) +  \lambda \left(\beta -  \sum_{i=1}^n \vecop((\data_i \otimes \data_i)\vec{Z})^\top \vecop\left(\data_i \otimes \dualmat_i \right)  \right).
\end{equation}
By Sion's minimax theorem, we are permitted to change the order of the maxima and minima, to obtain  
\begin{equation}
    p_{SA}^* = \min_{\lambda \geq 0} \min_{\|\vec{Z}\|_* \leq 1}\max_{\dualmat_i}  -\sum_{i=1}^n \mathcal{L}^*(\dualmat_i, \labelmat_i) +  \lambda \left(\beta -  \sum_{i=1}^n \vecop((\data_i \otimes \data_i)\vec{Z})^\top \vecop\left(\data_i \otimes \dualmat_i \right)  \right). 
\end{equation}
Now, defining \(\vec{K}_{c, s}\) as the \((c, s)\) commutation matrix we have the following identity \cite{magnus2019matrix}
\begin{align*}
    \vecop(\data_i \otimes  \dualmat_i)= \left((\vec{I}_{d} \otimes \vec{K}_{c, s})(\vecop(\data_i) \otimes \vec{I}_c) \otimes \vec{I}_{s}\right)\vecop(\dualmat_i). 
\end{align*}
Using this identity and maximizing over \(\dualmat_i\), we obtain 
\begin{equation}
    p_{SA}^* = \min_{\|\vec{Z}\|_* \leq 1} \min_{\lambda \geq 0}  \sum_{i=1}^n \mathcal{L}\left(\left((\vecop(\data_i)^\top \otimes \vec{I}_c)(\vec{I}_d \otimes \vec{K}_{s, c}) \otimes \vec{I}_{s} \right)\vecop((\data_i \otimes \data_i)(\lambda\vec{Z})), \vecop(\labelmat_i)\right) + \beta \lambda. 
\end{equation}
Rescaling such that \(\tilde{\vec{Z}} = \lambda \vec{Z}\), we obtain
\begin{equation}
    p_{SA}^* = \min_{\vec{Z} \in \mathbb{R}^{d^2 \times dc} }  \sum_{i=1}^n \mathcal{L}\left(\left((\vecop(\data_i)^\top \otimes \vec{I}_c)(\vec{I}_d \otimes \vec{K}_{s, c}) \otimes \vec{I}_{s} \right)\vecop((\data_i \otimes \data_i)\vec{Z}), \vecop(\labelmat_i)\right) + \beta  \|\vec{Z}\|_*. 
\end{equation}
It appears as though this is a very complicated function, but it actually simplifies greatly. In particular, one can write this as 
\begin{align}
    p_{SA}^* &= \min_{\vec{Z} \in \mathbb{R}^{d^2 \times dc} }  \sum_{i=1}^n \mathcal{L}\left(\hat{\labelmat}_i, \labelmat_i\right) + \beta  \|\vec{Z}\|_* \nonumber \\
    \hat{\labelmat}_i[o, p] &:= \sum_{k=1}^d \sum_{l=1}^d \sum_{t=1}^s \data_i[t, l]\data_i[t, k] \data_i[o, :]^\top \vec{Z}^{(k, l)}. 
\end{align}
Making any final simplifications, one obtains the desired result. 
\begin{align}
    p_{SA}^* = \min_{\vec{Z} \in \mathbb{R}^{d^2 \times dc} }  &\sum_{i=1}^n \mathcal{L}\left(\sum_{k=1}^d \sum_{\ell=1}^d \vec{G}_i[k, \ell] \data_i \vec{Z}^{(k, \ell)}, \labelmat_i\right) +\beta  \|\vec{Z}\|_*.
\end{align}
Lastly, we also demonstrate that there is a one-to-one mapping between the solution to \eqref{eq:mhsa_linear_convex} and \eqref{eq:mhsa_opt_primal}. In particular, imagine we have a solution \(\vec{Z}^*\) to \eqref{eq:mhsa_linear_convex} with optimal value \(p_{CVX}^*\). Let \(r := \mathrm{rank}(\vec{Z}^*)\), and take the SVD of \(\vec{Z}^*\) as \(\sum_{j=1}^r \sigma_j \vec{u}_j \vec{v}_j^\top\), where \(\vec{u}_j \in \mathbb{R}^{d^2}\) and \(\vec{v}_j \in \mathbb{R}^{dc}\). Let \(\mathrm{vec}^{-1}(\vec{u_j}) \in \mathbb{R}^{d \times d}\) be the result of taking chunks of \(d\)-length vectors from \(\vec{u}_j\) and stacking them in columns. Similarly, let \(\mathrm{vec}^{-1}(\vec{v_j}) \in \mathbb{R}^{c \times d}\) be the result of taking chunks of \(c\)-length vectors from \(\vec{v}_j\) and stacking them in columns. Furthermore we will let \(\mathrm{vec}^{-1}(\vec{u_j})_k\) be the \(k\)th column of \(\mathrm{vec}^{-1}(\vec{u_j})\). Then, recognize that 
\[{\vec{Z}^*}^{(k, \ell)} = \sum_{j=1}^r \sigma_j \mathrm{vec}^{-1}(\vec{u_j})_k \mathrm{vec}^{-1}(\vec{v_j})_{\ell}^\top \]
Thus, given \(\vec{Z}^*\), we can form a candidate solution to \eqref{eq:mhsa_opt_primal} as follows:
\begin{align*}
    \vec{Z}^* &= \sum_{j=1}^r \sigma_j \vec{u}_j \vec{v}_j^\top \\
    \hat{\weightmat}_{1j} &= \sqrt{\sigma_j}\mathrm{vec}^{-1}(\vec{u_j}) \\
    \hat{\weightmat}_{2j} &= \sqrt{\sigma_j}\mathrm{vec}^{-1}(\vec{v_j})^\top
\end{align*}
We then have 
\begin{align*}
    \hat{p}_{NCVX} &= \sum_{i=1}^n \mathcal{L}\left(\sum_{j=1}^r \data_i\hat{\weightmat}_{1j} \data_i^\top \data_i \hat{\weightmat}_{2j}, \vec{Y}_i \right) + \frac{\beta}{2}\sum_{j=1}^r \|\hat{\weightmat}_{1j}\|_F^2 + \|\hat{\weightmat}_{2j}\|_F^2\\
    &= \sum_{i=1}^n \mathcal{L}\left(\data_i \sum_{j=1}^r \sigma_j \mathrm{vec}^{-1}(\vec{u_j}) \vec{G}_i \mathrm{vec}^{-1}(\vec{v_j})^\top, \vec{Y}_i \right) + \frac{\beta}{2}\sum_{j=1}^r \|\sqrt{\sigma_j} \vec{u}_j\|_2^2 + \|\sqrt{\sigma_j} \vec{v}_j\|_2^2 \\
    &=  \sum_{i=1}^n \mathcal{L}\left( \sum_{k=1}^d \sum_{\ell=1}^d \data_i \sum_{j=1}^r \sigma_j \mathrm{vec}^{-1}(\vec{u_j})_k \vec{G}_i[k, \ell] \mathrm{vec}^{-1}(\vec{v_j})_{\ell}^\top, \vec{Y}_i \right) + \frac{\beta}{2}\sum_{j=1}^r \sigma_j + \sigma_j \\
    &=  \sum_{i=1}^n \mathcal{L}\left(\sum_{k=1}^d \sum_{\ell=1}^d  \vec{G}_i[k, \ell]\data_i  {\vec{Z}^*}^{(k, \ell)} , \vec{Y}_i \right) + \beta\|{\vec{Z}^*}\|_*\\
    &= p_{CVX}^*
\end{align*}
Thus, the two solutions match. Similarly, if we have the solution \((\vec{W}_{1j}^*, \vec{W}_{2j}^*)\) to \eqref{eq:mhsa_opt_primal}, we can form the equivalent optimal convex weights to \eqref{eq:mhsa_linear_convex} as 
\begin{align*}
    \vec{Z}^* = \sum_{j=1}^m \mathrm{vec}(\vec{W}_{1j}^*)\mathrm{vec}(\vec{W}_{2j}^*)^\top
\end{align*}
and the same proof can be demonstrated in reverse.  

\hfill\qedsymbol

\subsection{Proof of Corollary \ref{cor:lin_attention}}
We suppose that \(\vec{G}\) is block diagonal with \(B\) such blocks of size \(d_b\) such that \(\sum_b d_b = d\). Then, we have
\begin{equation}
    p_{SA}^* = \min_{\vec{Z} \in \mathbb{R}^{d^2 \times dc} }  \sum_{i=1}^n \mathcal{L}\left(\sum_{b=1}^B \sum_{j=1}^{d_b} \sum_{k=1}^{d_b} \vec{G}_{i}^{(b)}[j, k] \data_i \vec{Z}^{(b, j,k)}, \labelmat_i\right) + \beta  \|\vec{Z}\|_*
\end{equation}
Then, we have, for blocks \(\vec{Z}^{(b)} \in \mathbb{R}^{d_bd \times d_bc}\), 
\begin{equation}
    \|\vec{Z}\|_* = \max_{\|\vec{A}\|_2 \leq 1} \langle \vec{Z}, \vec{A} \rangle \geq \max_{\|\vec{A}^{(b)}\|_2 \leq 1} \sum_{b=1}^b \langle \vec{Z}^{(b)}, \vec{A}^{(b)}\rangle = \sum_{b=1}^B \|\vec{Z}^{(b)}\|_*.
\end{equation}
This lower bound for \(\vec{Z}\) is achievable without changing the fitting term, by letting
\begin{equation}
    \vec{Z} = \begin{bmatrix} \vec{Z}^{(1)} & 0 & 0 \\ 0 & \vec{Z}^{(2)} & 0 \\ & \cdots & \\ 0 & 0 & \vec{Z}^{(B)}\end{bmatrix}
\end{equation}
Thus, in the block diagonal case, we have
\begin{align}
    p_{SA}^* = \min_{\vec{Z}}  &\sum_{i=1}^n \mathcal{L}\left(\sum_{b=1}^B \sum_{k=1}^{d_b} \sum_{\ell=1}^{d_b} \vec{G}_{i}^{(b)}[k, \ell] \data_i \vec{Z}^{(b, k, \ell)}, \labelmat_i\right) \nonumber \\ & + \beta  \sum_{b=1}^B \|\vec{Z}^{(b)}\|_*
\end{align}
as desired. \hfill\qedsymbol

\subsection{Proof of Theorem \ref{theo:relu_attention}}
We first note that for self-attention we remove the \(1/\sqrt{d}\) factor for simplicity. We apply Lemmas \ref{lem:rescaling} and \ref{lem:duality} to \eqref{eq:mhsa_opt_primal} with the ReLU activation function to obtain
\begin{align}
 p_{SA}^* &= \max_{\dualmat_i} -\sum_{i=1}^n \mathcal{L}^*\left(\dualmat_i, \labelmat_i\right) \nonumber \\
  \mathrm{s.t.}~& \max_{\substack{\|\weightmat_1\|_F\leq 1}}  \left\| \sum_{i=1}^N \dualmat_i^\top  (\vec{X}_i \weightmat_1 \vec{X}_i^\top )_+ \vec{X}_i \right\|_F. 
\end{align}
We again apply \(\vecop(\vec{A}\vec{B}\vec{C}) = (\vec{C}^\top \otimes \vec{A})\vecop(\vec{B})\) \cite{magnus2019matrix} to obtain 
\begin{align}
 p_{SA}^* &= \max_{\dualmat_i} -\sum_{i=1}^n \mathcal{L}^*\left(\dualmat_i, \labelmat_i\right) \nonumber \\
  \mathrm{s.t.}~& \max_{\substack{\|\weight_1\|_2\leq 1}}  \left\| \sum_{i=1}^N (\data_i \otimes \dualmat_i^\top)  ((\vec{X}_i\otimes \data_i) \weight_1)_+\right\|_2. 
\end{align}
Now, let \(\vec{D}_j^{(i)} \in \mathbb{R}^{s^2 \times s^2}\) be the \(i\)th block of \(\vec{D}_j\), and enumerate over all possible hyperplane arrangements \(j\). Then, we have 
\begin{align}
 p_{SA}^* &= \max_{\dualmat_i} -\sum_{i=1}^n \mathcal{L}^*\left(\dualmat_i, \labelmat_i\right) \nonumber \\
  \mathrm{s.t.}~& \max_{\substack{\|\weight_1\|_2\leq 1 \\ j \in [P] \\ \vec{K}_j \weight_1 \geq 0}}  \left\| \sum_{i=1}^N (\data_i^\top  \otimes \dualmat_i^\top)  \vec{D}_{j}^{(i)}(\data_i \otimes \data_i) \weight_1  \right\|_2.
\end{align}
We note that the norm constraint here has dimension \(dc\) and \(\weight_1\) has dimension \(d^2\), so by \cite{shapiro2009semi, pilanci2020neural} this strong duality result from Lemma \ref{lem:duality} requires that \(m^* \leq n\min\{d^2, dc\}\). Now, using the concept of dual norm, this is equal to 
\begin{align}
 p_{SA}^* &= \max_{\dualmat_i} -\sum_{i=1}^n \mathcal{L}^*\left(\dualmat_i, \labelmat_i\right) \nonumber \\
  \mathrm{s.t.}~&\max_{\substack{\|\vec{g}\|_2 \leq 1 \\ \|\weight_1\|_2\leq 1 \\ j \in [P] \\ \vec{K}_j \weight_1 \geq 0}} \vec{g}^\top \sum_{i=1}^N (\data_i^\top  \otimes \dualmat_i^\top)  \vec{D}_{j}^{(i)}(\data_i \otimes \data_i) \weight_1 
\end{align}
We can also define sets \(\mathcal{C}_j := \{\vec{Z} = \vec{u}\vec{g}^\top \in \mathbb{R}^{d^2 \times dc}: \vec{K}_j \vec{u} \geq 0~\forall i,\;\|\vec{Z}\|_* \leq 1\}\). Then, we have 
\begin{align}
 p_{SA}^* &= \max_{\dualmat_i} -\sum_{i=1}^n \mathcal{L}^*\left(\dualmat_i, \labelmat_i\right) \nonumber \\
  \mathrm{s.t.}~&\max_{\substack{j \in [P] \\ \vec{Z} \in \mathcal{C}_j}} \mathrm{trace}\left( \sum_{i=1}^n (\data_i^\top  \otimes \dualmat_i^\top)  \vec{D}_{j}^{(i)}(\data_i \otimes \data_i) \vec{Z} \right)
\end{align}
Now, we simply need to form the Lagrangian and solve. The Lagrangian is given by
\begin{equation}
     p_{SA}^* = \max_{\dualmat_i} \min_{\lambda \geq 0} \min_{\vec{Z}_j \in \mathcal{C}_j} -\sum_{i=1}^n \mathcal{L}^*(\dualmat_i, \labelmat_i) + \sum_{j=1}^P \lambda_j \left(\beta -  \sum_{i=1}^n \vecop\left(\vec{D}_{j}^{(i)} (\data_i \otimes \data_i)\vec{Z}_j\right)^\top\vecop\left(\data_i \otimes \dualmat_i \right)  \right)
\end{equation}
We now can switch the order of max and min via Sion's minimax theorem and maximize over \(\dualmat_i\). Defining \(\vec{K}_{c, s}\) as the \((c, s)\) commutation matrix \cite{magnus2019matrix}:
\begin{align*}
    \vecop(\data_i \otimes  \dualmat_i)= \left((\vec{I}_{d} \otimes \vec{K}_{c, s})(\vecop(\data_i) \otimes \vec{I}_c) \otimes \vec{I}_{s}\right)\vecop(\dualmat_i)
\end{align*}
Maximizing over \(\dualmat_i\), we have 
\begin{equation}
    p_{SA}^* = \min_{\lambda \geq 0} \min_{\vec{Z}_j \in \mathcal{C}_j}  \sum_{i=1}^n \mathcal{L}\left(\sum_{j=1}^P \left((\vecop(\data_i)^\top \otimes \vec{I}_c)(\vec{I}_d \otimes \vec{K}_{s, c}) \otimes \vec{I}_{s} \right)\vecop\left(\vec{D}_{j}^{(i)} (\data_i \otimes \data_i)\lambda_j\vec{Z}_j\right), \vecop(\labelmat_i)\right) + \beta \sum_{j=1}^m \lambda_j. 
\end{equation}
Again rescaling \(\tilde{\vec{Z}}_j = \lambda_j\vec{Z}_j \), we have
\begin{equation}
    p_{SA}^* = \min_{\vec{Z}_j \in \mathbb{R}^{d^2 \times dc} }  \sum_{i=1}^n \mathcal{L}\left(\sum_{j=1}^P \left((\vecop(\data_i)^\top \otimes \vec{I}_c)(\vec{I}_d \otimes \vec{K}_{s, c}) \otimes \vec{I}_{s} \right)\vecop\left(\vec{D}_{j}^{(i)} (\data_i \otimes \data_i)\vec{Z}_j\right), \vecop(\labelmat_i)\right) + \beta \sum_{j=1}^m \|\vec{Z}_j\|_{*, \mathrm{K}_j}.
\end{equation}
It appears as though this is a very complicated function, but it actually simplifies greatly. In particular, one can write this as 
\begin{align}
    p_{SA}^* &= \min_{\vec{Z}_j \in \mathbb{R}^{d^2 \times dc} }  \sum_{i=1}^n \mathcal{L}\left(\hat{\labelmat}_i, \labelmat_i\right) + \beta  \|\vec{Z}\|_{*, \mathrm{K}_j} \nonumber \\
    \hat{\labelmat}_i[o, p] &:= \sum_{j=1}^P \sum_{k=1}^d \sum_{l=1}^d \sum_{t=1}^s \data_i[t, l]\data_i[t, k] \vec{D}_{j}^{(t, m)} \data_i[o, :]^\top \vec{Z}_j^{(k, l)}. 
\end{align}
Making any final simplifications, one obtains the desired result. 
\begin{align}
    p_{SA}^* &= \min_{\vec{Z}_j \in \mathbb{R}^{d^2 \times dc} } \sum_{i=1}^n \mathcal{L}\left(\sum_{j=1}^P \sum_{k=1}^d \sum_{\ell=1}^d \vec{G}_{i,j}^{(k, \ell)}\vec{X}_i\vec{Z}_j^{(k, \ell)}, \labelmat_i\right) + \beta \sum_{j=1}^m \|\vec{Z}_j\|_{*, \mathrm{K}_{j}}. 
\end{align}
Lastly, we also demonstrate that there is a one-to-one mapping between the solution to \eqref{eq:mhsa_relu_convex} and \eqref{eq:mhsa_opt_primal}. In particular, imagine we have a solution \(\{\vec{Z}_j^*\}_{j=1}^{P}\) to \eqref{eq:mhsa_relu_convex} with optimal value \(p_{CVX}^*\), where \(m^* \leq n\) are non-zero. Take the cone-constrained SVD of \(\vec{Z}_j^*\) as \(\sum_{x=1}^{r_j} \sigma_{jx} \vec{u}_{jx} \vec{v}_{jx}^\top\), where \(\vec{u}_{jx} \in \mathbb{R}^{d^2}\) and \((2\vec{D}_j - \vec{I}_{ns^2})\data \vec{u}_{jx} \geq 0\), and \(\vec{v}_j \in \mathbb{R}^{dc}\). Let \(\mathrm{vec}^{-1}(\vec{u_{jx}}) \in \mathbb{R}^{d \times d}\) be the result of taking chunks of \(d\)-length vectors from \(\vec{u}_{jx}\) and stacking them in columns. Similarly, let \(\mathrm{vec}^{-1}(\vec{v_{jx}}) \in \mathbb{R}^{c \times d}\) be the result of taking chunks of \(c\)-length vectors from \(\vec{v}_j\) and stacking them in columns. Furthermore we will let \(\mathrm{vec}^{-1}(\vec{u_j})_k\) be the \(k\)th column of \(\mathrm{vec}^{-1}(\vec{u_j})\). Then, recognize that 
\[{\vec{Z}^*}_j^{(k, \ell)} = \sum_{x=1}^r \sigma_{jx} \mathrm{vec}^{-1}(\vec{u_{jx}})_k \mathrm{vec}^{-1}(\vec{v_{jx}})_{\ell}^\top \]
Thus, given \(\vec{Z}^*\), we can form a candidate solution to \eqref{eq:mhsa_opt_primal} as follows:
\begin{align*}
    \vec{Z}_j^* &= \sum_{x=1}^{r_j} \sigma_{jx} \vec{u}_{jx} \vec{v}_{jx}^\top,\,(2\vec{D}_j - \vec{I}_{ns^2})\data \vec{u}_{jx} \geq 0,\, \|\vec{u}_{jx}\|_2 =1,\,\|\vec{v}_{jx}\|_2 =1 \\
    \hat{\weightmat}_{1jx} &= \sqrt{\sigma_{jx}}\mathrm{vec}^{-1}(\vec{u_{jx}}) \\
    \hat{\weightmat}_{2jx} &= \sqrt{\sigma_{jx}}\mathrm{vec}^{-1}(\vec{v_{jx}})^\top
\end{align*}
We then have 
\begin{align*}
    \hat{p}_{NCVX} &= \sum_{i=1}^n \mathcal{L}\left(\sum_{j=1}^{m^*} \sum_{x=1}^{r_j} (\data_i\hat{\weightmat}_{1jx} \data_i^\top)_+ \data_i \hat{\weightmat}_{2jx}, \vec{Y}_i \right) + \frac{\beta}{2}\sum_{j=1}^{m^*} \sum_{x=1}^{r_j} \|\hat{\weightmat}_{1jx}\|_F^2 + \|\hat{\weightmat}_{2jx}\|_F^2\\
    &= \sum_{i=1}^n \mathcal{L}\left( \sum_{j=1}^{m^*} \sum_{x=1}^{r_j} \sigma_{jx} \left( \mathrm{diag}^{-1}(\vec{D}_j^{(i)}) \odot (\data_i \mathrm{vec}^{-1}(\vec{u_{jx}}) \data_i^\top) \right) \data_i \mathrm{vec}^{-1}(\vec{v_{jx}})^\top, \vec{Y}_i \right) + \frac{\beta}{2}\sum_{j=1}^{m^*} \sum_{x=1}^{r_j}\|\sqrt{\sigma_{jx}} \vec{u}_{jx}\|_2^2 + \|\sqrt{\sigma_{jx}} \vec{v}_{jx}\|_2^2 \\
    &=  \sum_{i=1}^n \mathcal{L}\left( \sum_{k=1}^d \sum_{\ell=1}^d  \sum_{j=1}^{m^*} \sum_{x=1}^{r_j} \sigma_{jx} \vec{G}_{i, j}^{(k, \ell)} \data_i \mathrm{vec}^{-1}(\vec{u_j})_k  \mathrm{vec}^{-1}(\vec{v_j})_{\ell}^\top, \vec{Y}_i \right) + \frac{\beta}{2} \sum_{j=1}^{m^*} \sum_{x=1}^{r_j} \sigma_{jx}+ \sigma_{jx} \\
    &=  \sum_{i=1}^n \mathcal{L}\left(\sum_{k=1}^d \sum_{\ell=1}^d  \vec{G}_{i, j}^{(k, \ell)} \data_i  {\vec{Z}_j^*}^{(k, \ell)} , \vec{Y}_i \right) + \beta\sum_{j=1}^{m^*}\|\vec{Z}_j^*\|_{*, \mathrm{K}_j}\\
    &= p_{CVX}^*
\end{align*}
Thus, the two solutions match. 
\hfill\qedsymbol

\subsection{Proof of Theorem \ref{theo:linear_mlpmixer}}
We apply Lemmas \ref{lem:rescaling} and \ref{lem:duality} to \eqref{eq:mlp_mixer_primal} with the linear activation function to obtain
\begin{align}
    p_{MM}^* &= \max_{\dualmat_i} -\sum_{i=1}^n \mathcal{L}^* \left (\dualmat_i ,\labelmat_i \right) \nonumber \\ 
   \text{s.t.} &~ \max_{\substack{\|\weightmat_1\|_F\leq 1}} \|  \sum_{i=1}^n \dualmat_i^T \weightmat_1 \data_i  \|_F \leq \beta
\end{align}
We again apply \(\vecop(\vec{A}\vec{B}\vec{C}) = (\vec{C}^\top \otimes \vec{A})\vecop(\vec{B})\) \cite{magnus2019matrix} and maximize over \(\vecop(\weightmat_1)\) to obtain 
\begin{align*}
    p_{MM}^*  &= -\max_{\dualmat_i} \sum_{i=1}^n \mathcal{L}^* \left ( \dualmat_i, \labelmat_i  \right)\\
    &\mbox{s.t. } \| \sum_{i=1}^n \data_i^\top \otimes \dualmat_i^\top \|_2\leq \beta.
\end{align*}
We note that the norm constraint here has dimension \(dc\) and \(\vecop(\weightmat_1)\) has dimension \(s^2\), so by \cite{shapiro2009semi} this strong duality result from Lemma \ref{lem:duality} requires that \(m^* \leq \min\{s^2, dc\}\). The Lagrangian is given by
\begin{equation}
    p_{MM}^*  = \max_{\dualmat_i} \min_{\lambda \geq 0} \min_{\|\vec{Z}\|_*\leq 1} -\sum_{i=1}^n \mathcal{L}^*(\dualmat_i, \labelmat_i) + \sum_{j=1}^m \lambda_j \left(\beta -  \sum_{i=1}^n\mathrm{trace}\left( (\vec{I}_d \otimes \dualmat_i^T)(\data_i^\top \otimes \vec{I}_{s}) \vec{Z} \right)\right)
\end{equation}
which by Sion's minimax theorem and simplification can also be written as
\begin{equation}
   p_{MM}^*  = \min_{\lambda \geq 0} \min_{\|\vec{Z}\|_*\leq 1}  \max_{\dualmat_i} -\sum_{i=1}^n \mathcal{L}^*(\dualmat_i, \labelmat_i) + \sum_{j=1}^m \lambda_j \left(\beta -  \sum_{i=1}^n \vecop((\data_i^\top \otimes \vec{I}_{s})\vec{Z})^\top \vecop(\vec{I}_d \otimes \dualmat_i)\right)
\end{equation}
We define \(\vec{K}_{c, d}\) as the \((c, d)\) commutation matrix \cite{magnus2019matrix}:
\begin{align*}
    \vecop(\vec{I}_d \otimes  \dualmat_i)= \left((\vec{I}_{d} \otimes \vec{K}_{c, d})(\vecop(\vec{I}_d) \otimes \vec{I}_c) \otimes \vec{I}_{s}\right)\vecop(\dualmat_i)
\end{align*}
Maximizing over \(\dualmat_i\), followed by re-scaling \(\tilde{\vec{Z}} = \lambda \vec{Z}\) gives us
\begin{equation}
    p_{MM}^*  = \min_{\vec{Z} \in \mathbb{R}^{s^2 \times dc} }  \sum_{i=1}^n \mathcal{L}\left( \left((\vecop(\vec{I}_d)^\top \otimes \vec{I}_c)(\vec{I}_d \otimes \vec{K}_{d, c}) \otimes \vec{I}_{s} \right)\vecop\left((\data_i^\top \otimes \vec{I}_{s})\vec{Z} \right), \vecop(\labelmat_i)\right) + \beta \|\vec{Z}\|_{*}
\end{equation}
Making any final simplifications, one obtains the desired result.
\begin{align}
    p_{MM}^* &= \min_{\vec{Z} \in \mathbb{R}^{s^2\times dc} }  \sum_{i=1}^n \mathcal{L}\left(\begin{bmatrix} f_1(\vec{X}_i) & \cdots &  f_c(\vec{X}_i)  \end{bmatrix}, \labelmat_i\right)+ \beta \|\vec{Z}\|_{*} \nonumber \\
   f_p(\vec{X}_i)  &:= \vec{Z}^{(p)}\vecop(\data_i). 
\end{align}

Lastly, we also demonstrate that there is a one-to-one mapping between the solution to \eqref{eq:mlp_mixer_linear_convex} and \eqref{eq:mlp_mixer_primal}. In particular, we have a solution \(\vec{Z}^*\) to \eqref{eq:mlp_mixer_linear_convex} with optimal value \(p_{CVX}^*\). First, rearrange the solution to be of the form \(\tilde{\vec{Z}}^*\) of \eqref{eq:mlp_mixer_linear_convex_permuted}. Let \(r := \mathrm{rank}(\tilde{\vec{Z}}^*)\), and take the SVD of \(\tilde{\vec{Z}}^*\) as \(\sum_{j=1}^r \sigma_j \vec{u}_j \vec{v}_j^\top\), where \(\vec{u}_j \in \mathbb{R}^{s^2}\) and \(\vec{v}_j \in \mathbb{R}^{dc}\). Let \(\mathrm{vec}^{-1}(\vec{u_j}) \in \mathbb{R}^{s \times s}\) be the result of taking chunks of \(s\)-length vectors from \(\vec{u}_j\) and stacking them in columns. Similarly, let \(\mathrm{vec}^{-1}(\vec{v_j}) \in \mathbb{R}^{c \times d}\) be the result of taking chunks of \(c\)-length vectors from \(\vec{v}_j\) and stacking them in columns. Furthermore we will let \(\mathrm{vec}^{-1}(\vec{u_j})_t\) be the \(t\)th column of \(\mathrm{vec}^{-1}(\vec{u_j})\). Then, recognize that 
\[\tilde{\vec{Z}}^{*^{(t, k)}} = \sum_{j=1}^r \sigma_j \mathrm{vec}^{-1}(\vec{u_j})_t \mathrm{vec}^{-1}(\vec{v_j})_{k}^\top \]
Thus, given \(\vec{Z}^*\), we can form a candidate solution to \eqref{eq:mhsa_opt_primal} as follows:
\begin{align*}
    \tilde{\vec{Z}}^* &= \sum_{j=1}^r \sigma_j \vec{u}_j \vec{v}_j^\top \\
    \hat{\weightmat}_{1j} &= \sqrt{\sigma_j}\mathrm{vec}^{-1}(\vec{u_j}) \\
    \hat{\weightmat}_{2j} &= \sqrt{\sigma_j}\mathrm{vec}^{-1}(\vec{v_j})^\top
\end{align*}
We then have 
\begin{align*}
    \hat{p}_{NCVX} &= \sum_{i=1}^n \mathcal{L}\left(\sum_{j=1}^r \hat{\weightmat}_{1j} \data_i \hat{\weightmat}_{2j}, \vec{Y}_i \right) + \frac{\beta}{2}\sum_{j=1}^r \|\hat{\weightmat}_{1j}\|_F^2 + \|\hat{\weightmat}_{2j}\|_F^2\\
    &= \sum_{i=1}^n \mathcal{L}\left(\data_i \sum_{j=1}^r \sigma_j \mathrm{vec}^{-1}(\vec{u_j}) \data_i \mathrm{vec}^{-1}(\vec{v_j})^\top, \vec{Y}_i \right) + \frac{\beta}{2}\sum_{j=1}^r \|\sqrt{\sigma_j} \vec{u}_j\|_2^2 + \|\sqrt{\sigma_j} \vec{v}_j\|_2^2 \\
    &=  \sum_{i=1}^n \mathcal{L}\left( \sum_{t=1}^s \sum_{k=1}^d \sum_{j=1}^r \sigma_j \mathrm{vec}^{-1}(\vec{u_j})_t \data_i[t, k] \mathrm{vec}^{-1}(\vec{v_j})_{k}^\top, \vec{Y}_i \right) + \frac{\beta}{2}\sum_{j=1}^r \sigma_j + \sigma_j \\
    &=  \sum_{i=1}^n \mathcal{L}\left(\sum_{t=1}^s \sum_{k=1}^d  \data_i[k, \ell] \tilde{\vec{Z}}^{*^{(t, k)}}, \vec{Y}_i \right) + \beta\|\tilde{\vec{Z}}^*\|_*\\
    &= p_{CVX}^*
\end{align*}
Thus, the two solutions match.
\hfill\qedsymbol

\subsection{Proof of Theorem \ref{theo:relu_mlpmixer}}
We apply Lemmas \ref{lem:rescaling} and \ref{lem:duality} to \eqref{eq:mlp_mixer_primal} with the ReLU activation function to obtain
\begin{align}
    p_{MM}^* &= \max_{\vec{V}_i} -\sum_{i=1}^n\mathcal{L}^* \left (\dualmat_i,\mathbf{Y}_i \right) \nonumber \\ 
   \text{s.t.} &~ \max_{\substack{\|\weightmat_1\|_F\leq 1}}   \left\|\sum_{i=1}^n \dualmat_i^T (\weightmat_1 \data_i)_+ \right\|_F \leq \beta
\end{align}
This is equivalent to \cite{magnus2019matrix}
\begin{align}
    p_{MM}^* &= \max_{\vec{V}_i} -\sum_{i=1}^n\mathcal{L}^* \left (\dualmat_i,\mathbf{Y}_i \right) \nonumber \\ 
   \text{s.t.} &~ \max_{\substack{\|\weightmat_1\|_F\leq 1}}   \left\|\sum_{i=1}^n  (\vec{I}_d \otimes \dualmat_i^T)((\data_i^\top \otimes \vec{I}_s)\vecop(\weightmat_1)_+ \right\|_F \leq \beta
\end{align}
We note that the norm constraint here has dimension \(dc\) and \(\vecop(\weightmat_1)\) has dimension \(s^2\), so by \cite{shapiro2009semi, pilanci2020neural} this strong duality result from Lemma \ref{lem:duality} requires that \(m^* \leq n\min\{s^2, dc\}\). Now, let \(\vec{D}_j^{(i)} \in \mathbb{R}^{sd \times sd}\) be the \(i\)th block of \(\vec{D}_j\), and enumerate over all possible hyperplane arrangements \(j\). Then, we have 
\begin{align}
    p_{MM}^* &= \max_{\vec{V}_i} -\sum_{i=1}^n \mathcal{L}^* \left (\dualmat_i,\mathbf{Y}_i \right) \nonumber \\ 
   \text{s.t.} &~ \max_{\substack{\|\weight_1\|_2\leq 1 \\ j \in [P] \\ \vec{K}_j\weight_1 \geq 0}}   \left\|\sum_{i=1}^n (\vec{I}_d \otimes \dualmat_i^T) \vec{D}_j^{(i)} (\data_i^\top \otimes \vec{I}_{s}) \weight_1\right\|_2\leq \beta.
\end{align}

Now, using the concept of dual norm, this is equal to 
\begin{align}
 p_{MM}^* &= \max_{\vec{V}_i} -\sum_{i=1}^n \mathcal{L}^* \left (\dualmat_i,\mathbf{Y}_i \right) \nonumber \\ 
   \text{s.t.} &~
    \max_{\substack{\|\vec{g}\|_2 \leq 1 \\ \|\weight_1\|_2\leq 1 \\ j \in [P] \\ \vec{K}_j\weight_1 \geq 0}} \vec{g}^\top\sum_{i=1}^n (\vec{I}_d \otimes \dualmat_i^T) \vec{D}_j^{(i)} (\data_i^\top \otimes \vec{I}_{s}) \weight_1 \leq \beta
\end{align}
We can also define sets \(\mathcal{C}_j := \{\vec{Z} = \vec{u}\vec{g}^\top \in \mathbb{R}^{s^2 \times dc}: \vec{K}_j\vec{u} \geq 0,\;\|\vec{Z}\|_* \leq 1\}\). Then, we have 
\begin{align}
 p_{MM}^* &= \max_{\vec{V}_i} -\sum_{i=1}^n \mathcal{L}^* \left (\dualmat_i,\mathbf{Y}_i \right) \nonumber \\ 
   \text{s.t.} &~
    \max_{\substack{j \in [P] \\ \vec{Z} \in \mathcal{C}_j}} \mathrm{trace}\left( \sum_{i=1}^n (\vec{I}_d \otimes \dualmat_i^T) \vec{D}_j^{(i)} (\data_i^\top \otimes \vec{I}_{s})\vec{Z} \right) \leq \beta
\end{align}
Now, we simply need to form the Lagrangian and solve. The Lagrangian is given by
\begin{equation}
    p_{MM}^*= \max_{\dualmat_i} \min_{\lambda \geq 0} \min_{\vec{Z}_j \in \mathcal{C}_j} -\sum_{i=1}^n \mathcal{L}^*(\dualmat_i, \labelmat_i) + \sum_{j=1}^P \lambda_j \left(\beta -  \sum_{i=1}^n\mathrm{trace}\left( (\vec{I}_d \otimes \dualmat_i^T) \vec{D}_j^{(i)} (\data_i^\top \otimes \vec{I}_{s}) \vec{Z} \right)\right)
\end{equation}
We now can switch the order of max and min via Sion's minimax theorem and maximize over \(\dualmat_i\):
\begin{equation}
    p_{MM}^* = \min_{\lambda \geq 0} \min_{\vec{Z}_j \in \mathcal{C}_j}\max_{\dualmat_i}  -\sum_{i=1}^n \mathcal{L}^*(\dualmat_i, \labelmat_i) + \sum_{j=1}^P \lambda_j \left(\beta -  \sum_{i=1}^n \vecop\left(\vec{D}_{j}^{(i)}(\data_i^\top \otimes \vec{I}_{s})\vec{Z}_j \right)^\top \vecop\left(\vec{I}_d \otimes \dualmat_i \right)  \right)
\end{equation}
Now, defining \(\vec{K}_{c, d}\) as the \((c, d)\) commutation matrix:
\begin{align*}
    \vecop(\vec{I}_d \otimes  \dualmat_i)= \left((\vec{I}_{d} \otimes \vec{K}_{c, d})(\vecop(\vec{I}_d) \otimes \vec{I}_c) \otimes \vec{I}_{s}\right)\vecop(\dualmat_i)
\end{align*}
Solving over \(\dualmat_i\) yields
\begin{equation}
    p_{MM}^* = \min_{\lambda \geq 0}\min_{\vec{Z}_j \in \mathcal{C}_j}  \sum_{i=1}^n \mathcal{L}\left(\sum_{j=1}^P \left((\vecop(\vec{I}_d)^\top \otimes \vec{I}_c)(\vec{I}_d \otimes \vec{K}_{d, c}) \otimes \vec{I}_{s} \right)\vecop\left(\vec{D}_{j}^{(i)}(\data_i^\top \otimes \vec{I}_{s})\lambda_j\vec{Z}_j \right), \vecop(\labelmat_i)\right) + \beta \sum_{j=1}^P \lambda_j
\end{equation}
Re-scaling \(\tilde{\vec{Z}}_j = \lambda_j \vec{Z}_j\) gives us
\begin{equation}
    p_{MM}^* = \min_{\vec{Z}_j}  \sum_{i=1}^n \mathcal{L}\left(\sum_{j=1}^P \left((\vecop(\vec{I}_d)^\top \otimes \vec{I}_c)(\vec{I}_d \otimes \vec{K}_{d, c}) \otimes \vec{I}_{s} \right)\vecop\left(\vec{D}_{j}^{(i)}(\data_i^\top \otimes \vec{I}_{s})\vec{Z}_j \right), \vecop(\labelmat_i)\right) + \beta \sum_{j=1}^P \|\vec{Z}_j\|_{*, \mathrm{K}_j}.
\end{equation}
One can actually greatly simplify this result, and can re-write this as
\begin{align}
    p_{MM}^* &= \min_{\vec{Z}_j \in \mathbb{R}^{s^2\times dc} }  \sum_{i=1}^n \mathcal{L}\left(\begin{bmatrix} f_1(\vec{X}_i) & \cdots &  f_c(\vec{X}_i) \end{bmatrix} , \labelmat_i\right) + \beta \sum_{j=1}^P \|\vec{Z}_j\|_{*,\mathrm{K}_j}\\
    f_p(\vec{X}_i) &:= \sum_{j=1}^P \begin{bmatrix}  \vec{D}_{j}^{(i, 1)}\vec{Z}_j^{(p, 1)} \cdots   \vec{D}_{j}^{(i, d)}\vec{Z}_j^{(p, d)}\end{bmatrix}  \vecop(\data_i).
\end{align}

Lastly, we also demonstrate that there is a one-to-one mapping between the solution to \eqref{eq:mlp_mixer_relu_convex} and \eqref{eq:mlp_mixer_primal}. In particular, we have a solution \(\{\vec{Z}_j^*\}_{j=1}^{m^*}\) to \eqref{eq:mlp_mixer_relu_convex} with optimal value \(p_{CVX}^*\), where \(m^* \leq n\) are non-zero. First, rearrange the solution to be of the form \(\tilde{\vec{Z}}_j^*\) of \eqref{eq:mlp_mixer_relu_convex_permuted}. Take the cone-constrained SVD of \(\tilde{\vec{Z}}_j^*\) as \(\sum_{x=1}^{r_j} \sigma_{jx} \vec{u}_{jx} \vec{v}_{jx}^\top\), where \(\vec{u}_{jx} \in \mathbb{R}^{s^2}\) and \(\vec{v}_{jx} \in \mathbb{R}^{dc}\). Let \(\mathrm{vec}^{-1}(\vec{u_{jx}}) \in \mathbb{R}^{s \times s}\) be the result of taking chunks of \(s\)-length vectors from \(\vec{u}_{jx}\) and stacking them in columns. Similarly, let \(\mathrm{vec}^{-1}(\vec{v_{jx}}) \in \mathbb{R}^{c \times d}\) be the result of taking chunks of \(c\)-length vectors from \(\vec{v}_{jx}\) and stacking them in columns. Furthermore we will let \(\mathrm{vec}^{-1}(\vec{u_{jx}})_t\) be the \(t\)th column of \(\mathrm{vec}^{-1}(\vec{u_{jx}})\). Then, recognize that 
\[\tilde{\vec{Z}}_j^{*^{(t, k)}} = \sum_{x=1}^{r_j} \sigma_{jx} \mathrm{vec}^{-1}(\vec{u_{jx}})_t \mathrm{vec}^{-1}(\vec{v_{jx}})_{k}^\top \]
Thus, given \(\vec{Z}_j^*\), we can form a candidate solution to \eqref{eq:mhsa_opt_primal} as follows:
\begin{align*}
    \tilde{\vec{Z}}_j^* &= \sum_{x=1}^{rx} \sigma_{jx} \vec{u}_{jx} \vec{v}_{jx}^\top,\,(2\vec{D}_j - \vec{I}_{nd})\data \vec{u}_{jx} \geq 0,\, \|\vec{u}_{jx}\|_2 =1,\,\|\vec{v}_{jx}\|_2 =1 \\
    \hat{\weightmat}_{1jx} &= \sqrt{\sigma_j}\mathrm{vec}^{-1}(\vec{u_{jx}}) \\
    \hat{\weightmat}_{2jx} &= \sqrt{\sigma_j}\mathrm{vec}^{-1}(\vec{v_{jx}})^\top
\end{align*}
We then have 
\begin{align*}
    \hat{p}_{NCVX} &= \sum_{i=1}^n \mathcal{L}\left(\sum_{j=1}^{m^*} \sum_{x=1}^{r_j} (\hat{\weightmat}_{1jx} \data_i)_+ \hat{\weightmat}_{2jx}, \vec{Y}_i \right) + \frac{\beta}{2}\sum_{j=1}^{m^*} \sum_{x=1}^{r_j}\|\hat{\weightmat}_{1j}\|_F^2 + \|\hat{\weightmat}_{2j}\|_F^2\\
    &= \sum_{i=1}^n \mathcal{L}\left(\data_i \sum_{j=1}^{m^*} \sum_{x=1}^{r_j}\sigma_j (\mathrm{vec}^{-1}(\vec{u_j}) \data_i)_+ \mathrm{vec}^{-1}(\vec{v_j})^\top, \vec{Y}_i \right) + \frac{\beta}{2}\sum_{j=1}^{m^*} \sum_{x=1}^{r_j} \|\sqrt{\sigma_j} \vec{u}_j\|_2^2 + \|\sqrt{\sigma_j} \vec{v}_j\|_2^2 \\
    &=  \sum_{i=1}^n \mathcal{L}\left( \sum_{t=1}^s \sum_{k=1}^d \sum_{j=1}^{m^*} \sum_{x=1}^{r_j} \sigma_j (\mathrm{vec}^{-1}(\vec{u_j})_t \data_i[t, k])_+ \mathrm{vec}^{-1}(\vec{v_j})_{k}^\top, \vec{Y}_i \right) + \frac{\beta}{2}\sum_{j=1}^{m^*} \sum_{x=1}^{r_j} \sigma_j + \sigma_j \\
    &=  \sum_{i=1}^n \mathcal{L}\left(\sum_{j=1}^{m^*} \sum_{t=1}^s \sum_{k=1}^d  \data_i[k, \ell] \vec{D}_j^{(i, k)} \tilde{\vec{Z}}_j^{*^{(t, k)}}, \vec{Y}_i \right) + \beta \sum_{j=1}^{m^*} \|\tilde{\vec{Z}}_j^*\|_*\\
    &= p_{CVX}^*
\end{align*}
Thus, the two solutions match.
\hfill\qedsymbol

\subsection{Proof of Lemma \ref{lem:fno}}
The expression \eqref{eq:fno_full_problem_def} can equivalently be written as
\begin{align}
    f_{FN}(\data_i) = \sigma\left(\left(\vec{F}^{-1} 
    \begin{bmatrix}{\vec{F}}_1^\top \vec{X}_i & &  \\  & \ddots \\ & & {\vec{F}}_{s}^\top \vec{X}_i \end{bmatrix} \begin{bmatrix} \vec{V}^{(1)} \\ \cdots \\ \vec{V}^{(s)} \end{bmatrix} \right)\weightmat_1 \right)\weightmat_2
\end{align}
and further as
\begin{align}
     f_{FN}(\data_i) = \sigma\left(\left({\vec{F}}^{-1} 
    \begin{bmatrix}{\vec{F}}_1^\top \vec{X}_i & &  \\  & \ddots \\ & & {\vec{F}}_{s}^\top \vec{X}_i \end{bmatrix} ({\vec{F}} \otimes \vec{I}_d) \begin{bmatrix} \vec{L}^{(1)} \\ \cdots \\ \vec{L}^{(s)} \end{bmatrix} \right)\weightmat_1 \right)\weightmat_2,
\end{align}
which simplifies to 
\begin{align}
    f_{FN}(\data_i) = \sigma\left( \left(\begin{bmatrix} \data_i  & {\data_i}_{(1)} \cdots  & {\data_i}_{(s)}  \end{bmatrix} \vec{L} \right)\weightmat_1 \right)\weightmat_2,
\end{align}
where \({\data_i}_{(u)}\) is \(\data_i\) circularly shifted by \(u\) spots in its first dimension, and \(\vec{K}\) has been reshaped to the form \(\mathbb{R}^{sd \times d}\). Noting that we can merge \(\vec{L}\weightmat_1\) into one matrix without losing any expressibility, we obtain
\begin{align}
     f_{FN}(\data_i) = \sigma\left(\mathrm{circ}(\vec{X}_i)\weightmat_1 \right)\weightmat_2 = \sum_{j=1}^m \sigma\left(\mathrm{circ}(\vec{X}_i)\weight_{1j} \right)\weight_{2j}^\top 
\end{align}
as desired. \hfill\qedsymbol
\subsection{Proof of Theorem \ref{theo:linear_fno}}
We start with the problem 
\begin{equation}
    p_{FN}^* = \min_{\weight_{1j}, \weight_{2j}} \sum_{i=1}^n \mathcal{L}\left( \sum_{j=1}^m \mathrm{circ}(\vec{X}_i)\weight_{1j} \weight_{2j}^\top , \labelmat_i\right) + \frac{\beta}{2} \sum_{j=1}^m \|\weight_{1j}\|_2^2 + \|\weight_{2j}\|_2^2
\end{equation}
We now apply Lemmas \ref{lem:rescaling} and \ref{lem:duality} to obtain 
\begin{align}
    p_{FN}^* &= \max_{\dualmat_i} -\sum_{i=1}^n \mathcal{L}^*(\dualmat_i, \labelmat_i) \nonumber \\
    \mathrm{s.t.}~&\max_{\|\weight_1\|_2 \leq 1} \|\sum_{i=1}^n \dualmat_i^\top \mathrm{circ}(\data_i) \weight_1\|_2 \leq \beta
\end{align}
We note that the norm constraint here has dimension \(c\) and \(\weight_1\) has dimension \(sd\), so by \cite{shapiro2009semi, pilanci2020neural} this strong duality result from Lemma \ref{lem:duality} requires that \(m^* \leq \min\{sd, c\}\). This is equivalent to 
\begin{align}
    p_{FN}^* &= \max_{\dualmat_i} -\sum_{i=1}^n \mathcal{L}^*(\dualmat_i, \labelmat_i) \nonumber \\
    \mathrm{s.t.}~& \|\sum_{i=1}^n \dualmat_i^\top \mathrm{circ}(\data_i)\|_2 \leq \beta
\end{align}
We form the Lagrangian as 
\begin{align}
    p_{FN}^* &= \max_{\dualmat_i}\min_{\lambda \geq 0}\min_{\|\vec{Z}\|_* \leq 1} -\sum_{i=1}^n \mathcal{L}^*(\dualmat_i, \labelmat_i) + \lambda(\beta - \mathrm{trace}(\vec{Z}^\top \sum_{i=1}^n \mathrm{circ}(\data_i)^\top \dualmat_i )).
\end{align}
We switch the order of the maximum and minimum using Sion's minimax theorem and maximize over \(\dualmat_i\)
\begin{align}
    p_{FN}^* &= \min_{\lambda \geq 0}\min_{\|\vec{Z}\|_* \leq 1} \sum_{i=1}^n \mathcal{L}(\mathrm{circ}(\data_i)\vec{Z}, \labelmat_i) + \beta \lambda.
\end{align}
Lastly, we rescale \(\tilde{\vec{Z}} = \lambda \vec{Z}\) to obtain 
\begin{align}
    p_{FN}^* &= \min_{\vec{Z} \in \mathbb{R}^{sd \times c} }  \sum_{i=1}^n \mathcal{L}\left(\mathrm{circ}(\vec{X}_i) \vec{Z}, \labelmat_i\right) + \beta \|\vec{Z}\|_{*}. 
\end{align}
as desired. \hfill\qedsymbol

\subsection{Proof of Theorem \ref{theo:relu_fno}}
We now apply Lemmas \ref{lem:rescaling} and \ref{lem:duality} with the ReLU activation function to obtain 
\begin{align}
    p_{FN}^* &= \max_{\dualmat_i} -\sum_{i=1}^n \mathcal{L}^*(\dualmat_i, \labelmat_i) \nonumber \\
    \mathrm{s.t.}~&\max_{\|\weight_1\|_2 \leq 1} \|\sum_{i=1}^n \dualmat_i^\top (\mathrm{circ}(\data_i) \weight_1)_+\|_2 \leq \beta
\end{align}
We note that the norm constraint here has dimension \(c\) and \(\weight_1\) has dimension \(sd\), so by \cite{shapiro2009semi, pilanci2020neural} this strong duality result from Lemma \ref{lem:duality} requires that \(m^* \leq n\min\{sd, c\}\). We introduce hyperplane arrangements \(\vec{D}_j\) and enumerate over all of them, yielding
\begin{align}
    p_{FN}^* &= \max_{\dualmat_i} -\sum_{i=1}^n \mathcal{L}^*(\dualmat_i, \labelmat_i) \nonumber \\
    \mathrm{s.t.}~&\max_{\substack{\|\weight_1\|_2 \leq 1 \\ j \in [P] \\ \vec{K}_j \weight_1 \geq 0}} \|\sum_{i=1}^n \dualmat_i^\top \vec{D}_j^{(i)} \mathrm{circ}(\data_i) \weight_1\|_2 \leq \beta.
\end{align}
Using the concept of dual norm, this is equivalent to 
\begin{align}
    p_{FN}^* &= \max_{\dualmat_i} -\sum_{i=1}^n \mathcal{L}^*(\dualmat_i, \labelmat_i) \nonumber \\
    \mathrm{s.t.}~&\max_{\substack{\|\vec{g}\|_2 \leq 1 \\ \|\weight_1\|_2 \leq 1 \\ j \in [P] \\ \vec{K}_j \weight_1 \geq 0}} \vec{g}^\top \sum_{i=1}^n \dualmat_i^\top \vec{D}_j^{(i)} \mathrm{circ}(\data_i) \weight_1 \leq \beta
\end{align}
We can also define sets \(\mathcal{C}_j := \{\vec{Z} = \vec{u}\vec{g}^\top \in \mathbb{R}^{s^2 \times dc}: \vec{K}_j\vec{u} \geq 0,\;\|\vec{Z}\|_* \leq 1\}\). Then, we have 
\begin{align}
    p_{FN}^* &= \max_{\dualmat_i} -\sum_{i=1}^n \mathcal{L}^*(\dualmat_i, \labelmat_i) \nonumber \\
    \mathrm{s.t.}~&\max_{\substack{ j \in [P] \\ \vec{Z} \in \mathcal{C}_j}} \mathrm{trace}\left(\sum_{i=1}^n \dualmat_i^\top \vec{D}_j^{(i)} \mathrm{circ}(\data_i) \vec{Z}\right) \leq \beta
\end{align}
We form the Lagrangian as 
\begin{align}
    p_{FN}^* &= \max_{\dualmat_i}\min_{\lambda \geq 0}\min_{\vec{Z}_j \in \mathcal{C}_j} -\sum_{i=1}^n \mathcal{L}^*(\dualmat_i, \labelmat_i) + \sum_{j=1}^P \lambda_j(\beta - \mathrm{trace}(\vec{Z}_j^\top \sum_{i=1}^n \vec{D}_j^{(i)} \mathrm{circ}(\data_i)^\top \dualmat_i )).
\end{align}
We switch the order of the maximum and minimum using Sion's minimax theorem and maximize over \(\dualmat_i\)
\begin{align}
    p_{FN}^* &= \min_{\lambda_j \geq 0}\min_{\vec{Z}_j \in \mathcal{C}_j} \sum_{i=1}^n \mathcal{L}(\sum_{j=1}^P \vec{D}_j^{(i)} \mathrm{circ}(\data_i)\vec{Z}_j, \labelmat_i) + \beta \sum_{j=1}^P \lambda_j.
\end{align}
Lastly, we rescale \(\tilde{\vec{Z}}_j = \lambda_j \vec{Z}_j\) to obtain 
\begin{align}
    p_{FN}^* &= \min_{\vec{Z}_j \in \mathbb{R}^{sd \times c} }  \sum_{i=1}^n \mathcal{L}\left(\sum_{j=1}^P \vec{D}_j^{(i)} \mathrm{circ}(\vec{X}_i) \vec{Z}_j, \labelmat_i\right) + \beta \sum_{j=1}^P \|\vec{Z}_j\|_{*, \mathrm{K}_j}. 
\end{align}
as desired. \hfill\qedsymbol

\subsection{Proof of Lemma \ref{lem:bfno}}
The expression \eqref{eq:bfno_fn} can equivalently be written as
\begin{align}
    f_{BFN}(\data_i) = \sigma\left({\vec{F}}^{-1} 
    \begin{bmatrix}{\vec{F}}_1^\top \vec{X}_i & &  \\  & \ddots \\ & & {\vec{F}}_{s}^\top \vec{X}_i \end{bmatrix} \begin{bmatrix} \vec{V}^{(1)} \\ \cdots \\ \vec{V}^{(s)} \end{bmatrix} \right)\weightmat_2
\end{align}
and further as
\begin{align}
     f_{BFN}(\data_i) = \sigma\left({\vec{F}}^{-1} 
    \begin{bmatrix}{\vec{F}}_1^\top \vec{X}_i & &  \\  & \ddots \\ & & {\vec{F}}_{s}^\top \vec{X}_i \end{bmatrix} ({\vec{F}} \otimes \vec{I}_d) \begin{bmatrix} \vec{L}^{(1)} \\ \cdots \\ \vec{L}^{(s)} \end{bmatrix}\right)\weightmat_2,
\end{align}
which simplifies to 
\begin{align}
    f_{BFN}(\data_i) = \sigma\left(\begin{bmatrix} \data_i  & {\data_i}_{(1)} \cdots  & {\data_i}_{(s)}  \end{bmatrix} \vec{L} \right)\weightmat_2,
\end{align}
where \({\data_i}_{(u)}\) is \(\data_i\) circularly shifted by \(u\) spots in its first dimension, and \(\vec{L}\) has been reshaped to the form \(\mathbb{R}^{sd \times d}\). Without loss of generality, noting the block diagonal structure over blocks of \(\vec{L}\), we permute the rows and corresponding columns of \(\vec{L}\) so that \(\vec{L}\) is a block diagonal matrix
\begin{align}
     f_{BFN}(\data_i) = \sigma\left(\begin{bmatrix} \mathrm{circ}(\data_i^{(1)}) \cdots  & \mathrm{circ}(\data_i^{(B)}) \end{bmatrix} \vec{L}\right)\weightmat_2.
\end{align}
Simplifying the term inside, we have
\begin{align}
     f_{BFN}(\data_i) = \sigma\left(\begin{bmatrix} \mathrm{circ}(\data_i^{(1)})\vec{L}^{(1)} \cdots  & \mathrm{circ}(\data_i^{(B)})\vec{L}^{(B)} \end{bmatrix}\right)\weightmat_2.
\end{align}
Now, assuming \(\sigma\) is applied elementwise, we have 
\begin{align}
     f_{BFN}(\data_i) =\begin{bmatrix} \sigma(\mathrm{circ}(\data_i^{(1)})\vec{L}^{(1)}) \cdots  & \sigma(\mathrm{circ}(\data_i^{(B)})\vec{L}^{(B)}) \end{bmatrix}\weightmat_2.
\end{align}
Lastly, we use the block structure on \(\vec{W}_2\) to obtain 
\begin{align}
     f_{BFN}(\data_i) =\begin{bmatrix} \sigma(\mathrm{circ}(\data_i^{(1)})\vec{L}^{(1)})\weightmat_2^{(1)} \cdots  & \sigma(\mathrm{circ}(\data_i^{(B)})\vec{L}^{(B)})\weightmat_2^{(B)}  \end{bmatrix}
\end{align}
which we can write equivalently as 
\begin{align}
    f_{BFN}(\vec{X}_i) &= \begin{bmatrix} f_{BFN}^{(1)}(\vec{X}_i) & \cdots & f_{BFN}^{(B)}(\vec{X}_i) \end{bmatrix}\\
    f_{BFN}^{(b)}(\vec{X}_i) &= \sum_{j=1}^m \sigma\left(\mathrm{circ}(\vec{X}_i^{(b)})  \weight_{1bj}\right)\weight_{2bj}^\top \nonumber
\end{align}
as desired. \hfill\qedsymbol

\subsection{Proof of Theorem \ref{theo:linear_bfno}}
We now apply Lemmas \ref{lem:rescaling} and \ref{lem:duality} with ReLU activation obtain 
\begin{align}
    p_{BFN}^* &= \max_{\dualmat_i} -\sum_{i=1}^n \mathcal{L}^*(\dualmat_i, \labelmat_i) \nonumber \\
    \mathrm{s.t.}~&\max_{\|\weight_{1b}\|_2 \leq 1} \|\sum_{i=1}^n {\dualmat_i^{(b)}}^\top \mathrm{circ}(\data_i^{(b)}) \weight_{1b}\|_2 \leq \beta~\forall b \in [B]
\end{align}
We note that the norm constraint here has dimension \(c/B\) and \(\weight_{1b}\) has dimension \(sd/B\), so by \cite{shapiro2009semi, pilanci2020neural} this strong duality result from Lemma \ref{lem:duality} requires that \(m^* \leq 1/B\min\{sd, c\}\). This is equivalent to 
\begin{align}
    p_{BFN}^* &= \max_{\dualmat_i} -\sum_{i=1}^n \mathcal{L}^*(\dualmat_i, \labelmat_i) \nonumber \\
    \mathrm{s.t.}~& \|\sum_{i=1}^n {\dualmat_i^{(b)}}^\top \mathrm{circ}(\data_i^{(b)}) \|_2 \leq \beta ~\forall b \in [B]
\end{align}
We form the Lagrangian as 
\begin{align}
    p_{BFN}^* &= \max_{\dualmat_i}\min_{\lambda \geq 0}\min_{\|\vec{Z}_b\|_* \leq 1} -\sum_{i=1}^n \mathcal{L}^*(\dualmat_i, \labelmat_i) + \sum_{b=1}^B \lambda_b(\beta - \mathrm{trace}(\vec{Z}_b^\top \sum_{i=1}^n \mathrm{circ}(\data_i^{(b)})^\top \dualmat_i^{(b)} )).
\end{align}
We switch the order of the maximum and minimum using Sion's minimax theorem and maximize over \(\dualmat_i^{(b)}\)
\begin{align}
    p_{BFN}^* &= \min_{\lambda \geq 0}\min_{\|\vec{Z}_b\|_* \leq 1} \sum_{i=1}^n \mathcal{L}(\begin{bmatrix} \lambda_1 \mathrm{circ}(\data_i^{(1)})\vec{Z}_1 & \cdots & \lambda_B \mathrm{circ}(\data_i^{(B)}) \vec{Z}_B \end{bmatrix}, \labelmat_i) + \beta \sum_{j=1}^B \lambda_b.
\end{align}
Lastly, we rescale \(\tilde{\vec{Z}}_b = \lambda_b \vec{Z}_b\) to obtain 
\begin{align}
    p_{BFN}^* &= \min_{\vec{Z} \in \mathbb{R}^{sd \times c} }  \sum_{i=1}^n \mathcal{L}\left(\begin{bmatrix} \mathrm{circ}(\data_i^{(1)}) \vec{Z}_1 & \cdots & \mathrm{circ}(\data_i^{(B)})  \vec{Z}_B \end{bmatrix}, \labelmat_i\right) + \beta \|\vec{Z}\|_{*}. 
\end{align}
as desired. \hfill\qedsymbol

\subsection{Proof of Theorem \ref{theo:relu_bfno}}

We now apply Lemmas \ref{lem:rescaling} and \ref{lem:duality} with the ReLU activation function to obtain 
\begin{align}
    p_{BFN}^* &= \max_{\dualmat_i} -\sum_{i=1}^n \mathcal{L}^*(\dualmat_i, \labelmat_i) \nonumber \\
    \mathrm{s.t.}~&\max_{\|\weight_{1b}\|_2 \leq 1} \|\sum_{i=1}^n {\dualmat_i^{(b)}}^\top  (\mathrm{circ}(\data_i^{(b)}) \weight_{1b})_+\|_2 \leq \beta~\forall b \in [B]
\end{align}
We note that the norm constraint here has dimension \(c/B\) and \(\weight_{1b}\) has dimension \(sd/B\), so by \cite{shapiro2009semi, pilanci2020neural} this strong duality result from Lemma \ref{lem:duality} requires that \(m^* \leq n/B\min\{sd, c\}\). We introduce hyperplane arrangements \(\vec{D}_{b, j}\) and enumerate over all of them, yielding
\begin{align}
    p_{BFN}^* &= \max_{\dualmat_i} -\sum_{i=1}^n \mathcal{L}^*(\dualmat_i, \labelmat_i) \nonumber \\
    \mathrm{s.t.}~&\max_{\substack{\|\weight_{1b}\|_2 \leq 1 \\ j \in [P_b] \\ \vec{K}_{b, j} \weight_{1b} \geq 0}} \|\sum_{i=1}^n {\dualmat_i^{(b)}}^\top \vec{D}_{b, j}^{(i)} \mathrm{circ}(\data_i^{(b)}) \weight_{1b}\|_2 \leq \beta~\forall b \in [B].
\end{align}
Using the concept of dual norm, this is equivalent to 
\begin{align}
    p_{BFN}^* &= \max_{\dualmat_i} -\sum_{i=1}^n \mathcal{L}^*(\dualmat_i, \labelmat_i) \nonumber \\
    \mathrm{s.t.}~&\max_{\substack{\|\vec{g}_b\|_2 \leq 1 \\ \|\weight_{1b}\|_2 \leq 1 \\ j \in [P_b] \\ \vec{K}_{b, j} \weight_{1b} \geq 0}} \vec{g}^\top \sum_{i=1}^n {\dualmat_i^{(b)}}^\top \vec{D}_{b, j}^{(i)} \mathrm{circ}(\data_i^{(b)}) \weight_{1b} \leq \beta~\forall b \in [B].
\end{align}
We can also define sets \(\mathcal{C}_{b,j} := \{\vec{Z} = \vec{u}\vec{g}^\top \in \mathbb{R}^{s^2 \times dc}: \vec{K}_{b,j}\vec{u} \geq 0,\;\|\vec{Z}\|_* \leq 1\}\). Then, we have 
\begin{align}
    p_{BFN}^* &= \max_{\dualmat_i} -\sum_{i=1}^n \mathcal{L}^*(\dualmat_i, \labelmat_i) \nonumber \\
    \mathrm{s.t.}~&\max_{\substack{ j \in [P_b] \\ \vec{Z} \in \mathcal{C}_{b,j}}} \mathrm{trace}\left(\sum_{i=1}^n {\dualmat_i^{(b)}}^\top \vec{D}_{b, j}^{(i)} \mathrm{circ}(\data_i^{(b)}) \vec{Z}\right) \leq \beta~\forall b \in [B].
\end{align}
We form the Lagrangian as 
\begin{align}
    p_{BFN}^* &= \max_{\dualmat_i}\min_{\lambda \geq 0}\min_{\vec{Z}_{b,j} \in \mathcal{C}_{b,j}} -\sum_{i=1}^n \mathcal{L}^*(\dualmat_i, \labelmat_i) + \sum_{b=1}^B \sum_{j=1}^{P_b} \lambda_{b,j}(\beta - \mathrm{trace}(\vec{Z}_{b,j}^\top \sum_{i=1}^n \vec{D}_{b,j}^{(i)} \mathrm{circ}(\data_i^{(b)})^\top \dualmat_i^{(b)} )).
\end{align}
We switch the order of the maximum and minimum using Sion's minimax theorem and maximize over \(\dualmat_i\)
\begin{align}
    p_{BFN}^* &= \min_{\lambda_j \geq 0}\min_{\vec{Z}_{b,j} \in \mathcal{C}_{b,j}} \sum_{i=1}^n \mathcal{L}(\begin{bmatrix} \sum_{j=1}^{P_1}\lambda_{1, j} \vec{D}_{1,j}\mathrm{circ}(\vec{X}_i^{(1)})\vec{Z}_{1,j} & \cdots &  \sum_{j=1}^{P_B} \lambda_{B, j}\vec{D}_{B,j}\mathrm{circ}(\vec{X}_i^{(B)})\vec{Z}_{B,j} \end{bmatrix}, \labelmat_i) + \beta \sum_{b=1}^B \sum_{j=1}^{P_b}\lambda_{b,j}.
\end{align}
Lastly, we rescale \(\tilde{\vec{Z}}_{b,j} = \lambda_{b,j} \vec{Z}_{b,j}\) to obtain
\begin{align}
    p_{BFN}^* &= \min_{\vec{Z}_{b, j} }  \sum_{i=1}^n \mathcal{L}\left(\begin{bmatrix} \sum_{j=1}^{P_1} \vec{D}_{1,j}\mathrm{circ}(\vec{X}_i^{(1)})\vec{Z}_{1,j} & \cdots &  \sum_{j=1}^{P_B} \vec{D}_{B,j}\mathrm{circ}(\vec{X}_i^{(B)})\vec{Z}_{B,j} \end{bmatrix}, \labelmat_i\right) + \beta \sum_{b=1}^B \sum_{j=1}^{P_b} \|\vec{Z}_{j, b}\|_{*, \mathrm{K}_{b,j}},
\end{align}
as desired. \hfill\qedsymbol

\section{Experimental Details}\label{sec:appendix}
\begin{table}
\caption{Parameter count for convex heads used for the experiments in Table \ref{tab:results} .}
\label{tab:params}
\begin{center}
\begin{small}
\begin{sc}
\begin{tabular}{lcccc}
\toprule
Convex Head & Act. & Params\\
\midrule
Self-Attention & \multirow{6}*{Linear}  & 50.5M \\
MLP-Mixer & &  98.9M \\
B-FNO & &  392K\\
FNO & & 1.96M \\
MLP & &  1.96M \\
Linear & & 10K\\
\midrule
Self-Attention & \multirow{5}*{ReLU} & 253M\\
MLP-Mixer & &  196M\\
B-FNO & & 39.2M\\
FNO & & 196M \\
MLP & & 196M \\ 
\bottomrule
\end{tabular}
\end{sc}
\end{small}
\end{center}
\vskip -0.1in
\end{table}
All heads were trained on two NVIDIA 1080 Ti GPUs using the Pytorch deep learning library \cite{paszke2019pytorch}. For our backbone, we used pre-trained weights from the Pytorch Image Models library \cite{rw2019timm}. For all experiments, we trained each head for 70 epochs, and used a regularization parameter of \(\beta = 2 \times 10^{-2}\), the AdamW optimizer \cite{loshchilov2017decoupled}, and a cosine learning rate schedule with a warmup of three epochs with warmup learning rate of \(2 \times 10^{-7}\), an initial learning rate chosen based on training accuracy of either \(5 \times 10^{-3}\) or \(10^{-4}\), and a final learning rate of \(2 \times 10^{-2}\) times the initial learning rate. Data augmentation was performed using AutoAugment \cite{cubuk2018autoaugment}, along with color jittering, label smoothing, and training data interpolation. All heads aside from the self-attention head were trained using a batch size of 100, whereas the self-attention head was trained with a batch size of 20. For all ReLU heads, we choose a number of neurons such that the number of parameters across FNO, MLP-Mixer, self-attention, and MLPs are roughly equal. We provide information about the number of parameters in each head in Table \ref{tab:params}. \\\\
As an ablation, we also studied the effect of changing the backbone architecture on the results of this experiment. In particular, as a backbone, we also tried using a ViT-base model \cite{dosovitskiy2020image} with \(16 \times 16\) patches pre-trained on ImageNet-1k images of size \(224 \times 224\) (\(s=196\), \(d=768\)). Then, we followed the same average pooling approach as for the gMLP backbone experiments, and kept all other network parameters the same. The results of this experiment are summarized in Table \ref{tab:results_vit}. \\\\
We see from this table that some of the general results from Table \ref{tab:results} still hold, though in this case MLP-Mixer architectures and self-attention architectures are roughly equivalent in performance. We suspect that one major reason for the larger gap between the two methods in Table \ref{tab:results} could be due to the backbone architecture, since the gMLP architecture is MLP-based, as opposed to ViT which is self-attention based. Thus, we speculate that adding an additional MLP-Mixer head to gMLP may be more concordant with the features extracted from the gMLP backbone, whereas the inverse is true for the ViT backbone. \\\\
In order to avoid the heavy computational cost of nuclear norm minimization for all considered convex models (besides ``linear", which is just a logistic regression with standard weight decay), we rely on the Burer-Monteiro factorization \cite{burer2005local}. In particular, if one has the problem 
\begin{equation}\label{eq:bm_orig}
    \min_{\vec{Z} \in \mathbb{R}^{a \times c}} \sum_{i=1}^n \mathcal{L}(f_i(\vec{Z}), \vec{Y}_i) + \beta \|\vec{Z}\|_*,  
\end{equation}
we know that this is equivalent to 
\begin{equation}\label{eq:bm_ncvx}
    \min_{\vec{U} \in \mathbb{R}^{a \times b}, \vec{V} \in \mathbb{R}^{c \times b}} \sum_{i=1}^n \mathcal{L}(f_i(\vec{U}\vec{V}^\top ), \vec{Y}_i) + \frac{\beta}{2}\left( \|\vec{U}\|_F^2 + \|\vec{V}\|_F^2\right),
\end{equation}
granted that the solution \(\vec{Z}^*\) to \eqref{eq:bm_orig} has a rank less than the new latent dimension \(b\) \cite{recht2010guaranteed}. However, while \eqref{eq:bm_orig} is convex, \eqref{eq:bm_ncvx} is not. One can also show that if the solution \(\vec{Z}^*\) to \eqref{eq:bm_orig} has a rank less than the new latent dimension \(b\), the problem \eqref{eq:bm_ncvx} has no spurious local minima \cite{burer2005local}. We can also show that any stationary point \(\hat{\vec{Z}} = \hat{\vec{U}}\hat{\vec{V}}^\top \) of \eqref{eq:bm_ncvx}, achieved e.g., with gradient descent, is a global optimum for \eqref{eq:bm_orig} if it satisfies the following qualification condition
\begin{equation}
    \|\sum_{i=1}^n \nabla_{\vec{Z}}\mathcal{L}(f_i(\hat{\vec{Z}}), \labelmat_i)\|_2 \leq \beta,
\end{equation}
see \cite{mardani2013decentralized, mardani2015subspace} for the proof and more details. We thus employ the Burer-Monteiro factorization for all problems, except linear. For the transformer architectures, we choose the latent dimension such that the total model size is unchanged, whereas for the MLP architecture, we choose the latent dimension to have parameters on the same order as the other transformer architectures (e.g. \(b = sc/2\)). 
\begin{table}[t]
\caption{CIFAR-100 classification accuracy for training a single \emph{convex} head. Embeddings are generated from gMLP-S pre-trained on ImageNet. Note that the backbone is not fine-tuned.}
\label{tab:results_vit}
\vskip 0.15in
\begin{center}
\begin{small}
\begin{sc}
\begin{tabular}{lcccc}
\toprule
Convex Head & Act. & Top-1 & Top-5\\
\midrule
Self-Attention & \multirow{6}*{Linear} & \bf{67.24} & 88.63\\
MLP-Mixer & & 66.55 & \bf{88.70}\\
B-FNO & & 36.04 & 64.28\\
FNO & & 63.88 & 87.20\\
MLP & & 56.68 & 81.79\\
Linear & & 57.00 & 81.96 \\
\midrule
Self-Attention & \multirow{5}*{ReLU}& \bf{68.16} & 88.74\\
MLP-Mixer & & 67.84 & \bf{89.24}\\
B-FNO & & 66.66 & 87.93\\
FNO & & 67.87 & 88.97\\
MLP & & 64.14 & 86.57 \\
\bottomrule
\end{tabular}
\end{sc}
\end{small}
\end{center}
\vskip -0.1in
\end{table}
\section{Additional Theoretical Results}
\subsection{Visualizing the Constrained Nuclear Norm}\label{sec:constrained_nuc}
Here, we seek to provide some additional intuition around the constrained nuclear norm \(\|\vec{Z}\|_{*, \mathrm{K}}\) for a simple case, to contrast the regularization for convex ReLU against convex linear and gated ReLU networks. We provide a visualization for \(\|\vec{Z}\|_{*, \mathrm{K}}\) compared to \(\|\vec{Z}\|_*\) in the case that \(\vec{K} = \vec{I}\). This would occur in a ReLU network when \(\vec{X} = \vec{I}\) and we encounter a particular hyperplane arrangement \(\vec{D}_j = \vec{I}\). One can also note that in ReLU MLPs where \(n \leq d\) and the data \(\data\) is whitened, the convex optimization objective will reduce to a linear model with this norm as its regularization \cite{sahiner2020vector}. \\\\
In particular, in the case that \(n=d=c=2\), we can visualize the coordinates of \(\vec{Z} \in \mathbb{R}^{2 \times 2}\) which satisfy \(\|\vec{Z}\|_{*, \mathrm{K}}  \leq 1\), contrasted with the coordinates of \(\vec{Z}\) which satisfy  \(\|\vec{Z}\|_{*} \leq 1\). We do so in Figure \ref{fig:cvx-hull}, illustrating the complicated, yet still convex, regularization in the case of a ReLU neural network. 

\begin{figure*}[t!]
  \centering
  \begin{center}
      \includegraphics[width =0.75\linewidth]{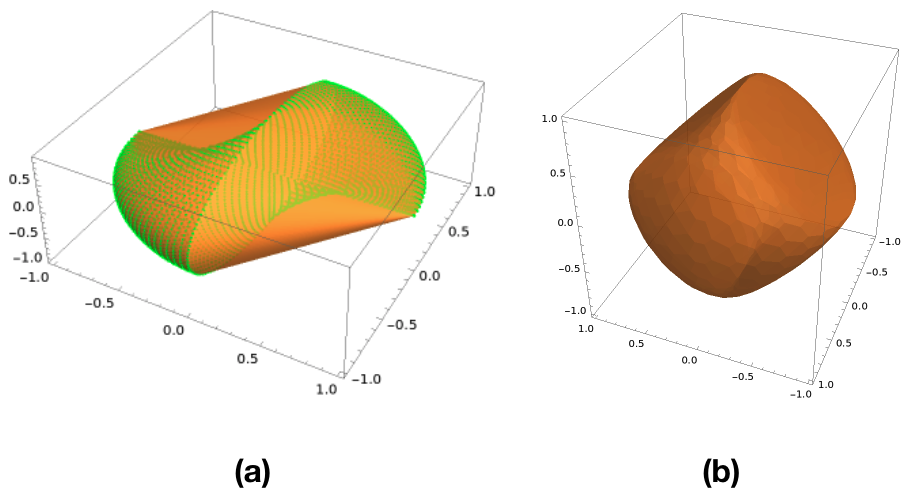}
  \end{center}
  \caption{\small Visualization of the constrained nuclear norm \(\|\vec{Z}\|_{*, \mathrm{K}}\) defined in Eq. (\ref{const_nuc_norm}). Here, we visualize \(\vec{Z} = \begin{bmatrix} z_1 & z_2\\ z_3 & z_4 \end{bmatrix} \in \mathbb{R}^{2 \times 2}\) in terms of three of its coordinates, \(z_1\), \(z_2\), and \(z_4\), with the final coordinate \(z_3\) fixed as 0. \textbf{(a)} Constrained nuclear norm space arises in the convex formulation of ReLU networks (see \eqref{const_nuc_norm}). Here, the dotted green region illustrates the set $\mathcal{C}^\prime=\{\vec{Z} = \vec{u}\vec{v}^\top: \; \vec{u} \geq 0, \, \|\vec{Z}\|_* \leq 1, \vec{v} \in \mathbb{R}^c\}$, and the opaque orange region shows its convex hull $\mathrm{conv}(\mathcal{C}^\prime)$. \textbf{(b)} In the case of linear activation, the constrained nuclear norm reduces to the standard nuclear norm \(\|\vec{Z}\|_*\). Notice that introducing ReLU nonlinearity breaks the symmetry present in \textbf{(b)} and yields a complicated non-convex space, i.e., green dots in \textbf{(a)}. However, our convex analytic approach relaxes this set as the convex hull, i.e., orange solid space in \textbf{(a)}, by keeping the extreme points, which are the points that play a crucial role in the optimal solution, intact. Therefore, we are able to convert standard non-convex ReLU network training problems into polynomial-time trainable convex optimization problems.}
  \label{fig:cvx-hull}
\end{figure*}

\subsection{Gated ReLU activation extensions}\label{sec:grelu}

\subsubsection{Self-Attention}
\begin{theorem}
For the Gated ReLU activation multi-head self-attention training problem \eqref{eq:mhsa_opt_primal}, we define 
\begin{align*}
    \vec{X} &:= \begin{bmatrix} \vec{X}_1 \otimes \vec{X}_1 \\ \cdots \\ \vec{X}_n \otimes \vec{X}_n  \end{bmatrix} \\
    \{\vec{D}_j\}_{j=1}^m &:= \{\mathrm{diag}\left(\mathbbm{1}\{\vec{X}\vec{h}_j \geq 0\}\right)\}_{j=1}^m ,
\end{align*}
for fixed gates \(\{\vec{h}_j \in \mathbb{R}^{d^2}\}_{j=1}^m\). Then, the standard non-convex training objective is equivalent to a convex optimization problem, given by
\begin{alignat}{9}\label{eq:mhsa_gated_relu_convex}
    p_{SA}^* &= \min_{\vec{Z}_j \in \mathbb{R}^{d^2 \times dc} } \sum_{i=1}^n \mathcal{L}\left(\sum_{j=1}^m \sum_{k=1}^d \sum_{\ell=1}^d \vec{G}_{i,j}^{(k, \ell)}\vec{X}_i\vec{Z}_j^{(k, \ell)}, \labelmat_i\right) + \beta \sum_{j=1}^m \|\vec{Z}_j\|_{*},
\end{alignat}
where 
\begin{align*}
    \vec{G}_{i,j} &:= (\vec{X}_i \otimes \vec{I}_{s})^\top \vec{D}_{j}^{(i)} (\vec{X}_i \otimes \vec{I}_{s}),
\end{align*} 
for \(\vec{G}_{i,j}^{(k, \ell)} \in \mathbb{R}^{s \times s}\) and \(\vec{Z}_j^{(k, \ell)} \in \mathbb{R}^{d \times c}\).
\end{theorem}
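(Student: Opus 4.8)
The plan is to mirror the proof of Theorem~\ref{theo:relu_attention} almost verbatim, exploiting the fact that the only structural change is that the ReLU gates are now \emph{fixed} rather than induced by the nonlinearity. First I would apply Lemma~\ref{lem:rescaling}: this remains valid because the gated activation, which replaces the ReLU $(\cdot)_+$ in \eqref{eq:mhsa_opt_primal} by the prescribed diagonal mask $\vec{D}_j^{(i)}$ applied to $\vecop(\data_i\weightmat_{1j}\data_i^\top)$, is still positively homogeneous in $\weightmat_{1j}$ (the gate does not depend on the weights). This reduces the weight-decay penalty to a Frobenius-norm penalty on $\weightmat_{2j}$ subject to $\|\weightmat_{1j}\|_F\le 1$. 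Then I would dualize over $\weightmat_{2j}$ as in Lemma~\ref{lem:duality}; the only bookkeeping change is that, since each neuron carries its own fixed gate, the single shared dual constraint is replaced by one constraint per gate index, $\|\sum_{i}\dualmat_i^\top g_j(\data_i;\weightmat_{1j})\|_F\le\beta$ for $j\in[m]$.

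Next I would vectorize each constraint via $\vecop(\vec{A}\vec{B}\vec{C})=(\vec{C}^\top\otimes\vec{A})\vecop(\vec{B})$ to obtain $\|\sum_i(\data_i^\top\otimes\dualmat_i^\top)\vec{D}_j^{(i)}(\data_i\otimes\data_i)\weight_1\|_2\le\beta$, with $\weight_1=\vecop(\weightmat_1)\in\mathbb{R}^{d^2}$. This is exactly the expression appearing in the ReLU proof \emph{after} an arrangement $\vec{D}_j$ has been selected, except that here $\vec{D}_j$ is given and there is no accompanying feasibility constraint $\vec{K}_j\weight_1\ge 0$: the gate pattern is fixed and need not be realized by $\weight_1$. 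Consequently, when I pass to the dual-norm/trace reformulation and introduce the factorized sets, the cone constraint drops out and $\mathcal{C}_j$ reduces to the ordinary nuclear-norm ball $\{\vec{u}\vec{g}^\top:\|\vec{u}\vec{g}^\top\|_*\le 1\}$. I would then form the Lagrangian over the finite family $\{\vec{Z}_j\}_{j=1}^m$, invoke Sion's minimax theorem (strong duality holds for $\beta>0$ by Slater's condition once $m\ge m^*$), and maximize over $\dualmat_i$ using the commutation-matrix identity for $\vecop(\data_i\otimes\dualmat_i)$, exactly as in Theorem~\ref{theo:relu_attention}.

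The fitting-term simplification then transfers unchanged, as it never used the cone constraint: the Kronecker/commutation tangle collapses to $\sum_{j=1}^m\sum_{k,\ell}\vec{G}_{i,j}^{(k,\ell)}\data_i\vec{Z}_j^{(k,\ell)}$ with $\vec{G}_{i,j}=(\data_i\otimes\vec{I}_s)^\top\vec{D}_j^{(i)}(\data_i\otimes\vec{I}_s)$, while the regularizer, having lost its cone constraint, becomes the plain nuclear norm $\beta\sum_{j=1}^m\|\vec{Z}_j\|_*$. To close the equivalence I would exhibit the one-to-one correspondence between convex and non-convex optima as before, but here the reconstruction is cleaner: given optimal $\vec{Z}_j^\star$, an ordinary (rather than cone-constrained) SVD $\vec{Z}_j^\star=\sum_x\sigma_{jx}\vec{u}_{jx}\vec{v}_{jx}^\top$ yields neuron weights $\hat{\weightmat}_{1jx}=\sqrt{\sigma_{jx}}\,\mathrm{vec}^{-1}(\vec{u}_{jx})$ and $\hat{\weightmat}_{2jx}=\sqrt{\sigma_{jx}}\,\mathrm{vec}^{-1}(\vec{v}_{jx})^\top$, with no sign feasibility to verify.

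The main obstacle I anticipate is not the algebra, which carries over wholesale, but justifying the duality step with \emph{per-gate} constraints: Lemma~\ref{lem:duality} as stated assumes all neurons share one activation map, so I would need to verify that grouping the neurons by their fixed gate still produces the finite constraint system $\{\|\sum_i\dualmat_i^\top g_j\|_F\le\beta\}_{j=1}^m$ and that the Slater/minimax requirements hold with $m^*$ counting the total constraint dimension across all $m$ gates. A secondary check is confirming that removing the cone constraint is precisely what converts $\|\cdot\|_{*,\mathrm{K}_j}$ into $\|\cdot\|_*$, so that the fixed-gate problem is genuinely unconstrained in its left factors and the reconstruction above is always feasible.
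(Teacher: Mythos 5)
Your proposal is correct and follows essentially the same route as the paper's own proof: apply the rescaling and duality lemmas to get one dual constraint per fixed gate, vectorize with the Kronecker identity, observe that the absence of a cone constraint turns $\|\cdot\|_{*,\mathrm{K}_j}$ into the plain nuclear norm, and then run the Lagrangian/Sion/commutation-matrix argument of Theorem~\ref{theo:relu_attention} unchanged. The per-gate duality subtlety you flag is real but is handled in the paper exactly as you suggest (a finite family of constraints indexed by $j\in[m]$), so no new idea is needed.
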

We note here that instead of the constrained nuclear norm penalty, we have a standard nuclear norm penalty on \(\vec{Z}\). 
\begin{proof}
We apply Lemmas \ref{lem:rescaling} and \ref{lem:duality} to \eqref{eq:mhsa_opt_primal} with the gated ReLU activation function to obtain
\begin{align}
 p_{SA}^* &= \max_{\dualmat_i} -\sum_{i=1}^n \mathcal{L}^*\left(\dualmat_i, \labelmat_i\right) \nonumber \\
  \mathrm{s.t.}~& \max_{\substack{\|\weightmat_{1j}\|_F\leq 1}}  \left\| \sum_{i=1}^N \dualmat_i^\top  \vec{D}_j^{(i)} \vec{X}_i \weightmat_{1j} \vec{X}_i^\top )_+ \vec{X}_i \right\|_F~\forall j \in [m]. 
\end{align}
We again apply \(\vecop(\vec{A}\vec{B}\vec{C}) = (\vec{C}^\top \otimes \vec{A})\vecop(\vec{B})\) \cite{magnus2019matrix} to obtain 
\begin{align}
 p_{SA}^* &= \max_{\dualmat_i} -\sum_{i=1}^n \mathcal{L}^*\left(\dualmat_i, \labelmat_i\right) \nonumber \\
  \mathrm{s.t.}~& \max_{\substack{\|\weight_{1j}\|_2\leq 1}}  \left\| \sum_{i=1}^N (\data_i \otimes \dualmat_i^\top) \vec{D}_j^{(i)}  (\vec{X}_i\otimes \data_i) \weight_1\right\|_2 \leq \beta ~\forall j \in [m]. 
\end{align}
Now, using the concept of dual norm, this is equal to 
\begin{align}
 p_{SA}^* &= \max_{\dualmat_i} -\sum_{i=1}^n \mathcal{L}^*\left(\dualmat_i, \labelmat_i\right) \nonumber \\
  \mathrm{s.t.}~&\max_{\substack{\|\vec{g}\|_2 \leq 1 \\ \|\weight_{1j}\|_2\leq 1 \\ j \in [m]}} \vec{g}^\top \sum_{i=1}^N (\data_i^\top  \otimes \dualmat_i^\top)  \vec{D}_{j}^{(i)}(\data_i \otimes \data_i) \weight_{1j} \leq \beta
\end{align}
Then, we have 
\begin{align}
 p_{SA}^* &= \max_{\dualmat_i} -\sum_{i=1}^n \mathcal{L}^*\left(\dualmat_i, \labelmat_i\right) \nonumber \\
  \mathrm{s.t.}~&\max_{\substack{j \in [m] \\ \|\vec{Z}\|_*}} \mathrm{trace}\left( \sum_{i=1}^n (\data_i^\top  \otimes \dualmat_i^\top)  \vec{D}_{j}^{(i)}(\data_i \otimes \data_i) \vec{Z} \right) \leq \beta
\end{align}
Now, we simply need to form the Lagrangian and solve. The Lagrangian is given by
\begin{equation}
     p_{SA}^* = \max_{\dualmat_i} \min_{\lambda \geq 0} \min_{\|\vec{Z}_j\|_* \in \mathcal{C}_j} -\sum_{i=1}^n \mathcal{L}^*(\dualmat_i, \labelmat_i) + \sum_{j=1}^m \lambda_j \left(\beta -  \sum_{i=1}^n \vecop\left(\vec{D}_{j}^{(i)} (\data_i \otimes \data_i)\vec{Z}_j\right)^\top\vecop\left(\data_i \otimes \dualmat_i \right)  \right)
\end{equation}
We now can switch the order of max and min via Sion's minimax theorem and maximize over \(\dualmat_i\). Defining \(\vec{K}_{c, s}\) as the \((c, s)\) commutation matrix \cite{magnus2019matrix}:
\begin{align*}
    \vecop(\data_i \otimes  \dualmat_i)= \left((\vec{I}_{d} \otimes \vec{K}_{c, s})(\vecop(\data_i) \otimes \vec{I}_c) \otimes \vec{I}_{s}\right)\vecop(\dualmat_i)
\end{align*}
Maximizing over \(\dualmat_i\), we have 
\begin{equation}
    p_{SA}^* = \min_{\lambda \geq 0} \min_{\|\vec{Z}_j\|_* \leq 1}  \sum_{i=1}^n \mathcal{L}\left(\sum_{j=1}^m \left((\vecop(\data_i)^\top \otimes \vec{I}_c)(\vec{I}_d \otimes \vec{K}_{s, c}) \otimes \vec{I}_{s} \right)\vecop\left(\vec{D}_{j}^{(i)} (\data_i \otimes \data_i)\lambda_j\vec{Z}_j\right), \vecop(\labelmat_i)\right) + \beta \sum_{j=1}^m \lambda_j. 
\end{equation}
Again rescaling \(\tilde{\vec{Z}}_j = \lambda_j\vec{Z}_j \), we have
\begin{equation}
    p_{SA}^* = \min_{\vec{Z}_j \in \mathbb{R}^{d^2 \times dc} }  \sum_{i=1}^n \mathcal{L}\left(\sum_{j=1}^m \left((\vecop(\data_i)^\top \otimes \vec{I}_c)(\vec{I}_d \otimes \vec{K}_{s, c}) \otimes \vec{I}_{s} \right)\vecop\left(\vec{D}_{j}^{(i)} (\data_i \otimes \data_i)\vec{Z}_j\right), \vecop(\labelmat_i)\right) + \beta \sum_{j=1}^m \|\vec{Z}_j\|_{*}.
\end{equation}
It appears as though this is a very complicated function, but it actually simplifies greatly. In particular, one can write this as 
\begin{align}
    p_{SA}^* &= \min_{\vec{Z}_j \in \mathbb{R}^{d^2 \times dc} }  \sum_{i=1}^n \mathcal{L}\left(\hat{\labelmat}_i, \labelmat_i\right) + \beta  \|\vec{Z}\|_* \nonumber \\
    \hat{\labelmat}_i[o, p] &:= \sum_{j=1}^m \sum_{k=1}^d \sum_{l=1}^d \sum_{t=1}^s \data_i[t, l]\data_i[t, k] \vec{D}_{j}^{(t, m)} \data_i[o, :]^\top \vec{Z}_j^{(k, l)}. 
\end{align}
Making any final simplifications, one obtains the desired result. 
\begin{align}
    p_{SA}^* &= \min_{\vec{Z}_j \in \mathbb{R}^{d^2 \times dc} } \sum_{i=1}^n \mathcal{L}\left(\sum_{j=1}^m \sum_{k=1}^d \sum_{\ell=1}^d \vec{G}_{i,j}^{(k, \ell)}\vec{X}_i\vec{Z}_j^{(k, \ell)}, \labelmat_i\right) + \beta \sum_{j=1}^m \|\vec{Z}_j\|_{*}. 
\end{align}
\end{proof}

\subsubsection{MLP-Mixer}
\begin{theorem}
For the Gated ReLU activation MLP-Mixer training problem \eqref{eq:mlp_mixer_primal}, we define 
\begin{align*}
    \vec{X} &:= \begin{bmatrix} \vec{X}_1^\top  \otimes \vec{I}_{s} \\ \cdots \\ \vec{X}_n^\top  \otimes \vec{I}_{s}  \end{bmatrix} \\
    \{\vec{D}_j\}_{j=1}^m &:= \{\mathrm{diag}\left(\mathbbm{1}\{\vec{X}\vec{h}_j \geq 0\}\right)\},
\end{align*}
for fixed gates \(\{\vec{h}_j \in \mathbb{R}^{s^2}\}_{j=1}^m\).
Then, the standard non-convex training objective is equivalent to a convex optimization problem, given by 
\begin{align}\label{eq:mlp_mixer_gated_relu_convex}
    p_{MM}^* &= \min_{\vec{Z}_j \in \mathbb{R}^{s^2\times dc} }  \sum_{i=1}^n \mathcal{L}\left(\begin{bmatrix} f_1(\vec{X}_i) & \cdots &  f_c(\vec{X}_i) \end{bmatrix} , \labelmat_i\right)+ \beta \sum_{j=1}^m \|\vec{Z}_j\|_{*}
\end{align}
where 
\begin{align*}
    f_p(\vec{X_i}) &:= \sum_{j=1}^m \begin{bmatrix}  \vec{D}_{j}^{(i, 1)}\vec{Z}_j^{(p, 1)} \cdots   \vec{D}_{j}^{(i, d)}\vec{Z}_j^{(p, d)}\end{bmatrix}  \vecop(\data_i)
\end{align*}
for \(\vec{D}_{j}^{(i,k)} \in \mathbb{R}^{s \times s}\) and \(\vec{Z}_j^{(p, k)} \in \mathbb{R}^{s \times s}\).
\end{theorem}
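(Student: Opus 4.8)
The plan is to reuse the five-step template from the Preliminaries (Lemma \ref{lem:rescaling} and Lemma \ref{lem:duality}) and to follow the proof of Theorem \ref{theo:relu_mlpmixer} almost verbatim, exploiting the simplifications afforded by the gates $\{\vec{h}_j\}_{j=1}^m$ being fixed rather than free. First I would apply Lemma \ref{lem:rescaling} to \eqref{eq:mlp_mixer_primal} with the gated activation $g(\data_i; \weightmat_{1j}) = \vec{D}_j^{(i)}(\weightmat_{1j}\data_i)$, which is positively homogeneous in $\weightmat_{1j}$ as the lemma requires. Because each neuron now carries its own fixed gate $\vec{D}_j^{(i)}$, the dualization step (a per-neuron variant of Lemma \ref{lem:duality}, exactly as in the gated self-attention proof) produces, for every $j \in [m]$, a separate spectral-norm constraint $\max_{\|\weightmat_{1j}\|_F \le 1}\|\sum_{i=1}^n \dualmat_i^\top \vec{D}_j^{(i)}(\weightmat_{1j}\data_i)\|_F \le \beta$.

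I would then vectorize via $\vecop(\vec{A}\vec{B}\vec{C}) = (\vec{C}^\top \otimes \vec{A})\vecop(\vec{B})$ to obtain the inner term $(\vec{I}_d \otimes \dualmat_i^\top)\vec{D}_j^{(i)}(\data_i^\top \otimes \vec{I}_s)\weight_{1j}$ with $\weight_{1j} = \vecop(\weightmat_{1j})$, and introduce an auxiliary vector $\vec{g}$ through the dual characterization of the spectral norm. The decisive simplification relative to the ReLU proof is that, since the gates are prescribed, there is no cone constraint $\vec{K}_j\weight_{1j} \ge 0$ to carry along; hence the feasible set for the lifted variable is the full nuclear-norm ball $\{\vec{Z} : \|\vec{Z}\|_* \le 1\}$ rather than $\mathcal{C}_j$, and the constrained nuclear norm $\|\cdot\|_{*,\mathrm{K}_j}$ is everywhere replaced by the ordinary nuclear norm $\|\cdot\|_*$. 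This mirrors exactly the collapse from ReLU to gated ReLU already recorded in Section \ref{sec:cvx_background}.

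The remainder is mechanical: form the Lagrangian with multipliers $\lambda_j \ge 0$ on the $m$ constraints, invoke Sion's minimax theorem to exchange the outer $\max_{\dualmat_i}$ with the inner minima (Slater holds since $\beta > 0$), apply the commutation-matrix identity $\vecop(\vec{I}_d \otimes \dualmat_i) = ((\vec{I}_d \otimes \vec{K}_{c,d})(\vecop(\vec{I}_d)\otimes \vec{I}_c)\otimes \vec{I}_s)\vecop(\dualmat_i)$, and maximize over $\dualmat_i$ so that $\mathcal{L}^*$ returns to $\mathcal{L}$ by Fenchel biconjugacy. A final rescaling $\tilde{\vec{Z}}_j = \lambda_j \vec{Z}_j$ folds the multipliers into the variables and turns the $\lambda_j$ penalty into $\beta \sum_{j=1}^m \|\vec{Z}_j\|_*$.

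The step I expect to be the main obstacle is the concluding reduction of the unwieldy Kronecker/commutation expression to the clean block form $f_p(\data_i) = \sum_{j=1}^m [\,\vec{D}_j^{(i,1)}\vec{Z}_j^{(p,1)} \cdots \vec{D}_j^{(i,d)}\vec{Z}_j^{(p,d)}\,]\vecop(\data_i)$. This is pure index bookkeeping but delicate: one must partition each $\vec{Z}_j$ into blocks $\vec{Z}_j^{(p,k)} \in \mathbb{R}^{s \times s}$ indexed by output channel $p$ and feature $k$, and match each diagonal gate block $\vec{D}_j^{(i,k)}$ to the $k$-th feature slice of $\data_i$. Fortunately this is identical to the corresponding reduction in the proof of Theorem \ref{theo:relu_mlpmixer}, with the sum over the fixed $m$ gates replacing the sum over all $P$ arrangements, so it can be transcribed directly. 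The one-to-one correspondence between the convex and non-convex optima then follows from the same SVD-based construction used there, now unconstrained on the left factors since the gates are fixed.
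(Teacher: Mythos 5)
Your proposal follows the paper's own proof essentially step for step: Lemmas \ref{lem:rescaling} and \ref{lem:duality} with the gated activation, the per-gate dual constraints, vectorization via $\vecop(\vec{A}\vec{B}\vec{C}) = (\vec{C}^\top \otimes \vec{A})\vecop(\vec{B})$, replacement of the cone-constrained set $\mathcal{C}_j$ by the plain nuclear-norm ball, the Lagrangian/Sion exchange, the commutation-matrix identity, the rescaling $\tilde{\vec{Z}}_j = \lambda_j \vec{Z}_j$, and the final block-index simplification transcribed from Theorem \ref{theo:relu_mlpmixer}. This is correct and matches the paper's argument.
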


\begin{proof}
We apply Lemmas \ref{lem:rescaling} and \ref{lem:duality} to \eqref{eq:mlp_mixer_primal} with the Gated ReLU activation function to obtain
\begin{align}
    p_{MM}^* &= \max_{\vec{V}_i} -\sum_{i=1}^n\mathcal{L}^* \left (\dualmat_i,\mathbf{Y}_i \right) \nonumber \\ 
   \text{s.t.} &~ \max_{\substack{\|\weightmat_1\|_F\leq 1 \\ j \in [m]}}   \left\|\sum_{i=1}^n \dualmat_i^T \sigma_j(\weightmat_1 \data_i)_+ \right\|_F \leq \beta
\end{align}
This is equivalent to \cite{magnus2019matrix}
\begin{align}
    p_{MM}^* &= \max_{\vec{V}_i} -\sum_{i=1}^n\mathcal{L}^* \left (\dualmat_i,\mathbf{Y}_i \right) \nonumber \\ 
   \text{s.t.} &~ \max_{\substack{\|\weightmat_1\|_F\leq 1 \\ j \in [m]}}   \left\|\sum_{i=1}^n  (\vec{I}_d \otimes \dualmat_i^T) \vec{D}_j^{(i)} (\data_i^\top \otimes \vec{I}_s)\vecop(\weightmat_1) \right\|_F \leq \beta
\end{align}
Now, using the concept of dual norm, this is equal to 
\begin{align}
 p_{MM}^* &= \max_{\vec{V}_i} -\sum_{i=1}^n \mathcal{L}^* \left (\dualmat_i,\mathbf{Y}_i \right) \nonumber \\ 
   \text{s.t.} &~
    \max_{\substack{\|\vec{g}\|_2 \leq 1 \\ \|\weight_1\|_2\leq 1 \\ j \in [m]}} \vec{g}^\top\sum_{i=1}^n (\vec{I}_d \otimes \dualmat_i^T) \vec{D}_j^{(i)} (\data_i^\top \otimes \vec{I}_{s}) \weight_1 \leq \beta
\end{align}
 Then, we have 
\begin{align}
 p_{MM}^* &= \max_{\vec{V}_i} -\sum_{i=1}^n \mathcal{L}^* \left (\dualmat_i,\mathbf{Y}_i \right) \nonumber \\ 
   \text{s.t.} &~
    \max_{\substack{j \in [m] \\ \|\vec{Z}\|_* \leq 1}} \mathrm{trace}\left( \sum_{i=1}^n (\vec{I}_d \otimes \dualmat_i^T) \vec{D}_j^{(i)} (\data_i^\top \otimes \vec{I}_{s})\vec{Z} \right) \leq \beta
\end{align}
Now, we simply need to form the Lagrangian and solve. The Lagrangian is given by
\begin{equation}
    p_{MM}^*= \max_{\dualmat_i} \min_{\lambda \geq 0} \min_{\|\vec{Z}_j\|_* \leq 1} -\sum_{i=1}^n \mathcal{L}^*(\dualmat_i, \labelmat_i) + \sum_{j=1}^m \lambda_j \left(\beta -  \sum_{i=1}^n\mathrm{trace}\left( (\vec{I}_d \otimes \dualmat_i^T) \vec{D}_j^{(i)} (\data_i^\top \otimes \vec{I}_{s}) \vec{Z} \right)\right)
\end{equation}
We now can switch the order of max and min via Sion's minimax theorem and maximize over \(\dualmat_i\):
\begin{equation}
    p_{MM}^* = \min_{\lambda \geq 0} \min_{\\|\vec{Z}_j\|_* \leq 1}\max_{\dualmat_i}  -\sum_{i=1}^n \mathcal{L}^*(\dualmat_i, \labelmat_i) + \sum_{j=1}^m \lambda_j \left(\beta -  \sum_{i=1}^n \vecop\left(\vec{D}_{j}^{(i)}(\data_i^\top \otimes \vec{I}_{s})\vec{Z}_j \right)^\top \vecop\left(\vec{I}_d \otimes \dualmat_i \right)  \right)
\end{equation}
Now, defining \(\vec{K}_{c, d}\) as the \((c, d)\) commutation matrix:
\begin{align*}
    \vecop(\vec{I}_d \otimes  \dualmat_i)= \left((\vec{I}_{d} \otimes \vec{K}_{c, d})(\vecop(\vec{I}_d) \otimes \vec{I}_c) \otimes \vec{I}_{s}\right)\vecop(\dualmat_i)
\end{align*}
Solving over \(\dualmat_i\) yields
\begin{equation}
    p_{MM}^* = \min_{\lambda \geq 0}\min_{\|\vec{Z}_j\|_* \leq 1}  \sum_{i=1}^n \mathcal{L}\left(\sum_{j=1}^m \left((\vecop(\vec{I}_d)^\top \otimes \vec{I}_c)(\vec{I}_d \otimes \vec{K}_{d, c}) \otimes \vec{I}_{s} \right)\vecop\left(\vec{D}_{j}^{(i)}(\data_i^\top \otimes \vec{I}_{s})\lambda_j\vec{Z}_j \right), \vecop(\labelmat_i)\right) + \beta \sum_{j=1}^m \lambda_j
\end{equation}
Re-scaling \(\tilde{\vec{Z}}_j = \lambda_j \vec{Z}_j\) gives us
\begin{equation}
    p_{MM}^* = \min_{\vec{Z}_j}  \sum_{i=1}^n \mathcal{L}\left(\sum_{j=1}^m \left((\vecop(\vec{I}_d)^\top \otimes \vec{I}_c)(\vec{I}_d \otimes \vec{K}_{d, c}) \otimes \vec{I}_{s} \right)\vecop\left(\vec{D}_{j}^{(i)}(\data_i^\top \otimes \vec{I}_{s})\vec{Z}_j \right), \vecop(\labelmat_i)\right) + \beta \sum_{j=1}^m \|\vec{Z}_j\|_{*}.
\end{equation}
One can actually greatly simplify this result, and can re-write this as
\begin{align}
    p_{MM}^* &= \min_{\vec{Z}_j \in \mathbb{R}^{s^2\times dc} }  \sum_{i=1}^n \mathcal{L}\left(\begin{bmatrix} f_1(\vec{X}_i) & \cdots &  f_c(\vec{X}_i) \end{bmatrix} , \labelmat_i\right) + \beta \sum_{j=1}^m \|\vec{Z}_j\|_{*}\\
    f_p(\vec{X}_i) &:= \sum_{j=1}^m \begin{bmatrix}  \vec{D}_{j}^{(i, 1)}\vec{Z}_j^{(p, 1)} \cdots   \vec{D}_{j}^{(i, d)}\vec{Z}_j^{(p, d)}\end{bmatrix}  \vecop(\data_i).
\end{align}
as desired.
\end{proof}

\subsubsection{FNO}
\begin{theorem}
For the Gated ReLU activation FNO training problem \eqref{eq:fno_primal}, we define 
\begin{align*}
    \vec{X} &:= \begin{bmatrix} \mathrm{circ}(\vec{X}_1)\\ \cdots \\ \mathrm{circ}(\vec{X}_n) \end{bmatrix} \\
    \{\vec{D}_j\}_{j=1}^m &:= \{\mathrm{diag}\left(\mathbbm{1}\{\vec{X}\vec{h}_j \geq 0\}\right)\},
\end{align*}
for fixed gates \(\{\vec{h}_j \in \mathbb{R}^{sd}\}_{j=1}^m\). Then, the standard non-convex training objective is equivalent to a convex optimization problem, given by 
\begin{align}\label{eq:fno_gated_relu_convex}
    p_{FN}^* &= \min_{\vec{Z}_j \in \mathbb{R}^{sd \times c} }  \sum_{i=1}^n \mathcal{L}\left(\sum_{j=1}^m \vec{D}_j^{(i)}\mathrm{circ}(\vec{X}_i) \vec{Z}_j, \labelmat_i\right) +\beta \sum_{j=1}^m \|\vec{Z}_j\|_{*}. 
\end{align}
\end{theorem}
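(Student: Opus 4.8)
The plan is to follow the same five-step recipe used for Theorem~\ref{theo:relu_fno}, specialized to the fact that the gates $\{\vec{D}_j\}_{j=1}^m$ are now \emph{fixed} rather than enumerated over all hyperplane arrangements. First I would invoke Lemma~\ref{lem:fno} to replace the FNO block by its circulant form, so that head $j$ contributes $\vec{D}_j^{(i)}\mathrm{circ}(\vec{X}_i)\weight_{1j}\weight_{2j}^\top$, where $\vec{D}_j^{(i)} = \mathrm{diag}(\mathbbm{1}\{\mathrm{circ}(\vec{X}_i)\vec{h}_j \geq 0\})$ is a fixed diagonal gate. Since $g(\vec{X}_i;\weight_{1j}) := \vec{D}_j^{(i)}\mathrm{circ}(\vec{X}_i)\weight_{1j}$ is positively homogeneous in $\weight_{1j}$, Lemma~\ref{lem:rescaling} applies verbatim, and Lemma~\ref{lem:duality} (applied per head, exactly as in the Gated ReLU self-attention and MLP-Mixer proofs) yields the dual problem $\max_{\dualmat_i}-\sum_i\mathcal{L}^*(\dualmat_i,\labelmat_i)$ subject to the $m$ separate constraints $\max_{\|\weight_1\|_2\leq 1}\|\sum_{i=1}^n\dualmat_i^\top\vec{D}_j^{(i)}\mathrm{circ}(\vec{X}_i)\weight_1\|_2\leq\beta$, one for each $j\in[m]$.

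The second step is to rewrite each vector-norm constraint as a nuclear-norm constraint. Introducing a unit vector $\vec{g}$ via the dual norm and lifting to $\vec{Z} = \weight_1\vec{g}^\top$, the bilinear maximum over $\|\weight_1\|_2\leq 1$ and $\|\vec{g}\|_2\leq 1$ becomes $\max_{\|\vec{Z}\|_*\leq 1}\mathrm{trace}(\sum_i\dualmat_i^\top\vec{D}_j^{(i)}\mathrm{circ}(\vec{X}_i)\vec{Z})\leq\beta$. The key difference from the ReLU case of Theorem~\ref{theo:relu_fno} is that, because the gates are fixed, there is \emph{no} cone constraint $\vec{K}_j\weight_1\geq 0$ on the left factor; the relevant set is therefore the ordinary unit nuclear-norm ball rather than the constrained set $\mathcal{C}_j$, so the resulting regularizer will be the standard nuclear norm $\|\vec{Z}_j\|_*$ instead of $\|\vec{Z}_j\|_{*,\mathrm{K}_j}$.

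I would then form the Lagrangian with one multiplier $\lambda_j\geq 0$ per constraint, use Sion's minimax theorem to exchange the outer maximum over $\dualmat_i$ with the minima over $\lambda$ and $\{\vec{Z}_j\}$, and maximize over $\dualmat_i$; by Fenchel biconjugacy ($\mathcal{L}^{**}=\mathcal{L}$ for closed convex $\mathcal{L}$) this converts $-\mathcal{L}^*$ back into $\mathcal{L}$ evaluated at $\sum_j\lambda_j\vec{D}_j^{(i)}\mathrm{circ}(\vec{X}_i)\vec{Z}_j$, the sign being absorbed into $\dualmat_i$. Finally, rescaling $\tilde{\vec{Z}}_j=\lambda_j\vec{Z}_j$ folds each multiplier into its block and collapses the constraint $\|\vec{Z}_j\|_*\leq 1$ together with the term $\beta\lambda_j$ into the penalty $\beta\|\tilde{\vec{Z}}_j\|_*$, delivering \eqref{eq:fno_gated_relu_convex}. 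I would also note that no expansion condition $m\geq m^*$ is needed here, unlike the ReLU variants, because the $m$ fixed gates are in one-to-one correspondence with the $m$ convex blocks.

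I expect the only genuine obstacle to be justifying the minimax exchange, i.e.\ strong duality, in this finitely-constrained fixed-gate setting; this follows from the convexity of $\mathcal{L}$, linearity in the lifted variable $\vec{Z}$, and Slater's condition guaranteed by $\beta>0$, exactly as in Lemma~\ref{lem:duality}. Everything else is a direct specialization of the Theorem~\ref{theo:relu_fno} argument with the cone constraints removed, so the bulk of the work is bookkeeping rather than new ideas.
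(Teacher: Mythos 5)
Your proposal follows essentially the same route as the paper's proof: apply Lemma \ref{lem:fno} to reduce to the circulant form, then Lemmas \ref{lem:rescaling} and \ref{lem:duality} to obtain the dual with one constraint per fixed gate, lift each constraint to a nuclear-norm ball via the dual-norm trick (correctly noting that the fixed gates remove the cone constraint, so the regularizer is the plain nuclear norm rather than $\|\cdot\|_{*,\mathrm{K}_j}$), and finish with the Lagrangian, Sion's minimax exchange, maximization over $\dualmat_i$, and the rescaling $\tilde{\vec{Z}}_j=\lambda_j\vec{Z}_j$. This matches the paper's argument step for step, so no further comparison is needed.
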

\begin{proof}
We now apply Lemmas \ref{lem:rescaling} and \ref{lem:duality} with the Gated ReLU activation function to obtain 
\begin{align}
    p_{FN}^* &= \max_{\dualmat_i} -\sum_{i=1}^n \mathcal{L}^*(\dualmat_i, \labelmat_i) \nonumber \\
    \mathrm{s.t.}~&\max_{\substack{\|\weight_1\|_2 \leq 1 \\ j \in [m] }} \|\sum_{i=1}^n \dualmat_i^\top \vec{D}_j^{(i)} \mathrm{circ}(\data_i) \weight_1\|_2 \leq \beta.
\end{align}
Using the concept of dual norm, this is equivalent to 
\begin{align}
    p_{FN}^* &= \max_{\dualmat_i} -\sum_{i=1}^n \mathcal{L}^*(\dualmat_i, \labelmat_i) \nonumber \\
    \mathrm{s.t.}~&\max_{\substack{\|\vec{g}\|_2 \leq 1 \\ \|\weight_1\|_2 \leq 1 \\ j \in [m]}} \vec{g}^\top \sum_{i=1}^n \dualmat_i^\top \vec{D}_j^{(i)} \mathrm{circ}(\data_i) \weight_1 \leq \beta
\end{align}
 Then, we have 
\begin{align}
    p_{FN}^* &= \max_{\dualmat_i} -\sum_{i=1}^n \mathcal{L}^*(\dualmat_i, \labelmat_i) \nonumber \\
    \mathrm{s.t.}~&\max_{\substack{ j \in [m] \\ \|\vec{Z}\|_* \leq 1}} \mathrm{trace}\left(\sum_{i=1}^n \dualmat_i^\top \vec{D}_j^{(i)} \mathrm{circ}(\data_i) \vec{Z}\right) \leq \beta
\end{align}
We form the Lagrangian as 
\begin{align}
    p_{FN}^* &= \max_{\dualmat_i}\min_{\lambda \geq 0}\min_{\|\vec{Z}_j\|_* \leq 1} -\sum_{i=1}^n \mathcal{L}^*(\dualmat_i, \labelmat_i) + \sum_{j=1}^m \lambda_j(\beta - \mathrm{trace}(\vec{Z}_j^\top \sum_{i=1}^n \vec{D}_j^{(i)} \mathrm{circ}(\data_i)^\top \dualmat_i )).
\end{align}
We switch the order of the maximum and minimum using Sion's minimax theorem and maximize over \(\dualmat_i\)
\begin{align}
    p_{FN}^* &= \min_{\lambda_j \geq 0}\min_{\|\vec{Z}_j\|_* \leq 1} \sum_{i=1}^n \mathcal{L}(\sum_{j=1}^m \vec{D}_j^{(i)} \mathrm{circ}(\data_i)\vec{Z}_j, \labelmat_i) + \beta \sum_{j=1}^m \lambda_j.
\end{align}
Lastly, we rescale \(\tilde{\vec{Z}}_j = \lambda_j \vec{Z}_j\) to obtain 
\begin{align}
    p_{FN}^* &= \min_{\vec{Z}_j \in \mathbb{R}^{sd \times c} }  \sum_{i=1}^n \mathcal{L}\left(\sum_{j=1}^m \vec{D}_j^{(i)} \mathrm{circ}(\vec{X}_i) \vec{Z}_j, \labelmat_i\right) + \beta \sum_{j=1}^m \|\vec{Z}_j\|_{*}. 
\end{align}
as desired.
\end{proof}

\subsubsection{B-FNO}
\begin{theorem}
For the Gated ReLU activation B-FNO training problem \eqref{eq:bfno_primal},  we define 
\begin{align*}
    \vec{X}_b &:= \begin{bmatrix} \mathrm{circ}(\vec{X}_1^{(b)})\\ \cdots \\ \mathrm{circ}(\vec{X}_n^{(b)}) \end{bmatrix} \\
    \{\vec{D}_{b,j}\}_{j=1}^{m} &:= \{\mathrm{diag}\left(\mathbbm{1}\{\vec{X}_b\vec{h}_{b,j} \geq 0\}\right)\},
\end{align*}
for fixed gates \(\{\vec{h}_{b,j} \in \mathbb{R}^{sd/B}\}_{j=1}^m \). Then, the standard non-convex training objective is equivalent to a convex optimization problem, given by 
\begin{align}\label{eq:bfno_gated_relu_convex}
    p_{BFN}^* &= \min_{\vec{Z}_{b, j} }  \sum_{i=1}^n \mathcal{L}\left(\begin{bmatrix} f^{(1)}(\vec{X}_i) & \cdots &  f^{(B)}(\vec{X}_i)  \end{bmatrix}, \labelmat_i\right) + \beta \sum_{b=1}^B \sum_{j=1}^{m} \|\vec{Z}_{j, b}\|_{*},
\end{align}
where
\begin{align*}
    f^{(b)} &:= \sum_{j=1}^{m} \vec{D}_{b,j}\mathrm{circ}(\vec{X}_i^{(b)})\vec{Z}_{b,j}
\end{align*}
\end{theorem}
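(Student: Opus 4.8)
The plan is to reuse the five-step template from the appendix, obtaining the gated B-FNO result as a direct simplification of the ReLU B-FNO argument (Theorem \ref{theo:relu_bfno}) in exactly the same way that the gated FNO theorem simplifies Theorem \ref{theo:relu_fno}. First I would apply Lemma \ref{lem:bfno} to write the primal \eqref{eq:bfno_primal} in its block-separated circulant form, where the $b$-th output equals $\sum_j \sigma(\mathrm{circ}(\data_i^{(b)})\weight_{1bj})\weight_{2bj}^\top$. Applying Lemmas \ref{lem:rescaling} and \ref{lem:duality} then produces a dual maximization over $\{\dualmat_i\}$ subject to a norm constraint, and because both the circulant features and the second-layer weights respect the block structure, this constraint decouples into $B$ independent conditions, one per block $b \in [B]$, of the form $\max_{\|\weight_{1b}\|_2 \le 1}\|\sum_i {\dualmat_i^{(b)}}^\top g(\mathrm{circ}(\data_i^{(b)})\weight_{1b})\|_2 \le \beta$.

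Second, since we are in the \emph{gated} regime, each $\vec{D}_{b,j}$ is a prescribed mask rather than an arrangement that must be enumerated, so the gated activation $g(\mathrm{circ}(\data_i^{(b)})\weight_{1b}) = \vec{D}_{b,j}^{(i)}\mathrm{circ}(\data_i^{(b)})\weight_{1b}$ is linear in $\weight_{1b}$ for each fixed $j \in [m]$. I would pass to the dual norm, introducing a unit vector $\vec{g}$, and recognize the resulting bilinear form over rank-one matrices $\vec{u}\vec{g}^\top$. The crucial simplification relative to Theorem \ref{theo:relu_bfno} is that there is \emph{no} ReLU cone constraint $\vec{K}_{b,j}\vec{u} \ge 0$ to enforce, so the feasible set collapses from the constrained set $\mathcal{C}_{b,j}$ to the ordinary nuclear-norm unit ball $\{\vec{Z}:\|\vec{Z}\|_* \le 1\}$; this is precisely why the final regularizer is the standard nuclear norm rather than $\|\cdot\|_{*,\mathrm{K}}$.

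Third and fourth, I would form the Lagrangian with multipliers $\lambda_{b,j} \ge 0$, swap the outer maximum over $\{\dualmat_i\}$ with the inner minima via Sion's minimax theorem, and maximize over $\{\dualmat_i\}$ to recover the primal loss $\mathcal{L}$ through the Fenchel biconjugate. A final rescaling $\tilde{\vec{Z}}_{b,j} = \lambda_{b,j}\vec{Z}_{b,j}$ absorbs the multipliers into the variables, and collecting the $B$ block-concatenated predictions $f^{(b)}(\data_i) = \sum_j \vec{D}_{b,j}\mathrm{circ}(\data_i^{(b)})\vec{Z}_{b,j}$ yields the target program \eqref{eq:bfno_gated_relu_convex}.

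The main obstacle is bookkeeping rather than any new idea: I must verify that the block-diagonal structure carried by both $\vec{L}$ (through $\vec{V}^{(j)}$) and $\weightmat_2$ genuinely separates the dual constraint into $B$ decoupled pieces, so that each block contributes its own nuclear-norm term, and confirm that the per-block constraint dimension $c/B$ against the weight dimension $sd/B$ lets Slater's condition in Lemma \ref{lem:duality} go through for enough neurons. Because the gates are fixed, none of the work required in the ReLU case to relax the affine cone into its convex hull is needed here, which is exactly what makes this argument strictly easier than Theorem \ref{theo:relu_bfno}.
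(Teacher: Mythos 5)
Your proposal follows the paper's proof essentially verbatim: apply Lemma \ref{lem:bfno} to reach the block-separated circulant form, invoke Lemmas \ref{lem:rescaling} and \ref{lem:duality} to get $B$ decoupled dual constraints indexed by the fixed gates $j\in[m]$, pass to the dual norm with the feasible set collapsing to the plain nuclear-norm ball (no cone constraint, since the masks are prescribed), then form the Lagrangian, apply Sion's minimax theorem, maximize over $\dualmat_i$, and rescale $\tilde{\vec{Z}}_{b,j}=\lambda_{b,j}\vec{Z}_{b,j}$. The key observation you emphasize --- that fixing the gates removes the affine cone constraint and hence replaces $\|\cdot\|_{*,\mathrm{K}_{b,j}}$ with the standard nuclear norm --- is exactly the point of the paper's argument, so your route is the same and correct.
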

\begin{proof}
We now apply Lemmas \ref{lem:rescaling} and \ref{lem:duality} with the Gated ReLU activation function to obtain 
\begin{align}
    p_{BFN}^* &= \max_{\dualmat_i} -\sum_{i=1}^n \mathcal{L}^*(\dualmat_i, \labelmat_i) \nonumber \\
    \mathrm{s.t.}~&\max_{\substack{\|\weight_{1b}\|_2 \leq 1 \\ b \in [B] \\ j \in [m]}} \|\sum_{i=1}^n {\dualmat_i^{(b)}}^\top  \vec{D}_{b, j}^{(i)} \mathrm{circ}(\data_i^{(b)}) \weight_{1b}\|_2 \leq \beta
\end{align}
Using the concept of dual norm, this is equivalent to 
\begin{align}
    p_{BFN}^* &= \max_{\dualmat_i} -\sum_{i=1}^n \mathcal{L}^*(\dualmat_i, \labelmat_i) \nonumber \\
    \mathrm{s.t.}~&\max_{\substack{\|\vec{g}_b\|_2 \leq 1 \\ \|\weight_{1b}\|_2 \leq 1 \\ j \in [m] \\ \vec{K}_{b, j} \weight_{1b} \geq 0}} \vec{g}^\top \sum_{i=1}^n {\dualmat_i^{(b)}}^\top \vec{D}_{b, j}^{(i)} \mathrm{circ}(\data_i^{(b)}) \weight_{1b} \leq \beta~\forall b \in [B].
\end{align} Then, we have 
\begin{align}
    p_{BFN}^* &= \max_{\dualmat_i} -\sum_{i=1}^n \mathcal{L}^*(\dualmat_i, \labelmat_i) \nonumber \\
    \mathrm{s.t.}~&\max_{\substack{ j \in [m] \\ \|\vec{Z}\|_* \leq 1}} \mathrm{trace}\left(\sum_{i=1}^n {\dualmat_i^{(b)}}^\top \vec{D}_{b, j}^{(i)} \mathrm{circ}(\data_i^{(b)}) \vec{Z}\right) \leq \beta~\forall b \in [B].
\end{align}
We form the Lagrangian as 
\begin{align}
    p_{BFN}^* &= \max_{\dualmat_i}\min_{\lambda \geq 0}\min_{\|\vec{Z}_{b,j}\|_* \leq 1} -\sum_{i=1}^n \mathcal{L}^*(\dualmat_i, \labelmat_i) + \sum_{b=1}^B \sum_{j=1}^{m} \lambda_{b,j}(\beta - \mathrm{trace}(\vec{Z}_{b,j}^\top \sum_{i=1}^n \vec{D}_{b,j}^{(i)} \mathrm{circ}(\data_i^{(b)})^\top \dualmat_i^{(b)} )).
\end{align}
We switch the order of the maximum and minimum using Sion's minimax theorem and maximize over \(\dualmat_i\)
\begin{align}
    p_{BFN}^* &= \min_{\lambda_j \geq 0}\min_{\|\vec{Z}_{b,j}\|_* \leq 1} \sum_{i=1}^n \mathcal{L}(\begin{bmatrix} \sum_{j=1}^{m}\lambda_{1, j} \vec{D}_{1,j}\mathrm{circ}(\vec{X}_i^{(1)})\vec{Z}_{1,j} & \cdots &  \sum_{j=1}^{m} \lambda_{B, j}\vec{D}_{B,j}\mathrm{circ}(\vec{X}_i^{(B)})\vec{Z}_{B,j} \end{bmatrix}, \labelmat_i) + \beta \sum_{b=1}^B \sum_{j=1}^{m}\lambda_{b,j}.
\end{align}
Lastly, we rescale \(\tilde{\vec{Z}}_{b,j} = \lambda_{b,j} \vec{Z}_{b,j}\) to obtain
\begin{align}
    p_{BFN}^* &= \min_{\vec{Z}_{b, j} }  \sum_{i=1}^n \mathcal{L}\left(\begin{bmatrix} \sum_{j=1}^{m} \vec{D}_{1,j}\mathrm{circ}(\vec{X}_i^{(1)})\vec{Z}_{1,j} & \cdots &  \sum_{j=1}^{m} \vec{D}_{B,j}\mathrm{circ}(\vec{X}_i^{(B)})\vec{Z}_{B,j} \end{bmatrix}, \labelmat_i\right) + \beta \sum_{b=1}^B \sum_{j=1}^{m} \|\vec{Z}_{j, b}\|_{*},
\end{align}
as desired.

\end{proof} 

\subsection{Additional  Attention Alternatives: PoolFormer and FNet}\label{sec:extensions}

In \cite{yu2021metaformer}, the authors propose a simple alternative to the standard MLP-Mixer architecture. In particular, the forward function is given by 
\begin{equation}
    f_{PF}(\data_i) = \sigma(\vec{P}\vec{X}_i\weightmat_1)\weightmat_2
\end{equation}
where \(\vec{P} \in \mathbb{R}^{s \times s}\) is a local pooling function. In this way, the PoolFormer architecture still mixes across different tokens, but in a non-learnable, deterministic fashion. \\\\
In \cite{lee2021fnet}, the authors propose FNet, another alternative which resembles PoolFormer architecture. In particular, a 2D FFT is applied to the input \(\data_i\) before being passed through an MLP
\begin{equation}
    f_{FNET}(\data_i) = \sigma(\vec{F}_{s}\vec{X}_i\vec{F}_d^\top \weightmat_1)\weightmat_2
\end{equation}
One can use similar convex duality results as in the main body of this paper to generate convex dual forms for this architecture for linear, ReLU, and gated ReLU activation PoolFormers and FNets. To keep these results general, we will be analyzing networks of the form 
\begin{equation}\label{eq:permuteformer_primal}
    p_{PF}^* := \min_{\weight_{1j}, \weight_{2j}} \sum_{i=1}^n \mathcal{L}\left(\sum_{j=1}^m \sigma(h(\data_i)\weight_{1j})\weight_{2j}^\top, \labelmat_i\right) + \frac{\beta}{2}\sum_{j=1}^m \|\weight_{1j}\|_2^2 + \|\weight_{2j}\|_2^2 
\end{equation}
for any generic function \(h: \mathbb{R}^{s \times d} \to \mathbb{R}^{s \times d} \), which encapsulates both methods and more. 
\begin{theorem}
For the linear activation network training problem \eqref{eq:permuteformer_primal}, for \(m \geq m^*\) where \(m^* \leq \min\{d, c\}\), the standard non-convex training objective is equivalent to a convex optimization problem, given by 
\begin{align}\label{eq:permuteformer_linear_convex}
    p_{PF}^* = \min_{\vec{Z} \in \mathbb{R}^{d \times c} }  &\sum_{i=1}^n \mathcal{L}\left(h(\data_i)\vec{Z}, \labelmat_i\right) + \beta  \|\vec{Z}\|_*. 
\end{align}
\end{theorem}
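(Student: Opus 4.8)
The plan is to follow exactly the template used in the proof of Theorem~\ref{theo:linear_fno}, since with the linear (identity) activation the network $\sum_{j=1}^m (h(\data_i)\weight_{1j})\weight_{2j}^\top$ has the same bilinear structure as the linear FNO, with the circulant data matrix $\mathrm{circ}(\data_i)$ simply replaced by the generic feature map $h(\data_i) \in \mathbb{R}^{s \times d}$. First I would observe that $g(\data_i;\weight_{1j}) := h(\data_i)\weight_{1j}$ is positively homogeneous of degree one in $\weight_{1j}$, so Lemma~\ref{lem:rescaling} applies and converts the weight-decay penalty into the constrained form with $\|\weight_{1j}\|_2 \leq 1$ and a $\sum_j \|\weight_{2j}\|_2$ penalty. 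Applying Lemma~\ref{lem:duality} then yields the dual problem $\max_{\dualmat_i} -\sum_i \mathcal{L}^*(\dualmat_i,\labelmat_i)$ subject to $\max_{\|\weight_1\|_2 \leq 1}\|\sum_{i=1}^n \dualmat_i^\top h(\data_i)\weight_1\|_2 \leq \beta$. Here the dual constraint lives in $\mathbb{R}^c$ while $\weight_1 \in \mathbb{R}^d$, so Slater's condition together with the dimension-counting argument of \cite{shapiro2009semi} gives the threshold $m^* \leq \min\{d,c\}$.

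Next I would recognize the inner maximization as an operator norm, $\max_{\|\weight_1\|_2 \leq 1}\|M\weight_1\|_2 = \|M\|_2$ with $M := \sum_{i=1}^n \dualmat_i^\top h(\data_i)$, and then use the spectral/nuclear duality $\|M\|_2 = \max_{\|\vec{Z}\|_* \leq 1}\mathrm{trace}(\vec{Z}^\top M)$ to rewrite the constraint. Forming the Lagrangian with a scalar multiplier $\lambda \geq 0$ and invoking Sion's minimax theorem (justified because the objective is concave in $\dualmat_i$ and convex in $(\lambda,\vec{Z})$, while $\beta>0$ supplies the constraint qualification) lets me exchange the order of the maximization over $\dualmat_i$ and the minimizations. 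Maximizing over $\dualmat_i$ then restores the primal loss from its conjugate via Fenchel biconjugacy, giving $\min_{\lambda \geq 0}\min_{\|\vec{Z}\|_* \leq 1}\sum_i \mathcal{L}(h(\data_i)\vec{Z},\labelmat_i) + \beta\lambda$. A final rescaling $\tilde{\vec{Z}} = \lambda\vec{Z}$ absorbs the multiplier into the nuclear norm and produces exactly \eqref{eq:permuteformer_linear_convex}.

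Because $h$ enters only through the single linear factor $h(\data_i)$, none of the Kronecker-product and commutation-matrix manipulations that complicate the self-attention and MLP-Mixer proofs are needed here; the derivation is structurally identical to the linear FNO case and requires no post-hoc simplification step. The only place demanding genuine care---the main obstacle---is verifying the hypotheses for the minimax exchange: confirming that Lemma~\ref{lem:duality}'s strong-duality conclusion holds for an arbitrary $h$ (i.e. that the constraint set has the claimed dimension, which is what yields the bound on $m^*$) and that Slater's condition is satisfied so Sion's theorem applies with no duality gap. Once these are in place the remaining steps are mechanical, and (as in Theorem~\ref{theo:linear_fno}) one could additionally exhibit an explicit one-to-one correspondence between the solutions of \eqref{eq:permuteformer_primal} and \eqref{eq:permuteformer_linear_convex} by taking the SVD of the optimal $\vec{Z}^*$ and reading off the factorized weights $\weight_{1j},\weight_{2j}$, noting that for linear activation $\vec{Z} = \sum_j \weight_{1j}\weight_{2j}^\top$.
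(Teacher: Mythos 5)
Your proposal is correct and follows essentially the same route as the paper's proof: rescaling via Lemma~\ref{lem:rescaling}, dualizing via Lemma~\ref{lem:duality}, rewriting the spectral-norm constraint through nuclear-norm duality, forming the Lagrangian, exchanging min and max by Sion's theorem, and rescaling $\tilde{\vec{Z}} = \lambda\vec{Z}$. Your added remarks on the dimension count for $m^*$ and the SVD-based correspondence between primal and convex solutions are consistent with how the paper handles the analogous steps in the other linear-activation theorems.
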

\begin{proof}
We now apply Lemmas \ref{lem:rescaling} and \ref{lem:duality} to obtain 
\begin{align}
    p_{PF}^* &= \max_{\dualmat_i} -\sum_{i=1}^n \mathcal{L}^*(\dualmat_i, \labelmat_i) \nonumber \\
    \mathrm{s.t.}~&\max_{\|\weight_1\|_2 \leq 1} \|\sum_{i=1}^n \dualmat_i^\top h(\data_i) \weight_1\|_2 \leq \beta
\end{align}
This is equivalent to 
\begin{align}
    p_{PF}^* &= \max_{\dualmat_i} -\sum_{i=1}^n \mathcal{L}^*(\dualmat_i, \labelmat_i) \nonumber \\
    \mathrm{s.t.}~& \|\sum_{i=1}^n \dualmat_i^\top h(\data_i)\|_2 \leq \beta
\end{align}
We form the Lagrangian as 
\begin{align}
    p_{PF}^* &= \max_{\dualmat_i}\min_{\lambda \geq 0}\min_{\|\vec{Z}\|_* \leq 1} -\sum_{i=1}^n \mathcal{L}^*(\dualmat_i, \labelmat_i) + \lambda(\beta - \mathrm{trace}(\vec{Z}^\top \sum_{i=1}^n h(\data_i)^\top \dualmat_i )).
\end{align}
We switch the order of the maximum and minimum using Sion's minimax theorem and maximize over \(\dualmat_i\)
\begin{align}
    p_{PF}^* &= \min_{\lambda \geq 0}\min_{\|\vec{Z}\|_* \leq 1} \sum_{i=1}^n \mathcal{L}(h(\data_i)\vec{Z}, \labelmat_i) + \beta \lambda.
\end{align}
Lastly, we rescale \(\tilde{\vec{Z}} = \lambda \vec{Z}\) to obtain 
\begin{align}
    p_{PF}^* &= \min_{\vec{Z} \in \mathbb{R}^{d \times c} }  \sum_{i=1}^n \mathcal{L}\left(h(\vec{X}_i) \vec{Z}, \labelmat_i\right) + \beta \|\vec{Z}\|_{*}. 
\end{align}
as desired.
\end{proof}

\begin{theorem}
For the ReLU activation training problem \eqref{eq:permuteformer_primal},  we define
\begin{align*}
    \vec{X} &:= \begin{bmatrix} h(\vec{X}_1)\\ \cdots \\  h(\vec{X}_n )\end{bmatrix} \\
    \{\vec{D}_j\}_{j=1}^P &:= \{\mathrm{diag}\left(\mathbbm{1}\{\vec{X}\vec{u}_j \geq 0\}\right):\;\vec{u}_j \in \mathbb{R}^{d}\},
\end{align*}
where \(P \leq 2r\left(\frac{e(n-1)}{r}\right)^r\) and \( r := \mathrm{rank}(\vec{X})\). Then, for \(m \geq m^*\) where \(m^* \leq n\min\{d, c\}\), the standard non-convex training objective is equivalent to a convex optimization problem, given by 
\begin{align}\label{eq:permuteformer_relu_convex}
    p_{PF}^* &= \min_{\vec{Z}_j \in \mathbb{R}^{d \times c} }  \sum_{i=1}^n \mathcal{L}\left(\sum_{j=1}^P \vec{D}_j^{(i)}h(\vec{X}_i) \vec{Z}_j, \labelmat_i\right) + \beta \sum_{j=1}^P \|\vec{Z}_j\|_{*, \mathrm{K}_j},
\end{align}
where
\begin{align*}
    \vec{K}_j &:= (2\vec{D}_j - \vec{I}_{ns})\vec{X}.
\end{align*}
\end{theorem}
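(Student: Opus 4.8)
The plan is to mirror the proof of Theorem~\ref{theo:relu_fno} essentially verbatim, since the primal \eqref{eq:permuteformer_primal} has exactly the same two-layer ReLU structure as the ReLU FNO objective, with the data-dependent feature map $\mathrm{circ}(\data_i) \in \mathbb{R}^{s \times sd}$ simply replaced by the generic, \emph{non-learnable} map $h(\data_i) \in \mathbb{R}^{s \times d}$. Because $h$ does not depend on the optimization variables, every algebraic manipulation used there carries over unchanged; in effect the whole problem is just the convex ReLU MLP of Section~\ref{sec:cvx_background} applied to the stacked transformed data $\vec{X} = [h(\vec{X}_1); \cdots; h(\vec{X}_n)]$, which is precisely why the arrangements $\{\vec{D}_j\}_{j=1}^P$ and the constrained norms $\|\cdot\|_{*,\mathrm{K}_j}$ take the stated form.

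First I would apply Lemma~\ref{lem:rescaling} to absorb the weight decay into a Frobenius-norm penalty on the second-layer weights subject to a unit-norm constraint on the first layer, and then Lemma~\ref{lem:duality} with $g(\data_i; \weight_{1}) = (h(\data_i)\weight_1)_+$ to pass to the dual. This yields the single constraint $\max_{\|\weight_1\|_2 \le 1}\|\sum_{i=1}^n \dualmat_i^\top (h(\data_i)\weight_1)_+\|_2 \le \beta$. Since the dual variable effectively lives in $\mathbb{R}^c$ and $\weight_1 \in \mathbb{R}^d$, the strong-duality requirement of Lemma~\ref{lem:duality} gives $m^* \le n\min\{d,c\}$ by the same counting argument as in the ReLU FNO case.

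Next I would enumerate over the finite set of hyperplane arrangements $\{\vec{D}_j\}_{j=1}^P$, replacing the nonsmooth ReLU by the linear action of the block $\vec{D}_j^{(i)}$ on $h(\data_i)$ together with the cone constraint $\vec{K}_j\weight_1 \ge 0$, and then invoke the dual-norm characterization to introduce an auxiliary vector $\vec{g}$. Defining the cone-constrained rank-one sets $\mathcal{C}_j = \{\vec{Z}=\vec{u}\vec{g}^\top : \vec{K}_j\vec{u}\ge 0,\ \|\vec{Z}\|_* \le 1\}$ turns the constraint into a trace inequality. Forming the Lagrangian, applying Sion's minimax theorem to swap the outer max and min, and maximizing over $\dualmat_i$ recovers the fitting term $\sum_{j=1}^P \vec{D}_j^{(i)} h(\data_i)\vec{Z}_j$; a final rescaling $\tilde{\vec{Z}}_j = \lambda_j \vec{Z}_j$ converts the indicator of $\mathcal{C}_j$ into the constrained nuclear norm $\|\vec{Z}_j\|_{*,\mathrm{K}_j}$ with $\vec{K}_j = (2\vec{D}_j - \vec{I}_{ns})\vec{X}$, delivering \eqref{eq:permuteformer_relu_convex}.

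I do not expect a genuine obstacle here, as the argument is entirely mechanical given the earlier lemmas; the only points requiring care are verifying that the generality of $h$ does not break the positive-homogeneity needed by Lemma~\ref{lem:rescaling} (it does not, since $(h(\data_i)\,\alpha\weight_1)_+ = \alpha\,(h(\data_i)\weight_1)_+$ for $\alpha>0$ regardless of what $h$ is), and that the dimension bookkeeping for $m^*$ and for $\vec{I}_{ns}$ inside $\vec{K}_j$ correctly reflects the $s \times d$ output shape of $h$.
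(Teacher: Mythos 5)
Your proposal follows the paper's proof essentially verbatim: the same application of Lemma~\ref{lem:rescaling} and Lemma~\ref{lem:duality}, the same enumeration of hyperplane arrangements with the cone constraint, the dual-norm introduction of $\vec{g}$ and the sets $\mathcal{C}_j$, the Lagrangian with Sion's minimax theorem, and the final rescaling $\tilde{\vec{Z}}_j = \lambda_j \vec{Z}_j$. Your added remarks on positive homogeneity of $\weight_1 \mapsto (h(\data_i)\weight_1)_+$ and on the dimension counting for $m^*$ are correct and consistent with how the paper handles the analogous FNO case.
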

\begin{proof}
We now apply Lemmas \ref{lem:rescaling} and \ref{lem:duality} with the ReLU activation function to obtain 
\begin{align}
    p_{PF}^* &= \max_{\dualmat_i} -\sum_{i=1}^n \mathcal{L}^*(\dualmat_i, \labelmat_i) \nonumber \\
    \mathrm{s.t.}~&\max_{\|\weight_1\|_2 \leq 1} \|\sum_{i=1}^n \dualmat_i^\top (h(\data_i) \weight_1)_+\|_2 \leq \beta
\end{align}
We introduce hyperplane arrangements \(\vec{D}_j\) and enumerate over all of them, yielding
\begin{align}
    p_{PF}^* &= \max_{\dualmat_i} -\sum_{i=1}^n \mathcal{L}^*(\dualmat_i, \labelmat_i) \nonumber \\
    \mathrm{s.t.}~&\max_{\substack{\|\weight_1\|_2 \leq 1 \\ j \in [P] \\ \vec{K}_j \weight_1 \geq 0}} \|\sum_{i=1}^n \dualmat_i^\top \vec{D}_j^{(i)} h(\data_i) \weight_1\|_2 \leq \beta.
\end{align}
Using the concept of dual norm, this is equivalent to 
\begin{align}
    p_{PF}^* &= \max_{\dualmat_i} -\sum_{i=1}^n \mathcal{L}^*(\dualmat_i, \labelmat_i) \nonumber \\
    \mathrm{s.t.}~&\max_{\substack{\|\vec{g}\|_2 \leq 1 \\ \|\weight_1\|_2 \leq 1 \\ j \in [P] \\ \vec{K}_j \weight_1 \geq 0}} \vec{g}^\top \sum_{i=1}^n \dualmat_i^\top \vec{D}_j^{(i)} h(\data_i) \weight_1 \leq \beta
\end{align}
We can also define sets \(\mathcal{C}_j := \{\vec{Z} = \vec{u}\vec{g}^\top \in \mathbb{R}^{d \times c}: \vec{K}_j\vec{u} \geq 0,\;\|\vec{Z}\|_* \leq 1\}\). Then, we have 
\begin{align}
    p_{PF}^* &= \max_{\dualmat_i} -\sum_{i=1}^n \mathcal{L}^*(\dualmat_i, \labelmat_i) \nonumber \\
    \mathrm{s.t.}~&\max_{\substack{ j \in [P] \\ \vec{Z} \in \mathcal{C}_j}} \mathrm{trace}\left(\sum_{i=1}^n \dualmat_i^\top \vec{D}_j^{(i)} h(\data_i) \vec{Z}\right) \leq \beta
\end{align}
We form the Lagrangian as 
\begin{align}
    p_{PF}^* &= \max_{\dualmat_i}\min_{\lambda \geq 0}\min_{\vec{Z}_j \in \mathcal{C}_j} -\sum_{i=1}^n \mathcal{L}^*(\dualmat_i, \labelmat_i) + \sum_{j=1}^P \lambda_j(\beta - \mathrm{trace}(\vec{Z}_j^\top \sum_{i=1}^n \vec{D}_j^{(i)} h(\data_i)^\top \dualmat_i )).
\end{align}
We switch the order of the maximum and minimum using Sion's minimax theorem and maximize over \(\dualmat_i\)
\begin{align}
    p_{PF}^* &= \min_{\lambda_j \geq 0}\min_{\vec{Z}_j \in \mathcal{C}_j} \sum_{i=1}^n \mathcal{L}(\sum_{j=1}^P \vec{D}_j^{(i)} h(\data_i)\vec{Z}_j, \labelmat_i) + \beta \sum_{j=1}^P \lambda_j.
\end{align}
Lastly, we rescale \(\tilde{\vec{Z}}_j = \lambda_j \vec{Z}_j\) to obtain 
\begin{align}
    p_{PF}^* &= \min_{\vec{Z}_j \in \mathbb{R}^{d \times c} }  \sum_{i=1}^n \mathcal{L}\left(\sum_{j=1}^P \vec{D}_j^{(i)} h(\vec{X}_i) \vec{Z}_j, \labelmat_i\right) + \beta \sum_{j=1}^P \|\vec{Z}_j\|_{*, \mathrm{K}_j}. 
\end{align}
as desired.
\end{proof}

\begin{theorem}
For the Gated ReLU activation training problem \eqref{eq:permuteformer_primal}, we define
\begin{align*}
    \vec{X} &:= \begin{bmatrix} h(\vec{X}_1)\\ \cdots \\  h(\vec{X}_n )\end{bmatrix} \\
    \{\vec{D}_j\}_{j=1}^P &:= \{\mathrm{diag}\left(\mathbbm{1}\{\vec{X}\vec{h}_j \geq 0\}\right)\},
\end{align*}
for fixed gates \(\{\vec{h}_j \in \mathbb{R}^d\}_{j=1}^m\). Then, the standard non-convex training objective is equivalent to a convex optimization problem, given by 
\begin{align}\label{eq:permuteformer_gated_relu_convex}
    p_{PF}^* &= \min_{\vec{Z}_j \in \mathbb{R}^{d \times c} }  \sum_{i=1}^n \mathcal{L}\left(\sum_{j=1}^m \vec{D}_j^{(i)}h(\vec{X}_i) \vec{Z}_j, \labelmat_i\right) + \beta \sum_{j=1}^m \|\vec{Z}_j\|_{*}. 
\end{align}
\end{theorem}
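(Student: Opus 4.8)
The plan is to follow the same five-step template used for the other gated-ReLU results (e.g. the gated-ReLU FNO result \eqref{eq:fno_gated_relu_convex}), specializing the generic data map to $h(\data_i)$. Since the gates $\{\vec{h}_j\}$ are fixed, the gated-ReLU activation renders each neuron's forward map $\sigma(h(\data_i)\weight_{1j}) = \vec{D}_j^{(i)} h(\data_i)\weight_{1j}$ linear, hence positively homogeneous in $\weight_{1j}$, so Lemma \ref{lem:rescaling} applies and converts the squared weight-decay into an $\ell_2$ penalty on $\weight_{2j}$ subject to $\|\weight_{1j}\|_2 \le 1$. Applying Lemma \ref{lem:duality} (with one constraint per fixed gate $j$) then produces the dual
\begin{align*}
    p_{PF}^* &= \max_{\dualmat_i} -\sum_{i=1}^n \mathcal{L}^*(\dualmat_i, \labelmat_i)\\
    \mathrm{s.t.}~&\max_{\|\weight_1\|_2 \leq 1}\Big\|\sum_{i=1}^n \dualmat_i^\top \vec{D}_j^{(i)} h(\data_i)\weight_1\Big\|_2 \leq \beta\quad \forall j\in[m],
\end{align*}
where $\mathcal{L}^*$ is the Fenchel conjugate of $\mathcal{L}$.

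Next I would rewrite each operator-norm constraint using the dual-norm identity, introducing a unit vector $\vec{g}$ and collapsing the bilinear form $\vec{g}^\top(\cdot)\weight_1$ over the rank-one set $\{\vec{Z} = \weight_1\vec{g}^\top : \|\weight_1\|_2\le 1,\ \|\vec{g}\|_2\le 1\}$, whose convex hull is exactly the nuclear-norm ball $\{\|\vec{Z}\|_* \le 1\}$. The crucial point — and what distinguishes this from the ReLU case — is that because the gates are prescribed rather than optimized, there is no sign constraint $\vec{K}_j\weight_1 \ge 0$ on the left factor, so no hyperplane-arrangement enumeration and no constrained nuclear norm arise; the constraint becomes simply $\max_{j,\,\|\vec{Z}\|_*\le 1}\mathrm{trace}\big(\sum_i \dualmat_i^\top \vec{D}_j^{(i)} h(\data_i)\vec{Z}\big) \le \beta$.

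I would then form the Lagrangian with a nonnegative multiplier $\lambda_j$ for each gate's constraint, invoke Sion's minimax theorem to exchange the outer maximization over $\dualmat_i$ with the minimizations over $\lambda_j$ and $\vec{Z}_j$, and finally maximize over $\dualmat_i$; by Fenchel--Moreau biconjugation this recovers the original loss $\mathcal{L}$ evaluated at $\sum_j \lambda_j \vec{D}_j^{(i)} h(\data_i)\vec{Z}_j$, with a residual penalty $\beta\sum_j \lambda_j$. A rescaling $\tilde{\vec{Z}}_j = \lambda_j\vec{Z}_j$ then absorbs each multiplier into its block, turning $\beta\sum_j\lambda_j$ together with $\|\vec{Z}_j\|_*\le 1$ into $\beta\sum_j\|\tilde{\vec{Z}}_j\|_*$ and yielding \eqref{eq:permuteformer_gated_relu_convex}.

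Because the gates are fixed, this argument is essentially routine, and the main point requiring care is bookkeeping rather than a genuine obstacle: one must ensure the per-gate constraints generate $m$ independent blocks $\vec{Z}_j$ (so the nuclear norms do not couple across $j$), and verify that the dimensions align so that Slater's condition and the strong-duality requirement $m \ge m^*$ of Lemma \ref{lem:duality} hold (here $m^* \le \min\{d,c\}$, matching the linear-activation count, since each fixed gate contributes an independent linear model).
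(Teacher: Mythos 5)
Your proposal follows the paper's proof essentially verbatim: the same rescaling via Lemma \ref{lem:rescaling}, dualization via Lemma \ref{lem:duality} with one operator-norm constraint per fixed gate, the dual-norm/rank-one relaxation to a nuclear-norm ball (correctly noting that fixed gates eliminate the cone constraint $\vec{K}_j\vec{u}\geq 0$), the Lagrangian with per-gate multipliers, Sion's minimax exchange, and the final rescaling $\tilde{\vec{Z}}_j = \lambda_j\vec{Z}_j$. The argument is correct and matches the paper's route; your added bookkeeping remark about $m^*$ is harmless but not needed, since in the gated case each fixed gate already indexes its own block.
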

\begin{proof}
We now apply Lemmas \ref{lem:rescaling} and \ref{lem:duality} with the Gated ReLU activation function to obtain 
\begin{align}
    p_{PF}^* &= \max_{\dualmat_i} -\sum_{i=1}^n \mathcal{L}^*(\dualmat_i, \labelmat_i) \nonumber \\
    \mathrm{s.t.}~&\max_{\substack{\|\weight_1\|_2 \leq 1 \\ j \in [m] }} \|\sum_{i=1}^n \dualmat_i^\top \vec{D}_j^{(i)} h(\data_i) \weight_1\|_2 \leq \beta.
\end{align}
Using the concept of dual norm, this is equivalent to 
\begin{align}
    p_{PF}^* &= \max_{\dualmat_i} -\sum_{i=1}^n \mathcal{L}^*(\dualmat_i, \labelmat_i) \nonumber \\
    \mathrm{s.t.}~&\max_{\substack{\|\vec{g}\|_2 \leq 1 \\ \|\weight_1\|_2 \leq 1 \\ j \in [m]}} \vec{g}^\top \sum_{i=1}^n \dualmat_i^\top \vec{D}_j^{(i)} h(\data_i) \weight_1 \leq \beta
\end{align}
 Then, we have 
\begin{align}
    p_{PF}^* &= \max_{\dualmat_i} -\sum_{i=1}^n \mathcal{L}^*(\dualmat_i, \labelmat_i) \nonumber \\
    \mathrm{s.t.}~&\max_{\substack{ j \in [m] \\ \|\vec{Z}\|_* \leq 1}} \mathrm{trace}\left(\sum_{i=1}^n \dualmat_i^\top \vec{D}_j^{(i)} h(\data_i) \vec{Z}\right) \leq \beta
\end{align}
We form the Lagrangian as 
\begin{align}
    p_{PF}^* &= \max_{\dualmat_i}\min_{\lambda \geq 0}\min_{\|\vec{Z}_j\|_* \leq 1} -\sum_{i=1}^n \mathcal{L}^*(\dualmat_i, \labelmat_i) + \sum_{j=1}^m \lambda_j(\beta - \mathrm{trace}(\vec{Z}_j^\top \sum_{i=1}^n \vec{D}_j^{(i)} h(\data_i)^\top \dualmat_i )).
\end{align}
We switch the order of the maximum and minimum using Sion's minimax theorem and maximize over \(\dualmat_i\)
\begin{align}
    p_{PF}^* &= \min_{\lambda_j \geq 0}\min_{\|\vec{Z}_j\|_* \leq 1} \sum_{i=1}^n \mathcal{L}(\sum_{j=1}^m \vec{D}_j^{(i)} h(\data_i)\vec{Z}_j, \labelmat_i) + \beta \sum_{j=1}^m \lambda_j.
\end{align}
Lastly, we rescale \(\tilde{\vec{Z}}_j = \lambda_j \vec{Z}_j\) to obtain 
\begin{align}
    p_{PF}^* &= \min_{\vec{Z}_j \in \mathbb{R}^{d \times c} }  \sum_{i=1}^n \mathcal{L}\left(\sum_{j=1}^m \vec{D}_j^{(i)} h(\vec{X}_i) \vec{Z}_j, \labelmat_i\right) + \beta \sum_{j=1}^m \|\vec{Z}_j\|_{*}. 
\end{align}
as desired.
\end{proof}

\end{document}